\def\eqref#1{equation~\ref{#1}}
\def\1{\bm{1}}
\DeclareMathAlphabet{\mathsfit}{\encodingdefault}{\sfdefault}{m}{sl}
\SetMathAlphabet{\mathsfit}{bold}{\encodingdefault}{\sfdefault}{bx}{n}
\def\gA{{\mathcal{A}}}
\def\gC{{\mathcal{C}}}
\def\gF{{\mathcal{F}}}
\def\gG{{\mathcal{G}}}
\def\gH{{\mathcal{H}}}
\def\gK{{\mathcal{K}}}
\def\gL{{\mathcal{L}}}
\def\gN{{\mathcal{N}}}
\def\gP{{\mathcal{P}}}
\def\gT{{\mathcal{T}}}
\def\gW{{\mathcal{W}}}
\def\sN{{\mathbb{N}}}
\newcommand{\E}{\mathbb{E}}
\newcommand{\R}{\mathbb{R}}
\DeclareMathOperator*{\argmax}{arg\,max}
\DeclareMathOperator*{\argmin}{arg\,min}
\newcommand{\bs}{\boldsymbol}
\newcommand{\T}{\gT}
\newcommand{\hmu}{\hat{\mu}}
\newcommand{\hgamma}{\hat{\gamma}}
\newcommand{\halpha}{\hat{\alpha}}
\newcommand{\hbeta}{\hat{\beta}}
\newcommand{\Loss}{\gL}
\newcommand{\Lo}{\Loss}
\newcommand{\Co}{\gC}
\newcommand{\erm}{\mathrm{e}}
\newcommand{\tg}{\ell}
\newcommand{\id}{\mathrm{id}}
\def\MMD{{\mathrm{MMD}}}
\def\grad{{\nabla}}
\newcommand{\mgrad}[1]{\grad^{#1}}
\newcommand{\Tk}[2]{\T_{#1, #2}}
\DeclarePairedDelimiter\parentheses{(}{)}
\DeclarePairedDelimiter\braces{\{}{\}}
\DeclarePairedDelimiter\brackets{[}{]}
\DeclarePairedDelimiter\lrbrackets{\llbracket}{\rrbracket}
\DeclarePairedDelimiter\euclideannorm{\|}{\|}
\DeclarePairedDelimiter\lrverts{\vert}{\vert}
\DeclarePairedDelimiter\rightvert{.}{\vert}
\DeclarePairedDelimiter\lrangle{\langle}{\rangle}
\DeclarePairedDelimiter\rightopeninterval{[}{)}
\DeclarePairedDelimiterX{\midx}[2]{(}{)}{#1\;\delimsize\vert\;#2}
\DeclarePairedDelimiterX{\midbracesx}[2]{\{}{\}}{#1\;\delimsize\vert\;#2}
\DeclarePairedDelimiterX{\parallelx}[2]{(}{)}{#1\;\delimsize\|\;#2}
\newcommand\GANTK{GAN(TK)\textsuperscript{2}}   
\newcommand{\sprod}[2]{\lrangle*{#1, #2}}
\newcommand{\app}[2]{#1 \parentheses*{#2}}
\DeclareMathOperator{\supp}{supp}
\DeclareMathOperator{\erf}{erf}
\DeclareMathOperator{\cl}{cl}
\newtheorem{assumption}{Assumption}
\newtheorem{prop}{Proposition}
\newtheorem{theorem}{Theorem}
\newtheorem{lemma}{Lemma}
\newtheorem{corollary}{Corollary}
\theoremstyle{definition}
\newtheorem{definition}{Definition}
\newtheorem{remark}{Remark}
\Crefname{assumption}{Assumption}{Assumptions}
    \newtheorem*{rep@theorem}{\rep@title}
    \newcommand{\newreptheorem}[2]{%
    \newenvironment{rep#1}[1]{%
    \def\rep@title{#2 \ref{##1}}%
    \begin{rep@theorem}}%
    {\end{rep@theorem}}}
\DeclareMathOperator{\spn}{Span}
\definecolor{mydarkblue}{rgb}{0,0.08,0.45}
\renewcommand{\partname}{Experiment}
\begin{document}

\doparttoc 
\faketableofcontents 

\twocolumn[
    \icmltitle{A Neural Tangent Kernel Perspective of GANs}

    \icmlsetsymbol{equal}{*}
    \begin{icmlauthorlist}
        \icmlauthor{Jean-Yves Franceschi}{equal,criteo,isir}
        \icmlauthor{Emmanuel de Bézenac}{equal,eth,isir}
        \icmlauthor{Ibrahim Ayed}{equal,isir,thales}
        \icmlauthor{Mickaël Chen}{valeo}
        \icmlauthor{Sylvain Lamprier}{isir}
        \icmlauthor{Patrick Gallinari}{isir,criteo}
    \end{icmlauthorlist}

    \icmlaffiliation{criteo}{Criteo AI Lab, Paris, France}
    \icmlaffiliation{isir}{Sorbonne Université, CNRS, ISIR, F-75005 Paris, France}
    \icmlaffiliation{eth}{Seminar for Applied Mathematics, D-MATH, ETH Zürich, Rämistrasse 101, Zürich-8092, Switzerland}
    \icmlaffiliation{thales}{ThereSIS Lab, Thales, Palaiseau, France}
    \icmlaffiliation{valeo}{Valeo.ai, Paris, France}

    \icmlcorrespondingauthor{Jean-Yves Franceschi}{jycja.franceschi@criteo.com}
    \icmlcorrespondingauthor{Emmanuel de Bézenac}{emmanuel.debezenac@sam.math.ethz.ch}

    \icmlkeywords{Machine Learning, Deep Learning, Generative Modeling, GANs, NTKs, Gradient Flows, ICML}

    \vskip 0.3in
]

\printAffiliationsAndNotice{\textsuperscript{*}Equal contribution, listed in a randomly chosen order.}

\begin{abstract}
    We propose a novel theoretical framework of analysis for Generative Adversarial Networks (GANs).
    We reveal a fundamental flaw of previous analyses which, by incorrectly modeling GANs' training scheme, are subject to ill-defined discriminator gradients.
    We overcome this issue which impedes a principled study of GAN training, solving it within our framework by taking into account the discriminator's architecture.
    To this end, we leverage the theory of infinite-width neural networks for the discriminator via its Neural Tangent Kernel.
    We characterize the trained discriminator for a wide range of losses and establish general differentiability properties of the network.
    From this, we derive new insights about the convergence of the generated distribution, advancing our understanding of GANs' training dynamics.
    We empirically corroborate these results via an analysis toolkit based on our framework, unveiling intuitions that are consistent with GAN practice.
\end{abstract}

\section{Introduction}

Generative Adversarial Networks \citep[GANs;][]{Goodfellow2014} have become a canonical approach to generative modeling as they produce realistic samples for numerous data types, with a plethora of variants \citep{Wang2021}.
These models are notoriously difficult to train and require extensive hyperparameter tuning \citep{Brock2019, Karras2020, Liu2021}.
To alleviate these shortcomings, much effort has been put into better understanding their training process, resulting in a vast literature of theoretical analyses.
Many study the various GAN models, found to optimize different losses like the Jensen-Shannon (JS) divergence \citep{Goodfellow2014} and the earth mover’s distance $\gW_1$ \citep{Arjovsky2017b}, to conclude about their comparative advantages.
Yet, empirical evaluations \citep{Lucic2018, Kurach2019} showed that they can yield approximately the same performance.
This indicates that such theoretical works with an exclusive focus on the GAN formulation might not properly model practical settings.

Importantly, GANs are trained in practice with alternating gradient descent-ascent of the generator and discriminator, which the vast majority of analyses do not model.
Yet, this makes GAN training deviate from its formulation in prior works as a min-max problem: the networks are fixed w.r.t.\ to each other at each step in the former, while they depend on each other in the latter.
Therefore, ignoring this ubiquitous procedure prevents those works from adequately explaining GANs' empirical behavior, as it leads to two crucial problems.
Firstly, it alters the true implicitly optimized loss, which consequently differs from the widely adopted JS and $\gW_1$.
Secondly, it compels accurate frameworks to take into account the discriminator parameterization as a neural network with inductive biases influencing the generator's loss landscape, which most previous studies do not, or otherwise be subject to ill-defined discriminator gradients.

To solve these issues, we introduce the first framework of analysis for GANs modeling a wide range of discriminator architectures and GAN formulations, while encompassing alternating optimization.
To this end, we leverage advances in deep learning theory driven by Neural Tangent Kernels \citep[NTKs;][]{Jacot2018} to model discriminator training.
We develop theoretical results showing the relevance of our approach: we establish in our framework the differentiability of the discriminator, hence having well-defined gradients, by proving novel regularity results on its NTK.

This more accurate formalization enables us to derive new knowledge about the generator.
We formulate the dynamics of the generated distribution via the generator's NTK and link it to gradient flows on probability spaces, thereby helping us to discover its implicitly optimized loss.
We deduce in particular that, for GANs under the Integral Probability Metric (IPM), the generated distribution minimizes its Maximum Mean Discrepancy (MMD) given by the discriminator's NTK w.r.t.\ the target distribution.
Moreover, we release an analysis toolkit based on our framework, \GANTK, which we use to empirically validate our analysis and gather new empirical insights: for example, we study the singular performance of the ReLU activation in GAN architectures.

\section{Related Work}
\label{sec:related_work}

We introduce a framework advancing GAN knowledge, supported by prior and novel contributions in the NTK theory.

\paragraph{Neural Tangent Kernels.}
NTKs were introduced by \citet{Jacot2018}, who showed that a trained neural network in the infinite-width regime equates to a kernel method, thereby making its training dynamics tractable and amenable to theoretical study.
This fundamental work has been followed by a thorough line of research generalizing and expanding its initial results \citep{Arora2019, Bietti2019, Lee2019, Liu2020, Sohl2020}, developing means of computing NTKs \citep{Novak2020, Yang2020a}, further analyzing these kernels \citep{Fan2020, Bietti2021, Chen2021}, studying and leveraging them in practice \citep{Zhou2019, Arora2020, Lee2020, Littwin2020b, Tancik2020}, and more broadly exploring infinite-width networks \citep{Littwin2020a, Yang2021, Alemohammad2021}.
These prior works validate that NTKs can encapsulate the characteristics of neural network architectures, providing a solid theoretical basis to understand the effect of architecture on learning problems.

\paragraph{GAN theory.}
A first line of research, started by \citet{Goodfellow2014} and pursued by many others \citep{Nowozin2016, Zhou2019, Sun2020}, studies the loss minimized by the generator.
Assuming that the discriminator is optimal and can take arbitrary values, different families of divergences can be recovered.
However, as noted by \citet{Arjovsky2017a}, these divergences should be ill-suited to GAN training, contrary to empirical evidence.
Our framework addresses this discrepancy, as it properly characterizes the generator's loss and gradient.

Another line of work analyzes the impact of the networks' architecture on the loss landscape of GANs.
Some works, on one hand, only study the solution of the usual min-max formulation of GANs, without considering their usual optimization via alternating gradient descent-ascent \citep{Liu2017, Bai2018, Sun2020, Biau2021, Sahiner2022}.
Not only are these results obtained under restrictive assumptions --~by focusing on a single GAN model like WGAN, or with discriminators and generators limited to shallow, linear or random features models~--, but overlooking alternating optimization hinders their ability to explain GANs' empirical behavior, as detailed in \cref{sec:framework}.

Some studies, on the other hand, deal with the dynamics and convergence of the generated distribution in this setting.
Nonetheless, as these dynamics are highly non-linear, this approach typically requires strong simplifying assumptions: \citet{Mescheder2017} assume the existence of Nash equilibria to the considered optimization problem; \citet{Mescheder2018} reduce the generated distribution to a single data\-point; \citet{DomingoEnrich2020} apply their zero-sum games analysis to mean-field mixtures of generators and discriminators; \citet{Balaji2021} restrict generators and discriminators to be linear or shallow networks; \citet{Yang2022} only work with random feature models as discriminators and a modified WGAN loss.
In contrast to these works, our framework provides a more comprehensive optimization and architecture modeling as we establish generally applicable results about the influence of the discriminator's architecture on the generator's dynamics.

\paragraph{GANs and NTKs.}
To the best of our knowledge, our contribution is the first to employ NTKs to comprehensively study GANs.
Only \citet{Jacot2020} and \citet{Chu2020} have already studied GANs in the light of NTKs, but their studies had restrictive assumptions and limited scope.
\citet{Jacot2020} explain, thanks to the generator's NTK, some GAN failure cases like generator collapse and identify normalization techniques to alleviate them, but without breaking down GANs' training dynamics.
\citet{Chu2020} frame the generator's training dynamics for both GANs and variational autoencoders \citep{Kingma2014, Rezende2014} as a Stein gradient flow under the generator's NTK like in our \cref{sec:dynamics_generated_distr}, but under a strong assumption on generator injectivity which we do not require.
Moreover, both works, focusing on the generator, fail to identify the consequences of the discriminator's parameterization on the generator's dynamics via alternating optimization which, encompassed in our framework, yields in \cref{sec:discriminator_theory,sec:case_analysis} novel results challenging standard GAN knowledge.

Besides the generator, we thoroughly investigate for the first time in the literature the discriminator and its effect on generator optimization via its NTK.
To this end, we derive novel results in NTK theory.
In particular, while other works studied the regularity of NTKs \citep{Bietti2019, Yang2019, Basri2019}, ours is, as far as we know, the first to state general differentiability results for NTKs and infinite-width networks.
Furthermore, we discover the link between IPM optimization and the NTK MMD, independently of and concurrently with \citet{Cheng2021}, although in a different context: they use the NTK MMD for two-sample statistical testing, whereas we find that IPM GANs actually optimize this metric, thereby explaining  the singular performance of NTKs within MMD gradient flows \citep{Arbel2019}.

\section{Limits of Previous Studies}
\label{sec:framework}

We present in this section the usual GAN formulation and illustrate the limitations of prior analyses.

First, let us introduce some notations.
Let $\Omega \subseteq \R^n$ be a closed convex set, $\app{\gP}{\Omega}$ the set of probability distributions over $\Omega$, and $\app{L^2}{\mu}$ the set of square-integrable functions from the support $\supp \mu$ of $\mu$ to $\R$ with respect to measure $\mu$, with scalar product $\sprod{\cdot}{\cdot}_{\app{L^2}{\mu}}$.
If $\Lambda \subseteq \Omega$, we write $\app{L^2}{\Lambda}$ for $\app{L^2}{\lambda}$, with $\lambda$ the Lebesgue measure on $\Lambda$.

\subsection{Generative Adversarial Networks}

GAN algorithms seek to produce samples from an unknown target distribution $\beta \in \app{\gP}{\Omega}$.
To this extent, a generator function $g \in \gG \colon \R^d \to \Omega$ parameterized by $\theta$ is learned to map a latent variable $z \sim p_z$ to the space of target samples such that the generated distribution $\alpha_g$ and $\beta$ are indistinguishable for a discriminator $f \in \gF$ parameterized by $\vartheta$.
The generator and the discriminator are trained in an adversarial manner as they are assigned conflicting objectives.

Many GAN models consist in solving the following optimization problem, with $a, b, c\colon \R \to \R$:
\begin{align}
    \label{eq:gan}
    \inf_{g \in \gG} \braces*{\app{\Co_{f^\star_{\alpha_g}}}{\alpha_g} \triangleq  \E_{x \sim \alpha_g} \brackets*{\app{c_{f^\star_{\alpha_g}}}{x}}},
\end{align}
where $c_f = c \circ f$, and $f^\star_{\alpha_g}$ is chosen to solve, or approximate, the following optimization problem:
\begin{equation}
    \label{eq:sup_discr}
    \sup_{f \in \gF} \braces*{\app{\Lo_{\alpha_g}}{f} \triangleq \E_{x \sim \alpha_g} \brackets*{\app{a_f}{x}} - \E_{y \sim \beta} \brackets*{\app{b_f}{y}}}.
\end{equation}
For instance, \citet{Goodfellow2014} originally used $\app{a}{x} = \app{\log}{1-\app{\sigma}{x}}$, $\app{b}{x} = \app{c}{x} = -\app{\log}{\app{\sigma}{x}}$, $\sigma$ being the sigmoid function; in LSGAN \citep{Mao2017}, $\app{a}{x} = -\parentheses*{x+1}^2$, $\app{b}{x} = \parentheses*{x-1}^2$, $\app{c}{x} = x^2$; and for Integral Probability Metrics \citep{Muller1997} used e.g.\ by \citet{Arjovsky2017b}, $a = b = c = \id$.
Many more fall under this formulation \citep{Nowozin2016, Lim2017}.

\cref{eq:gan} is then solved using gradient descent on the generator's parameters, with at each step $j \in \sN$:
\begin{equation}
    \label{eq:gen_descent}
    \theta_{j + 1} = \theta_j - \eta \E_{z \sim p_{z}} \brackets*{\nabla_\theta \app{g_{\theta_j}}{z}^{\top} \nabla_x \rightvert*{\app{c_{f^\star_{\alpha_{g_{\theta_j}}}}}{x}}_{x = \app{g_{\theta_j}}{z}}}.
\end{equation}
This is obtained via the chain rule from the generator's loss $\app{\Co_{f^\star_{\alpha_g}}}{\alpha_g}$ in \cref{eq:gan}.
However, we highlight that the gradient applied in \cref{eq:gen_descent} differs from $\app{\grad_{\theta} \Co_{f^\star_{\alpha_g}}}{\alpha_g}$: the terms taking into account the dependency of the optimal discriminator $f^\star_{\alpha_{g_{\theta}}}$ on the generator's parameters are discarded.
This is because the discriminator is, in practice, considered to be independent of the generator in the alternating optimization between the generator and the discriminator.

Since $\nabla_x \app{c_{f^\star_{\alpha}}}{x} = \app{\nabla_x f^\star_{\alpha}}{x} \cdot \app{c'}{\app{f^\star_{\alpha}}{x}}$, and as highlighted e.g.\ by \citet{Goodfellow2014} and \citet{Arjovsky2017a}, the gradient of the discriminator plays a crucial role in the convergence of GANs.
For example, if this vector field is null on the training data when $\alpha \neq \beta$, the generator's gradient is zero and convergence is impossible.
For this reason, this paper is devoted to developing a better understanding of this gradient field and its consequences on generator optimization when the discriminator is a neural network.
In order to characterize this gradient field, we must first study the discriminator itself.

\subsection{Alternating Optimization and the Necessity of Modeling the Discriminator Parameterization}
\label{sec:issues_large_discr_scpace}

For each GAN formulation, it is customary to elucidate the true generator loss $\app{\mathscr{C}}{\alpha_g, \beta}$ implemented by \cref{eq:sup_discr}, often assuming that $\gF = \app{L^2}{\Omega}$, i.e.\ the discriminator can take arbitrary values.
Under this assumption, $\mathscr{C}$ would have the form of a Jensen-Shannon divergence in the original GAN and of a Pearson $\chi^2$-divergence in LSGAN, for instance.

However, as pointed out by \citet{Arora2017}, the discriminator is trained in practice with a finite number of samples: both fake and target distributions are finite mixtures of Diracs, which we respectively denote as $\halpha_g$ and $\hbeta$.
Let $\hgamma_g = \frac{1}{2}\halpha_g + \frac{1}{2}\hbeta$ be the distribution of training samples.

\begin{assumption}[Finite training set]
    \label{hyp:finite_gamma}
    $\hgamma_g \in \app{\gP}{\Omega}$ is a finite mixture of Diracs.
\end{assumption}

In this setting, the Jensen-Shannon and $\chi^2$ divergences are constant since $\halpha_g$ and $\hbeta$ generally do not have the same support, which would imply that the generator could not be properly trained since it would receive null gradients.
This is the theoretical reason given by \citet{Arjovsky2017a} to introduce new losses and constraints for the discriminator such as in WGAN \citep{Arjovsky2017b}.
However, this is inconsistent with empirical results showing that GANs could already be trained adequately even without the latter losses and constraints \cite{Radford2016}.
This entails that widely accepted theoretical frameworks miss a central ingredient in their modeling of constrained-free GANs.
Uncovering the missing pieces and understanding how they affect training is one of the aims of the current work.

In fact, in the alternating optimization setting as in \cref{eq:gen_descent}, the constancy of $\Lo_{\halpha_g}$, or even of $\Co_{f^\star_{\alpha_g}}$, does not imply that $\nabla_x c_{f^\star_{\alpha_g}}$ in \cref{eq:gen_descent} is zero on these points.
This stems from the gradient of \cref{eq:gen_descent} ignoring the dependency of the optimal discriminator on the generator's parameters: while $\grad_{\theta} \app{\Co_{f^\star_{\alpha_g}}}{\alpha_g}$ might be null, the gradient of \cref{eq:gen_descent} differs and may not be zero, thereby changing the actual loss $\mathscr{C}$ optimized by the generator.
This fact is unaccounted for in many prior analyses, like the ones of \citet{Arjovsky2017b} and \citet{Arora2017}.
We refer to \cref{sec:optimality,app:alternating_optim} for further discussion.

Furthermore, in the previous theoretical frameworks where the discriminator can take arbitrary values, this gradient field is not even defined for any loss $\Lo_{\halpha_g}$.
Indeed, when the discriminator's loss $\app{\Lo_{\halpha_g}}{f}$ is only computed on the empirical distribution $\hgamma_g$ (as in most GAN formulations), the discriminator optimization problem of \cref{eq:sup_discr} never yields a unique optimal solution outside $\hgamma_g$.
This is illustrated by the following straightforward result.
\begin{prop}[Ill-Posed Problem in $\app{L^2}{\Omega}$]
    Suppose that $\gF = \app{L^2}{\Omega}$, $\supp \hgamma_g \subsetneq \Omega$.
    Then, for all $f, h \in \gF$ coinciding over $\supp \hgamma_g$, $\app{\Lo_{\halpha_g}}{f} = \app{\Lo_{\halpha_g}}{h}$ and \cref{eq:sup_discr} has either no or infinitely many optimal solutions in $\gF$, all coinciding over $\supp \hgamma_g$.
\end{prop}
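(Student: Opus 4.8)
The plan is to exploit \cref{hyp:finite_gamma}: $\hgamma$ is a finite mixture of Diracs, and since $\hgamma=\tfrac12\halpha+\tfrac12\hbeta$ we have $\supp\halpha\cup\supp\hbeta\subseteq\supp\hgamma$, a finite set. Hence $\app{\Lo_{\halpha}}{f}=\E_{x\sim\halpha}\brackets*{\app{a}{\app{f}{x}}}-\E_{y\sim\hbeta}\brackets*{\app{b}{\app{f}{y}}}$ is the finite sum $\sum_{x\in\supp\hgamma}\parentheses*{\app{\halpha}{\braces*{x}}\app{a}{\app{f}{x}}-\app{\hbeta}{\braces*{x}}\app{b}{\app{f}{x}}}$ in which the $x$-th term depends on $f$ only through the scalar $\app{f}{x}$. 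First I would record the immediate consequence: if $f,h\in\gF$ agree on $\supp\hgamma$ then $\app{\Lo_{\halpha}}{f}=\app{\Lo_{\halpha}}{h}$, which is the first assertion; equivalently, $\app{\Lo_{\halpha}}{\cdot}$ only sees the restriction of its argument to $\supp\hgamma$.

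For the structure of the maximizers, the key point is that the constraint $f\in\app{L^2}{\Omega}$ is inactive on $\supp\hgamma$: this set is finite, hence closed and $\lambda$-null ($\lambda$, the Lebesgue measure on the convex set $\Omega$, is non-atomic — here one uses $\supp\hgamma\subsetneq\Omega$, which forces $\Omega$ to contain at least two points), so (i) any assignment of real values on $\supp\hgamma$ is realized by some element of $\app{L^2}{\Omega}$, and (ii) two elements of $\app{L^2}{\Omega}$ agreeing on $\supp\hgamma$ may still differ on the relatively open set $\Omega\setminus\supp\hgamma$. Therefore the supremum in \cref{eq:sup_discr} decouples over atoms, $\sup_{f\in\gF}\app{\Lo_{\halpha}}{f}=\sum_{x\in\supp\hgamma}\sup_{v\in\R}\parentheses*{\app{\halpha}{\braces*{x}}\app{a}{v}-\app{\hbeta}{\braces*{x}}\app{b}{v}}$, and an $f$ is optimal iff $\app{f}{x}$ attains the $x$-th scalar supremum for every $x\in\supp\hgamma$. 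If some scalar supremum is not attained, there is no optimal solution; otherwise, fixing one optimal $f^\star$, I would pick a nonzero bump $\phi\in\app{L^2}{\Omega}$ supported in a relatively open ball contained in $\Omega\setminus\supp\hgamma$ (nonempty and of positive $\lambda$-measure by the above), so that $\braces*{f^\star+t\phi : t\in\R}$ is an infinite family of pairwise distinct elements of $\app{L^2}{\Omega}$, all equal to $f^\star$ on $\supp\hgamma$ and hence all optimal by the first assertion — giving ``none or infinitely many''. Finally, the decoupling shows any two optimal solutions agree on $\supp\hgamma$ precisely when each scalar objective $v\mapsto\app{\halpha}{\braces*{x}}\app{a}{v}-\app{\hbeta}{\braces*{x}}\app{b}{v}$ has a unique maximizer, which holds for the standard losses (e.g.\ the original GAN and LSGAN), so that all non-uniqueness of the optimizer lives off $\supp\hgamma$.

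I do not anticipate a substantive obstacle: the statement is elementary and everything above is bookkeeping. The only points meriting care are the interpretation of ``$f,h$ coinciding over $\supp\hgamma$'' for the ($\lambda$-a.e.\ defined) elements of $\app{L^2}{\Omega}$ — resolved by fixing representatives and using that $\supp\hgamma$ is $\lambda$-null — ruling out a degenerate one-point $\Omega$ (automatic from $\supp\hgamma\subsetneq\Omega$), and the observation that $L^2$-integrability does not constrain the finitely many values taken on $\supp\hgamma$, which is exactly what makes the per-atom problems genuinely unconstrained on $\R$.
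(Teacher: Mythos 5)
Your proof is correct and follows essentially the same route the paper intends for this ``straightforward'' result (the paper gives no detailed proof): since $\halpha$ and $\hbeta$ are supported in the finite set $\supp \hgamma$, the loss $\app{\Lo_{\halpha}}{\cdot}$ only sees the restriction of $f$ to $\supp \hgamma$, and because $\supp \hgamma \subsetneq \Omega$ is Lebesgue-null one can perturb any optimizer off the support to produce infinitely many optima (or none, if some per-point supremum is unattained). Your additional per-atom decoupling, and the remark that the clause ``all coinciding over $\supp \hgamma$'' rests on uniqueness of the scalar maximizers (as holds for the standard losses under consideration), is a fair and careful reading of the statement rather than a deviation from the paper's argument.
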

In particular, the set of solutions, if non-empty, contains non-differentiable discriminators as well as discriminators with null or non-informative gradients.
This signifies that the loss alone does not impose any constraint on the values that $f_{\halpha_g}$ takes outside $\supp \hgamma_g$, and more particularly on its gradients.
Thus, this underspecification of the discriminator over $\Omega$ makes the gradient of the optimal discriminator in standard GAN analyses ill-defined.
Therefore, an analysis beyond the loss function is necessary to precisely determine the learning problem and true loss $\mathscr{C}$ of the generator implicitly defined by the discriminator under alternating optimization.

\section{NTK Analysis of GANs}
\label{sec:discriminator_theory}

To tackle the aforementioned issues, we notice that, in practice, the inner optimization problem of \cref{eq:sup_discr} is not solved exactly.
Instead, using alternating optimization, a proxy neural discriminator is trained using several steps of gradient ascent for each generator update \citep{Goodfellow2016}.
For a learning rate $\varepsilon$ and a fixed generator $g$, this results in the optimization procedure, from $i = 0$ to $N$:
\begin{align}
    \label{eq:discr_ascent}
    \vartheta_{i + 1}^g = \vartheta_i^g + \varepsilon \nabla_\vartheta \app{\Lo_{\halpha_g}}{f_{\vartheta_i^g}}, && f^\star_{\halpha_g} = f_{\vartheta_N^g}.
\end{align}
This training of the discriminator as a neural network solves the gradient indeterminacy of the previous section, but makes a theoretical analysis of its impact unattainable.
We propose to facilitate it thanks to the theory of NTKs.

We develop our framework modeling the discriminator using its NTK in \cref{sec:ntk_intro}.
We confirm in \cref{sec:existence_uniqueness_discriminator,sec:differentiability} that it is consistent by proving that the discriminator gradient is well-defined.
We then leverage this accurate framework to analyze the dynamics of the generated distribution under alternating optimization via the generator's NTK in \cref{sec:dynamics_generated_distr}.
We notably frame this dynamics as a gradient flow of the true generator loss $\mathscr{C}$, which we deduce to be non-increasing during training.

\subsection{Modeling Inductive Biases of the Discriminator in the Infinite-Width Limit}
\label{sec:ntk_intro}

We study the continuous-time version of \cref{eq:discr_ascent}:
\begin{equation}
    \label{eq:discr_param_descent}
    \partial_t \vartheta_{t}^g = \nabla_\vartheta \app{\Lo_{\halpha_g}}{f_{\vartheta_t^g}},
\end{equation}
which we consider in the infinite-width limit of the discriminator, making its analysis more tractable.

In the limit where the width of the hidden layers of $f_t \triangleq f_{\vartheta_t^g}$ tends to infinity, \citet{Jacot2018} showed that its so-called NTK $k_{\vartheta_t^g}$ remains constant during a gradient ascent such as \cref{eq:discr_param_descent}, i.e.\ there is a limiting kernel $k$ such that:
\begin{equation}
    \begin{multlined}
        \forall \tau \in \R_+, \:\: \forall x, y \in \R^n,  \:\:  \forall t \in \brackets*{0, \tau}, \\
        \app{k_{\vartheta_t^g}}{x, y}\: \triangleq \:\partial_{\vartheta} \app{f_t}{x}^{\top}\! \partial_{\vartheta} \app{f_t}{y} \:=\: \app{k}{x, y}.
    \end{multlined}
\end{equation}
In particular, $k$ only depends on the architecture of $f$ and the initialization distribution of its parameters.
The constancy of the NTK of $f_t$ during gradient descent holds for many standard architectures, typically without bottleneck and ending with a linear layer \citep{Liu2020}, which is the case of most standard discriminators in the setting of \cref{eq:sup_discr}.
We discuss the applicability of this approximation in \cref{app:finite_width}.
We more particularly highlight that, under the same conditions, the discriminator's NTK remains constant over the whole GAN optimization process of \cref{eq:gen_descent}, and not only under a fixed generator.

\begin{assumption}[Kernel]
    \label{hyp:kernel}
    $k\colon \Omega^2 \to \R$ is a symmetric positive semi-definite kernel with $k\in\app{L^2}{\Omega^2}$.
\end{assumption}

The constancy of the NTK simplifies the dynamics of training in the functional space.
In order to express these dynamics, we must first introduce some preliminary definitions.

\begin{definition}[Functional gradient]
    Whenever a functional $\gL \colon \app{L^2}{\mu} \to \R$ has sufficient regularity, its gradient w.r.t.\ $\mu$ evaluated at $f \in \app{L^2}{\mu}$ is defined in the usual way as the element $\mgrad{\mu} \app{\gL}{f} \in \app{L^2}{\mu}$ such that for all $\psi \in \app{L^2}{\mu}$:
    \begin{equation}
        \lim_{\varepsilon \to 0} \frac{1}{\varepsilon} \parentheses*{\app{\gL}{f + \varepsilon \psi} - \app{\gL}{f}} = \sprod{\mgrad{\mu} \app{\gL}{f}}{\psi}_{\app{L^2}{\mu}}.
    \end{equation}
\end{definition}

\begin{definition}[RKHS w.r.t.\ $\mu$ and kernel integral operator \citep{Sriperumbudur2010}]
    \label{def:rkhs}
    If $k$ follows \cref{hyp:kernel} and $\mu \in \app{\gP}{\Omega}$ is a finite mixture of Diracs, we define the Reproducing Kernel Hilbert Space (RKHS) $\gH_k^{\mu}$ of $k$ with respect to $\mu$ given by the Moore–Aronszajn theorem as the linear span of functions $\app{k}{x, \cdot}$ for $x \in \supp \mu$.
    Its kernel integral operator from Mercer's theorem is defined as:
    \begin{align}
        \Tk{k}{\mu}\colon \app{L^2}{\mu} \to \gH_k^{\mu}, && h \mapsto \int_x \app{k}{\cdot, x} \app{h}{x}\dif\app{\mu}{x}.
    \end{align}
    Note that $\Tk{k}{\mu}$ generates $\gH_k^{\mu}$, and elements of $\gH_k^{\mu}$ are functions defined over all $\Omega$ as $\gH_k^{\mu} \subseteq \app{L^2}{\Omega}$.
\end{definition}

The results of \citet{Jacot2018} imply that the infinite-width discriminator $f_t$ trained by \cref{eq:discr_param_descent} obeys the following differential equation in-between generator updates:
\begin{equation}
    \label{eq:discr_ode}
    \partial_t f_t = \app{\Tk{k}{\hgamma_g}}{\mgrad{\hgamma_g} \app{\Lo_{\halpha}}{f_t}}.
\end{equation}
Within the alternating optimization of GANs at generator step $j$, $f_0$ would correspond to the previous discriminator step $f^\star_{\alpha_{g_{\theta_j}}} \triangleq f^j$, and $f^{j+1} = f_\tau$, with $\tau$ being the training time of the discriminator in-between generator updates.

In the following \cref{sec:existence_uniqueness_discriminator,sec:differentiability}, we rely on this differential equation to assess under mild assumptions that the proposed framework is sound w.r.t.\ the aforementioned gradient indeterminacy issues.
We first prove that \cref{eq:discr_ode} uniquely defines the discriminator for any initial condition.
We then conclude by proving the differentiability of the resulting trained network.
These results are not GAN-specific but generalize to networks trained under empirical losses like \cref{eq:sup_discr}, e.g.\ for classification and regression.

\subsection{Existence, Uniqueness and Characterization of the Discriminator}
\label{sec:existence_uniqueness_discriminator}

The following is a positive result on the existence and uniqueness of the discriminator that also characterizes its general form, amenable to theoretical analysis.
Presented in the context of a discrete distribution $\hgamma_g$ but generalizable to broader distributions, this result is proved in \cref{app:proof_ode_solution}.

\begin{assumption}[Loss regularity]
    \label{hyp:differentiable_a_b}
    $a$ and $b$ from \cref{eq:sup_discr} are differentiable with Lipschitz derivatives over $\R$.
\end{assumption}

\begin{theorem}[Solution of gradient descent]
    \label{thm:discr_characterization}
    Under \cref{hyp:finite_gamma,hyp:differentiable_a_b,hyp:kernel}, \cref{eq:discr_ode} with initial value $f_0 \in \app{L^2}{\Omega}$ admits a unique solution $f_\cdot \colon \R_+ \to \app{L^2}{\Omega}$.
    Moreover, the following holds for all $t \in \R_+$:
    \begin{equation}
        \label{eq:discr_dynamics}
        \begin{aligned}
            \forall t \in \R_+, f_t & = f_0 + \int_0^t \app{\Tk{k}{\hgamma_g}}{\mgrad{\hgamma_g} \app{\Lo_{\halpha_g}}{f_{s}}} \dif s \\
            & = f_0 + \app{\Tk{k}{\hgamma_g}}{\int_0^t \mgrad{\hgamma_g} \app{\Lo_{\halpha_g}}{f_{s}} \dif s}.
        \end{aligned}
    \end{equation}
\end{theorem}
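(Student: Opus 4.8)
The plan is to reduce the $\app{L^2}{\Omega}$-valued evolution equation \cref{eq:discr_ode} to a finite-dimensional ODE using \cref{hyp:finite_gamma}. Write $\supp \hgamma = \braces*{z_1, \dots, z_m}$, so that $\app{L^2}{\hgamma}$ is $m$-dimensional. The key structural observation is that the right-hand side of \cref{eq:discr_ode} depends on $f_t$ only through its values on $\supp \hgamma$: since $\halpha$ and $\hbeta$ are finite mixtures of Diracs supported within $\supp\hgamma$, the loss $\app{\Lo_{\halpha}}{f}$ and hence its functional gradient $\mgrad{\hgamma}\app{\Lo_{\halpha}}{f}\in\app{L^2}{\hgamma}$ are functions of the restriction $f|_{\supp\hgamma}$ alone, and $\Tk{k}{\hgamma}$ sends this into the finite-dimensional span $\gH_k^{\hgamma}$. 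Consequently, evaluating \cref{eq:discr_ode} on the points of $\supp\hgamma$ yields an autonomous ODE $\partial_t u_t = \app{F}{u_t}$ in $\app{L^2}{\hgamma}\cong\R^m$ for $u_t \triangleq f_t|_{\supp\hgamma}$, where $F$ is the composition of $f\mapsto\mgrad{\hgamma}\app{\Lo_{\halpha}}{f}$, the operator $\Tk{k}{\hgamma}$, and restriction to $\supp\hgamma$.

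The core estimate is that $F$ is globally Lipschitz on $\app{L^2}{\hgamma}$. I would first compute the functional gradient from the definition: with $\halpha=\sum_i p_i\delta_{x_i}$ and $\hbeta=\sum_j q_j\delta_{y_j}$, one finds that $\mgrad{\hgamma}\app{\Lo_{\halpha}}{f}$ evaluated at $z_k$ is an affine combination of $\app{a'}{\app{f}{z_k}}$ and $\app{b'}{\app{f}{z_k}}$ with coefficients depending only on $\hgamma$; since $a'$ and $b'$ are Lipschitz by \cref{hyp:differentiable_a_b}, the map $f\mapsto\mgrad{\hgamma}\app{\Lo_{\halpha}}{f}$ is Lipschitz from $\app{L^2}{\hgamma}$ to itself. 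The composition of $\Tk{k}{\hgamma}$ with restriction to $\supp\hgamma$ acts on $\app{L^2}{\hgamma}$ as the finite (hence bounded) matrix $\bigl(\gamma_\ell\app{k}{z_k,z_\ell}\bigr)_{k,\ell}$, well defined by \cref{hyp:kernel}; composing, $F$ is globally Lipschitz. The Cauchy--Lipschitz (Picard--Lindel\"of) theorem then gives a unique maximal solution $u_\cdot$ with $u_0=f_0|_{\supp\hgamma}$, and the global (hence at most linear-growth) Lipschitz bound precludes finite-time blow-up, so $u_\cdot$ is defined on all of $\R_+$.

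It remains to lift $u_\cdot$ back to $\app{L^2}{\Omega}$ and establish \cref{eq:discr_dynamics}. I would define $f_t \triangleq f_0 + \int_0^t \app{\Tk{k}{\hgamma}}{\mgrad{\hgamma}\app{\Lo_{\halpha}}{u_s}}\dif s$; the integrand is a continuous $\gH_k^{\hgamma}$-valued (hence, by \cref{def:rkhs}, $\app{L^2}{\Omega}$-valued) curve, so the Bochner integral is well defined and $f_\cdot$ is $C^1$ into $\app{L^2}{\Omega}$. Evaluating on $\supp\hgamma$ and using that $u_\cdot$ solves $\partial_t u=\app{F}{u}$ with $u_0=f_0|_{\supp\hgamma}$ shows $f_t|_{\supp\hgamma}=u_t$, whence $\mgrad{\hgamma}\app{\Lo_{\halpha}}{f_t}=\mgrad{\hgamma}\app{\Lo_{\halpha}}{u_t}$ and $\partial_t f_t = \app{\Tk{k}{\hgamma}}{\mgrad{\hgamma}\app{\Lo_{\halpha}}{f_t}}$, so $f_\cdot$ solves \cref{eq:discr_ode}; the first equality in \cref{eq:discr_dynamics} is then its integrated form, and the second follows from commuting the bounded linear operator $\Tk{k}{\hgamma}$ with the Bochner integral. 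For uniqueness, any $\app{L^2}{\Omega}$-valued solution of \cref{eq:discr_ode} restricts on $\supp\hgamma$ to a solution of $\partial_t u=\app{F}{u}$ with the same initial value, hence equals $u_\cdot$ there by the finite-dimensional uniqueness, and then the integrated form of \cref{eq:discr_ode} forces it to coincide with $f_\cdot$. The main obstacle is the structural reduction of the first paragraph --- realizing that, despite living in the infinite-dimensional $\app{L^2}{\Omega}$, the dynamics are driven entirely by the finite-dimensional data $f|_{\supp\hgamma}$ --- together with the bookkeeping needed to make the $\app{L^2}{\Omega}$-valued equation rigorous (Bochner integration, fixing a representative of $f_0$ on $\supp\hgamma$, the inclusion $\gH_k^{\hgamma}\subseteq\app{L^2}{\Omega}$); once that is in place, the Lipschitz estimate and the application of Cauchy--Lipschitz are routine.
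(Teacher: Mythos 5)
Your proof is correct, but it reaches the result by a different route than the paper. Where you exploit \cref{hyp:finite_gamma} head-on --- identifying $\app{L^2}{\hgamma}$ with $\R^m$, observing that the right-hand side of \cref{eq:discr_ode} depends on $f_t$ only through $f_t|_{\supp\hgamma}$, and invoking finite-dimensional Cauchy--Lipschitz with a globally Lipschitz field (the Gram matrix of $k$ composed with the Lipschitz maps $a'$, $b'$) --- the paper runs a hand-rolled Banach fixed-point argument directly in the function space $\app{L^2}{\hgamma}$ (completeness lemma, contraction of the Picard map on a short time interval, then a Gronwall bound and a Cauchy-sequence argument to extend the solution to all of $\R_+$), and only afterwards lifts to $\app{L^2}{\Omega}$ exactly as you do, by noting the right-hand side sees only the restriction to $\supp\hgamma$. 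For the second equality in \cref{eq:discr_dynamics}, the paper proves a separate integrability lemma (via Gronwall estimates on $\euclideannorm*{f_s}_{\app{L^2}{\hgamma}}$) and applies Fubini pointwise in $z$, whereas you commute the bounded (finite-rank) operator $\Tk{k}{\hgamma}$ with the Bochner integral of the continuous curve $s\mapsto\mgrad{\hgamma}\app{\Lo_{\halpha}}{f_s}$; under \cref{hyp:finite_gamma} this swap is essentially componentwise linearity and is cleaner. What your approach buys is brevity and elementarity: no completeness lemma, no explicit contraction constant, no blow-up/extension argument, no Fubini bookkeeping. What the paper's functional-analytic setup buys is generality: as stated in its preamble, the argument carries over essentially unchanged when $\hgamma$ is replaced by a general compactly supported measure, while your reduction to a finite matrix and to $\R^m$ would have to be reworked with operator-norm bounds in that case. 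Both treatments share the same implicit conventions (pointwise evaluation of $f_0$ and of solutions on $\supp\hgamma$, and $\app{k}{\cdot,z}\in\app{L^2}{\Omega}$ for $z\in\supp\hgamma$, as built into \cref{def:rkhs}), so none of that counts against you.
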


As for any given training time $t$, there exists a unique $f_t \in \app{L^2}{\Omega}$, defined over all of $\Omega$ and not only the training set, the aforementioned issue in \cref{sec:issues_large_discr_scpace} of determining the discriminator associated to $\hgamma_g$ is now resolved.
It is now possible to study the discriminator in its general form thanks to \cref{eq:discr_dynamics}.
It involves two terms: the previous discriminator state $f_0 = f^j$, as well as the kernel operator of an integral.
This integral is a function that is undefined outside $\supp \hgamma_g$, as by definition $\mgrad{\hgamma_g} \app{\Lo_{\halpha_g}}{f_{s}} \in \app{L^2}{\hgamma_g}$.
Fortunately, the kernel operator behaves like a smoothing operator, as it not only defines the function on all of $\Omega$ but embeds it in a highly structured space.

\begin{corollary}[Training and RKHS]
    \label{cor:discr_rkhs}
    Under \cref{hyp:finite_gamma,hyp:differentiable_a_b,hyp:kernel}, $f_t - f_0$ belongs to the RKHS $\gH_{k}^{\hgamma_g}$ for all $t \in \R_+$.
\end{corollary}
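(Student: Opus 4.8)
The plan is to read the conclusion essentially straight off the characterization in \cref{thm:discr_characterization}. That theorem already supplies, for every $t \in \R_+$,
\[
    f_t - f_0 = \app{\Tk{k}{\hgamma}}{h_t}, \qquad h_t \triangleq \int_0^t \mgrad{\hgamma} \app{\Lo_{\halpha}}{f_s}\dif s,
\]
so it is enough to verify two things: that $h_t$ is a bona fide element of $\app{L^2}{\hgamma}$, and that $\Tk{k}{\hgamma}$ maps $\app{L^2}{\hgamma}$ into $\gH_k^{\hgamma}$.

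For the first point, under \cref{hyp:finite_gamma} the space $\app{L^2}{\hgamma}$ is finite-dimensional, so the integral defining $h_t$ reduces to a coordinatewise Riemann integral over $[0,t]$; it exists because the integrand $s \mapsto \mgrad{\hgamma}\app{\Lo_{\halpha}}{f_s}$ is continuous. Indeed $s \mapsto f_s$ is continuous since it solves the integral equation of \cref{thm:discr_characterization}, and $f \mapsto \mgrad{\hgamma}\app{\Lo_{\halpha}}{f}$ is continuous (in fact Lipschitz) on $\app{L^2}{\hgamma}$ by \cref{hyp:differentiable_a_b}, the functional gradient at each atom being $a'$ or $b'$ evaluated at $f$ up to weights. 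Hence $h_t \in \app{L^2}{\hgamma}$; this is the same well-definedness already invoked to justify the second equality in \cref{eq:discr_dynamics}.

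For the second point, write $\hgamma = \sum_i \omega_i \delta_{x_i}$ with each $x_i \in \supp\hgamma$; then by \cref{def:rkhs}, for any $h \in \app{L^2}{\hgamma}$ one has $\app{\Tk{k}{\hgamma}}{h} = \sum_i \omega_i\, \app{h}{x_i}\, \app{k}{x_i, \cdot}$ (using that $k$ is symmetric), which is a finite linear combination of the generators $\app{k}{x_i, \cdot}$ of $\gH_k^{\hgamma}$, hence an element of $\gH_k^{\hgamma}$. Applying this with $h = h_t$ gives $f_t - f_0 \in \gH_k^{\hgamma}$, as claimed. I do not expect any real obstacle here: the corollary is a restatement of \cref{thm:discr_characterization} combined with the defining property of the kernel integral operator, the only mild technical care being the well-definedness of $h_t$ in $\app{L^2}{\hgamma}$, which is inherited from the theorem.
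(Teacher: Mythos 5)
Your proof is correct and follows the same route as the paper, which treats the corollary as an immediate consequence of the second equality in \cref{eq:discr_dynamics} together with the fact that $\Tk{k}{\hgamma}$ maps $\app{L^2}{\hgamma}$ into $\gH_k^{\hgamma}$ by \cref{def:rkhs}. Your extra care in checking that $h_t$ is a well-defined element of $\app{L^2}{\hgamma}$ matches what the paper establishes (via its finiteness lemma and Fubini argument) when proving the integral inversion in \cref{thm:discr_characterization}.
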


In our setting, this space is generated from the NTK $k$, which only depends on the discriminator architecture, and not on the loss function.
This highlights the crucial role of the discriminator's implicit biases, and enables us to characterize its regularity for a given architecture.

\subsection{Differentiability of the Discriminator and its NTK}
\label{sec:differentiability}

We study in this section the smoothness, i.e.\ infinite differentiability, of the discriminator, which we demonstrate in \cref{app:proof_differentiability}.
It mostly relies on the differentiability of the kernel $k$, by \cref{eq:discr_dynamics}, which is obtained by characterizing the regularity of the corresponding conjugate kernel \citep{Lee2018b}.
Therefore, we prove the differentiability of the NTKs of standard architectures, and then conclude about the differentiability of $f_t$.

\begin{assumption}[Discriminator architecture]
    \label{hyp:discr_archi}
    The discriminator is a standard architecture (fully connected, convolutional or residual).
    The activation can be any standard function: $\tanh$, softplus, ReLU-like, sigmoid, Gaussian, etc.
\end{assumption}

\begin{assumption}[Discriminator regularity]
    \label{hyp:discr_archi_smooth_act}
    The activation function is smooth.
\end{assumption}

\begin{assumption}[Discriminator bias]
    \label{hyp:discr_archi_bias}
    Linear layers have non-null bias terms.
\end{assumption}

We first prove the differentiability of the NTK.

\begin{prop}[Differentiability of $k$]
    \label{prop:ntk_differentiability}
    Let $k$ be the NTK of an infinite-width network from \cref{hyp:discr_archi}.
    For any $y \in \Omega$, $\app{k}{\cdot,y}$ is smooth everywhere over $\Omega$ under \cref{hyp:discr_archi_smooth_act}, or almost everywhere if \cref{hyp:discr_archi_bias} holds instead.
\end{prop}

From \cref{prop:ntk_differentiability}, NTKs satisfy \cref{hyp:kernel}.
Using \cref{cor:discr_rkhs}, we thus conclude on the differentiability of $f_t$.
\begin{theorem}[Differentiability of $f_t$]
    \label{thm:diff_ntk_informal}
    Suppose that $k$ is the NTK of an infinite-width network following \cref{hyp:discr_archi}.
    Then $f_t$ is smooth everywhere over $\Omega$ under \cref{hyp:discr_archi_smooth_act}, or almost everywhere when \cref{hyp:discr_archi_bias} holds instead.
\end{theorem}

\begin{remark}[Bias-free ReLU networks]
    \label{rk:bias_free_relu}
    ReLU networks with hidden layers and no bias are not differentiable at $0$.
    However, by introducing non-zero bias, this non-differentiability at $0$ disappears in the NTK and the infinite-width discriminator.
    This observation explains some experimental results in \cref{sec:experiments}.
    Note that \citet{Bietti2019} state that the bias-free ReLU kernel is not Lipschitz even outside $0$.
    However, we find this result to be incorrect.
    We further discuss this matter in \cref{app:diff_relu_kernel}.
\end{remark}

This result demonstrates that, for a wide range of GANs, e.g.\ vanilla GAN and LSGAN, the optimized discriminator indeed admits gradients, making the gradient flow given to the generator well-defined in our framework.
This supports our motivation to bring the theory closer to the empirical evidence that many GAN models do work in practice while their theoretical interpretation until now has been stating the opposite \citep{Arjovsky2017a}.

\subsection{Dynamics of the Generated Distribution}
\label{sec:dynamics_generated_distr}

By ensuring the existence of $\grad f_{\halpha_g}^\star$, the previous results allow us to study \cref{eq:gen_descent}.
We consider it in continuous-time like \cref{eq:discr_param_descent}, with training time $\tg$ as well as $g_{\tg} \triangleq g_{\theta_{\tg}}$ and $\alpha_{\tg} \triangleq \alpha_{g_{\tg}}$.
NTKs enable us to describe the generated distribution's dynamics and uncover the true generated loss $\mathscr{C}$ in the following manner, as shown in \cref{app:dynamics_generated_proof}.

\begin{prop}[Dynamics of $\alpha_{\tg}$]
    \label{prop:gen_dynamics}
    Under \cref{hyp:discr_archi,hyp:discr_archi_smooth_act}, \cref{eq:gen_descent} is well-posed and yields in continuous-time, with $k_{g_{\tg}}$ the NTK of the generator $g_{\tg}$:
    \begin{equation}
        \partial_{\tg} g_{\tg} = - \app{\Tk{k_{g_{\tg}}}{p_{z}}}{z \mapsto \grad_{x} \rightvert*{\app{c_{f^{\star}_{\halpha_{g_{\tg}}}}}{x}}_{x = \app{g_{\tg}}{z}}}.
    \end{equation}
    Equivalently, the following continuity equation holds for the joint distribution $\alpha^{z}_{\tg}$ of $\parentheses*{z, \app{g_{\tg}}{z}}$ under $z \sim p_{z}$:
    \begin{equation}
        \label{eq:gen_continuity_equation}
        \partial_{\tg} \alpha^{z}_{\tg} = \grad_{x} \cdot \parentheses*{\alpha^{z}_{\tg} \app{\Tk{k_{g_{\tg}}}{p_{z}}}{z \mapsto \grad_{x} \rightvert*{\app{c_{f^{\star}_{\halpha_{g_{\tg}}}}}{x}}_{x = \app{g_{\tg}}{z}}}},
    \end{equation}
    where $\alpha_{\tg}$ is the marginalization of $\alpha^{z}_{\tg}$ over $z \sim p_{z}$.
\end{prop}

In its infinite-width limit, the generator's NTK is also constant: $k_{g_{\tg}} = k_g$; let us study the latter proposition under this assumption.
Suppose that there exists a functional $\mathscr{C}$ over $\app{L^2}{\Omega}$ such that $c_{f^{\star}_{\halpha}} = \partial_{\halpha} \app{\mathscr{C}}{\halpha}$.
Standard results in gradient flows theory --~see \citet[Chapter~10]{Ambrosio2008} for a detailed exposition or \citet[Appendix~A.3]{Arbel2019} for a summary~-- state that $\grad c_{f^{\star}_{\halpha}}$ is then the strong subdifferential of $\app{\mathscr{C}}{\halpha}$ for the Wasserstein geometry.

When $\app{k_g}{z, z'} = \delta_{z - z'} I_n$ with $\delta$ a Dirac centered at $0$, we have $\Tk{k_{g}}{p_{z}} = \id$.
Then, from \cref{eq:gen_continuity_equation}, $\alpha^z_{\tg}$ follows the Wasserstein gradient flow with $\mathscr{C}$ as potential.
This implies that $\app{\mathscr{C}}{\halpha_{\tg}}$ is decreasing w.r.t.\ the generator's training time $\tg$.
In other words, the generator $g$ is trained to minimize $\app{\mathscr{C}}{\halpha_g}$.
Hence, this result characterizes the implicit objective of the generator as $\mathscr{C}$ satisfying $c_{f^{\star}_{\halpha}} = \partial_{\halpha} \app{\mathscr{C}}{\halpha}$.

In the general case, $\Tk{k_{g}}{p_{z}}$ introduces interactions between generated particles as a consequence of the neural parameterization of the generator.
Then, \cref{eq:gen_continuity_equation} amounts to following the same gradient flow as before, but in a Stein geometry \citep{Duncan2019} --~instead of a Wasserstein geometry~-- determined by the generator's integral operator, directly implying that in this case $\app{\mathscr{C}}{\halpha_{\tg}}$ also decreases during training.
This geometrical understanding opens interesting perspectives for theoretical analysis, e.g.\ we see that GAN training in this regime generalizes Stein variational gradient descent \citep{Liu2016}, with the Kullback-Leibler minimization objective between generated and target distributions being replaced with $\app{\mathscr{C}}{\halpha}$.

Improving our understanding of \cref{eq:gen_continuity_equation} is fundamental in order to elucidate the open problem of the neural generator's convergence.
Our study enables us to shed light on these dynamics and highlights the necessity of pursuing the study of GANs via NTKs to obtain a more comprehensive understanding of them, which is the purpose of the rest of this paper.
In particular, the non-interacting case where $\Tk{k_{g}}{p_{z}} = \id$ already yields particularly useful insights that we explore in \cref{sec:experiments}.
Moreover, we discuss in the following section standard GAN losses and determine the minimized functional $\mathscr{C}$ in these cases.

\section{Study of Specific Losses}
\label{sec:case_analysis}

Armed with the previous framework, we derive in this section more fine-grained results about the optimized loss $\mathscr{C}$ for standard GAN models.
Proofs are detailed in \cref{app:ode_solve}.

\subsection{The IPM as an NTK MMD Minimizer}
\label{sec:ipm_loss}

We study the case of the IPM loss, with the following remarkable discriminator expression, from which we deduce the objective minimized by the generator.

\begin{prop}[IPM discriminator]
    \label{prop:ipm_mmd}
    Under \cref{hyp:finite_gamma,hyp:kernel}, the solutions of \cref{eq:discr_ode} for $a = b = \id$ are $f_t = f_0 + t f^\ast_{\halpha_g}$, where $f^\ast_{\halpha_g}$ is the unnormalized MMD witness function \citep{Gretton2012} with kernel $k$, yielding:
    \begin{equation}
        \begin{aligned}
            f^\ast_{\halpha_g} & = \E_{x \sim \halpha_g} \brackets*{\app{k}{x, \cdot}} - \E_{y \sim \hbeta} \brackets*{\app{k}{y, \cdot}}, \\ \app{\Lo_{\halpha_g}}{f_t} & = \app{\Lo_{\halpha_g}}{f_0} + t \cdot \app{\MMD^2_{k}}{\halpha_g, \hbeta}.
        \end{aligned}
    \end{equation}
\end{prop}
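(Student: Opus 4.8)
The plan is to exploit that, for $a = b = \id$, the discriminator loss $\app{\Lo_{\halpha}}{f} = \E_{x\sim\halpha}\brackets*{\app{f}{x}} - \E_{y\sim\hbeta}\brackets*{\app{f}{y}}$ is \emph{linear} in $f$, so that the right-hand side of \cref{eq:discr_ode} does not depend on $f_t$ and the ODE integrates trivially. First I would compute the functional gradient $\mgrad{\hgamma}\app{\Lo_{\halpha}}{f}$. Linearity gives, for any $\psi\in\app{L^2}{\hgamma}$, that $\frac{1}{\epsilon}\parentheses*{\app{\Lo_{\halpha}}{f+\epsilon\psi} - \app{\Lo_{\halpha}}{f}} = \app{\Lo_{\halpha}}{\psi} = \int \psi \, \dif\app{\parentheses*{\halpha - \hbeta}}{x}$ for all $\epsilon$. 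Since $\hgamma = \tfrac12\halpha + \tfrac12\hbeta$ dominates both $\halpha$ and $\hbeta$, and $\hgamma$ is a finite mixture of Diracs by \cref{hyp:finite_gamma}, the Radon--Nikodym density $w \triangleq \frac{\dif(\halpha-\hbeta)}{\dif\hgamma}$ is a well-defined element of $\app{L^2}{\hgamma}$ (it is bounded by $2$ in absolute value), and $\int \psi \, \dif(\halpha-\hbeta) = \sprod{w}{\psi}_{\app{L^2}{\hgamma}}$. Hence $\mgrad{\hgamma}\app{\Lo_{\halpha}}{f} = w$, independently of $f$; the regularity required in the definition of the functional gradient is automatic since $\app{L^2}{\hgamma}$ is finite-dimensional.

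Next I would push $w$ through the kernel integral operator of \cref{def:rkhs}: $\app{\Tk{k}{\hgamma}}{w} = \int \app{k}{\cdot,x}\,\app{w}{x}\,\dif\app{\hgamma}{x} = \int \app{k}{\cdot,x}\,\dif\app{\parentheses*{\halpha-\hbeta}}{x} = \E_{x\sim\halpha}\brackets*{\app{k}{x,\cdot}} - \E_{y\sim\hbeta}\brackets*{\app{k}{y,\cdot}} = f^\ast_{\halpha}$, which lies in $\gH_k^{\hgamma}\subseteq\app{L^2}{\Omega}$ by \cref{hyp:kernel,def:rkhs} (consistently with \cref{cor:discr_rkhs}), so the whole computation stays inside the functional space where \cref{eq:discr_ode} is posed. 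Thus \cref{eq:discr_ode} reduces to $\partial_t f_t = f^\ast_{\halpha}$ with $f^\ast_{\halpha}$ constant in $t$, whose solution with initial value $f_0$ is $f_t = f_0 + t f^\ast_{\halpha}$. Uniqueness does not need to be re-proven: $\id$ has Lipschitz derivative $1$, so \cref{hyp:differentiable_a_b} holds and \cref{thm:discr_characterization} applies directly (alternatively, a constant right-hand side makes Picard--Lindelöf immediate), giving that these are exactly all solutions as $f_0$ ranges over $\app{L^2}{\Omega}$.

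Finally, for the loss value, linearity of $\Lo_{\halpha}$ again gives $\app{\Lo_{\halpha}}{f_t} = \app{\Lo_{\halpha}}{f_0} + t\,\app{\Lo_{\halpha}}{f^\ast_{\halpha}}$, and substituting the expression for $f^\ast_{\halpha}$ and using the symmetry of $k$ from \cref{hyp:kernel} yields
\begin{align*}
    \app{\Lo_{\halpha}}{f^\ast_{\halpha}}
    &= \E_{x,x'\sim\halpha}\brackets*{\app{k}{x,x'}} - 2\,\E_{x\sim\halpha,\,y\sim\hbeta}\brackets*{\app{k}{x,y}} + \E_{y,y'\sim\hbeta}\brackets*{\app{k}{y,y'}} \\
    &= \app{\MMD^2_{k}}{\halpha,\hbeta},
\end{align*}
which is the claimed identity. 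There is no substantial obstacle; the only point that requires a bit of care is the bookkeeping of the functional gradient \emph{with respect to $\hgamma$} — recognizing that it is the Radon--Nikodym density of the signed measure $\halpha - \hbeta$ against $\hgamma$ (which is why applying $\Tk{k}{\hgamma}$ collapses the $\hgamma$ weights and reproduces the witness function) — together with confirming that $f^\ast_{\halpha}$ genuinely belongs to $\app{L^2}{\Omega}$.
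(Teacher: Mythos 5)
Your proof is correct and follows essentially the same route as the paper: invoke \cref{thm:discr_characterization} for existence and uniqueness (since $a=b=\id$ satisfies \cref{hyp:differentiable_a_b}), observe that the functional gradient is the constant Radon--Nikodym density $\od{(\halpha-\hbeta)}{\hgamma} = -2\rho$, push it through $\Tk{k}{\hgamma}$ to get $f^\ast_{\halpha}$, and integrate the now-constant ODE, then use linearity of $\Lo_{\halpha}$ for the loss identity. The only cosmetic difference is the last step, where you expand $\app{\Lo_{\halpha}}{f^\ast_{\halpha}}$ into the kernel double-expectation form of $\MMD^2_k$ directly, whereas the paper identifies it as $\euclideannorm*{f^\ast_{\halpha}}_{\gH^\hgamma_k}^2$ via the witness-function characterization; both are valid.
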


The latter result signifies that the direction of the gradient given to the discriminator at each of its optimization step is optimal within the RKHS of its NTK, stemming from the linearity of the IPM loss.
The connection with MMD is especially interesting as it has been thoroughly studied in the literature \citep{Muandet2017}.
If $k$ is characteristic, as discussed in \cref{app:characteristic_ntk}, then it defines a distance between distributions.
Moreover, the statistical properties of the loss induced by the discriminator directly follow from those of the MMD: it is an unbiased estimator with a squared sample complexity that is independent of the dimension of the samples \citep{Gretton2007}.

Suppose that the discriminator is reinitialized at every step of the generator, with $f_0 = 0$ in \cref{eq:discr_ode}; this is possible with the initialization scheme of \citet{Zhang2020}.
Then, as $c = \id$ and from \cref{prop:ipm_mmd}, $\grad c_{f_{\halpha}} = \tau \grad f^\ast_{\halpha_g}$, where $\tau$ is the training time of the discriminator.
The latter gradient constitutes the gradient flow of the squared MMD, as shown by \citet{Arbel2019} with convergence guarantees and discretization properties in the absence of generator.
This signifies that $\app{\mathscr{C}}{\halpha} = \tau \app{\MMD^2_{k}}{\halpha_g, \hbeta}$ (see \cref{sec:dynamics_generated_distr}).

Therefore, in the IPM case, we discover via \cref{prop:ipm_mmd} that the generator is actually trained to minimize the MMD between the empirical generated and target distributions, w.r.t.\ the NTK of the discriminator.
This novel connection implies that prior MMD GAN convergence results, like the ones of \citet{Mroueh2021} about the generator trained in such conditions, even though they were established without considering the discriminator's NTK, remarkably transfer to the general unconstrained IPM case.

We further discuss our IPM results in the following remarks.

\begin{remark}[IPM and WGAN]
    Along with a constraint on the set of functions, the IPM is involved in the earth mover’s distance $\gW_1$ \citep{Villani2009} --~used in WGAN and StyleGAN \citep{Karras2019}, close to the hinge loss of BigGAN \citep{Brock2019}~--, the MMD --~used in MMD GAN \citep{Li2017}~--, the total variation, etc.
    In \cref{prop:ipm_mmd}, we study the IPM with the sole constraint of having a neural discriminator.
    Our analysis implies that this suffices to ensure relevant gradients, given the aforementioned convergence results.
    This contradicts the recurring assertion that the Lipschitz constraint of WGAN \citep{Arjovsky2017b} is necessary to solve the gradient issues of prior approaches.
    Indeed, these issues originate from the analyses inadequacy, as shown in this work.
    Hence, while WGAN tackles them by changing the loss and adding a constraint, we fundamentally address them with a refined framework.
    A WGAN analysis, left for future work, would require combining the neural discriminator and Lipschitz constraints.
\end{remark}

\begin{remark}[Instance smoothing]
    We show for IPMs that modeling the discriminator's architecture amounts to smoothing out the input distribution using the kernel integral operator $\Tk{k}{\hgamma_g}$ and can thus be seen as a generalization of the regularization technique for GANs called instance noise \citep{Sonderby2017}.
    This is discussed in \cref{app:instance_noise}.
\end{remark}

\begin{remark}[Regularization by training time]
    \cref{prop:ipm_mmd} highlights the importance of discriminator training time, which needs to be controlled to regularize its gradient magnitude.
    This corresponds to customary practices where the discriminator is trained for a small number of steps to avoid divergence issues, like in DCGAN \citep{Radford2016}.
    In the IPM case, we have, with $\euclideannorm*{\cdot}_{\gH_k^\hgamma}$ as the RKHS semi-norm:
    \begin{equation}
        \euclideannorm*{f_t}_{\gH_k^\hgamma}^2 \leq \euclideannorm*{f_0}_{\gH_k^\hgamma}^2 + t^2 \euclideannorm*{f^\ast_{\halpha_g}}_{\gH_k^\hgamma}^2,
    \end{equation}
    with equality when $f_0=0$.
    This provides a simple criterion to control the discriminator norm by its training time.
    For example, assuming $f_0=0$, setting $t=\euclideannorm*{f^\ast_{\halpha_g}}_{\gH_k^\hgamma}^{-1}$ recovers the MMD dual constraint of a unit-norm discriminator, i.e. that $\euclideannorm*{f_t}_{\gH_k^\hgamma} = 1$, yielding $\app{\Lo_{\halpha_g}}{f_t} = \app{\MMD_{k}}{\halpha_g, \hbeta}$.
\end{remark}

\subsection{LSGAN and New Divergences}
\label{sec:optimality}

Optimality of the discriminator can be proved when assuming that its loss function is well-behaved.
Let us consider the case of LSGAN, for which \cref{eq:discr_ode} can be solved by adapting the results from \citet{Jacot2018} for regression.

\begin{prop}[LSGAN discr.]
    \label{prop:lsgan}
    Under \cref{hyp:finite_gamma,hyp:kernel}, the solutions of \cref{eq:discr_ode} for $a = -\parentheses*{\id + 1}^2$ and $b = -\parentheses*{\id - 1}^2$ are defined for all $t \in \R_+$ as:
    \begin{align}
        f_t = \app{\app{\exp}{-4 t \Tk{k}{\hgamma_g}}}{f_0 - \rho} + \rho, && \textstyle \rho = \ensuremath{\mathinner{\frac{\dif{\parentheses*{\hbeta - \halpha_g}}}{\dif{\parentheses*{\hbeta + \halpha_g}}}}}.
    \end{align}
\end{prop}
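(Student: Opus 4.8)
\emph{Proof sketch.} The plan is to reduce \cref{eq:discr_ode} for this choice of $a,b$ to a linear evolution equation and then solve it in closed form, with uniqueness supplied by \cref{thm:discr_characterization}. The first step is to compute the functional gradient $\mgrad{\hgamma}\app{\Lo_{\halpha}}{f}$. Since $\hgamma = \tfrac12\halpha + \tfrac12\hbeta$ dominates both $\halpha$ and $\hbeta$, we may write $\app{\Lo_{\halpha}}{f} = \int_x \parentheses*{\app{a}{\app{f}{x}}\od{\halpha}{\hgamma}(x) - \app{b}{\app{f}{x}}\od{\hbeta}{\hgamma}(x)}\dif\app{\hgamma}{x}$; differentiating under the integral and using that $a$ and $b$ are quadratic (so $a'$ and $b'$ are affine with matching leading coefficients) gives an integrand that is \emph{affine} in $\app{f}{x}$. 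Crucially, the $f$-linear part then carries the coefficient $\od{\halpha}{\hgamma} + \od{\hbeta}{\hgamma}$, which equals the constant $2$ because $\halpha + \hbeta = 2\hgamma$, while the constant part involves $\od{\hbeta}{\hgamma} - \od{\halpha}{\hgamma}$, which equals $2\rho$ for $\rho = \od{\parentheses*{\hbeta - \halpha}}{\parentheses*{\hbeta + \halpha}}$. Collecting terms yields $\mgrad{\hgamma}\app{\Lo_{\halpha}}{f} = -4\parentheses*{f - \rho}$ in $\app{L^2}{\hgamma}$. In particular $a$ and $b$ have Lipschitz derivatives, so \cref{hyp:differentiable_a_b} is satisfied and \cref{thm:discr_characterization} already gives a unique solution of \cref{eq:discr_ode}; it remains only to exhibit it.

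Plugging this gradient into \cref{eq:discr_ode} turns the discriminator dynamics into the linear flow $\partial_t f_t = -4\,\app{\Tk{k}{\hgamma}}{f_t - \rho}$, i.e.\ the NTK regression flow of \citet{Jacot2018} towards the target $\rho$ with rate $4$, of which $\rho$ is the fixed point. I would solve it via the substitution $g_t = f_t - \rho$, reducing to $\partial_t g_t = -4\,\app{\Tk{k}{\hgamma}}{g_t}$. By \cref{hyp:finite_gamma}, the restriction of $\Tk{k}{\hgamma}$ to $\app{L^2}{\hgamma}$ is a finite symmetric positive semi-definite operator (essentially the Gram matrix of $k$ on $\supp\hgamma$), so $\app{\exp}{-4t\Tk{k}{\hgamma}}$ is a genuine matrix exponential and $g_t = \app{\app{\exp}{-4t\Tk{k}{\hgamma}}}{g_0}$, with $g_0$ the restriction of $f_0 - \rho$ to $\supp\hgamma$, solves the reduced flow — which one checks by differentiating the norm-convergent exponential series term by term. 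Adding $\rho$ back recovers the announced formula on $\supp\hgamma$.

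The one point requiring care is that \cref{eq:discr_ode} is an equation in $\app{L^2}{\Omega}$ whereas $\rho$ and the reduced flow only make sense on $\supp\hgamma$. I would resolve this by rewriting the candidate as $f_t = f_0 + \parentheses*{\app{\exp}{-4t\Tk{k}{\hgamma}} - \id}\parentheses*{f_0 - \rho}$ and noting that $\app{\exp}{-4t\Tk{k}{\hgamma}} - \id$ has $\Tk{k}{\hgamma}$ as a left factor, hence maps $\app{L^2}{\hgamma}$ into $\gH_k^{\hgamma} \subseteq \app{L^2}{\Omega}$ and depends on its argument only through its restriction to $\supp\hgamma$; thus $f_t$ is a well-defined element of $\app{L^2}{\Omega}$ with initial value $f_0$. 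Differentiating, $\partial_t f_t$ equals $-4\,\Tk{k}{\hgamma}$ applied to $\app{\exp}{-4t\Tk{k}{\hgamma}}\parentheses*{f_0 - \rho}$, which coincides on $\supp\hgamma$ with $f_t - \rho$; since $\Tk{k}{\hgamma}$ only sees that restriction, this equals $-4\,\app{\Tk{k}{\hgamma}}{f_t - \rho} = \app{\Tk{k}{\hgamma}}{\mgrad{\hgamma}\app{\Lo_{\halpha}}{f_t}}$, so $f_t$ solves \cref{eq:discr_ode} on all of $\Omega$; uniqueness from \cref{thm:discr_characterization} then identifies it as the solution. The main obstacle is therefore not a hard estimate but the functional-gradient bookkeeping — in particular recognizing that the identity $\halpha + \hbeta = 2\hgamma$ is precisely what collapses the a priori $\rho$-dependent coefficient of $f$ to a constant, making the flow linear — together with this routine extension from $\supp\hgamma$ to $\Omega$.
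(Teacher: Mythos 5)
Your proposal is correct and follows essentially the same route as the paper's proof: compute $\mgrad{\hgamma}\app{\Lo_{\halpha}}{f} = 4\rho - 4f$ via the Radon--Nikodym derivatives (using $\rho_1+\rho_2=2$), reduce \cref{eq:discr_ode} to the linear flow $\partial_t f_t = -4\app{\Tk{k}{\hgamma}}{f_t-\rho}$, exhibit the exponential candidate, and invoke \cref{thm:discr_characterization} (Assumption~\ref{hyp:differentiable_a_b} holding since $a',b'$ are affine) for uniqueness. Your rewriting $f_t = f_0 + \parentheses*{\app{\exp}{-4t\Tk{k}{\hgamma}} - \id}\parentheses*{f_0-\rho}$ to ensure membership in $\app{L^2}{\Omega}$ and dependence only on the restriction to $\supp\hgamma$ is exactly the paper's $\varphi_t\colon x \mapsto \erm^{-4tx}-1$ device, so no substantive difference remains.
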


In the previous result, $\rho$ is the optimum of $\Lo_{\halpha_g}$ over $\app{L^2}{\hgamma_g}$.
When $k$ is positive definite over $\hgamma_g$ (see \cref{app:characteristic_ntk}), $f_t$ tends to the optimum for $\Lo_{\halpha_g}$ as its limit is $\rho$ over $\supp \hgamma_g$.
Nonetheless, unlike the discriminator with arbitrary values of \cref{sec:issues_large_discr_scpace}, $f_\infty$ is defined over all $\Omega$ thanks to the integral operator $\Tk{k}{\hgamma_g}$.
It is also the solution to the minimum norm interpolant problem in the RKHS \citep{Jacot2018}, therefore explaining why the discriminator does not overfit in scarce data regimes (see \cref{sec:experiments}), and consequently has bounded gradients despite large training times.
We also prove a generalization of this optimality conclusion for concave bounded losses in \cref{app:optimality_concave}.

Following the discussion initiated in \cref{sec:issues_large_discr_scpace} and applying it to LSGAN using \cref{prop:lsgan}, similarly to the Jensen-Shannon, the resulting generator loss on discrete training data is constant when the discriminator is optimal.
However, the gradients received by the generator are not necessarily null, e.g.\ in the empirical analysis of \cref{sec:experiments}.
This is because the learning problem of the generator induced by the discriminator makes the generator minimize another loss $\mathscr{C}$, as explained in \cref{sec:dynamics_generated_distr}.
This raises the question of determining $\mathscr{C}$ for LSGAN and other standard losses.
Furthermore, the same problem arises in the case of incompletely trained discriminators $f_t$.
Unlike the IPM case for which the results of \citet{Arbel2019} who leveraged the theory of \citet{Ambrosio2008} led to a remarkable solution, this connection remains to be established for other adversarial losses.
We leave this as future work.

\begin{figure*}
    \begin{minipage}[c]{0.48\textwidth}
        \centering
        \includegraphics[width=\textwidth]{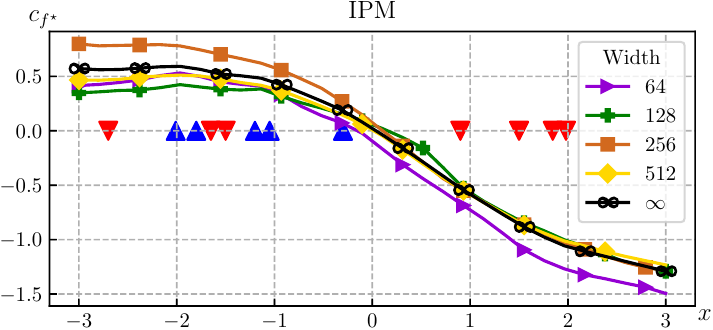}
        \par
        \hfill\includegraphics[width=0.975\textwidth]{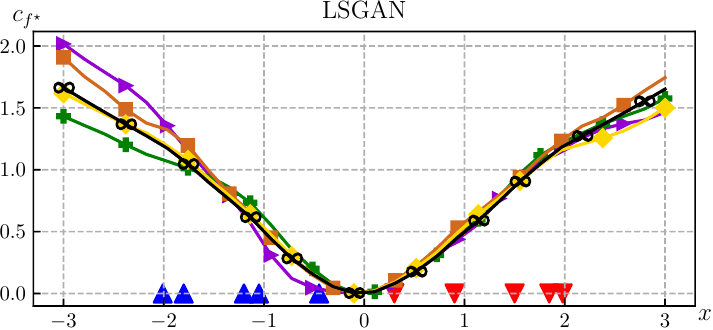}
    \end{minipage}%
    \hfill
    \begin{minipage}[c]{.48\textwidth}
        \centering
        \includegraphics[width=\textwidth]{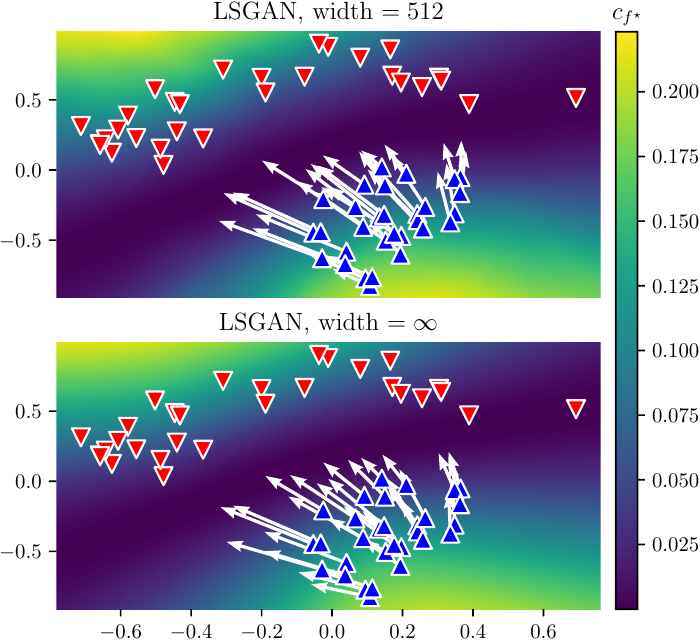}
    \end{minipage}%
    \caption{
        Values of $c_{f^\star}$ for LSGAN and IPM, where $f^\star$ is a $3$-layer ReLU MLP with bias and varying width trained on the dataset represented by \textcolor{red}{\ding{116}} (real) and \textcolor{blue}{\ding{115}} (fake) markers, initialized at $f_0 = 0$.
        The infinite-width network is trained for a time $\tau = 1$ and the finite-width networks using $10$ gradient descent steps with learning rate $\varepsilon = 0.1$, to make training times correspond.
        The gradients $\grad_x c_{f^\star}$ are shown with white arrows on the two-dimensional plots for the fake distribution.
    }
    \label{fig:convergence_1d}
\end{figure*}

\section{Empirical Study}
\label{sec:experiments}

We present a selection of empirical results for different losses and architectures to show the relevance of our framework, with more insights in \cref{app:more_experiments}, by evaluating its adequacy and practical implications on GAN convergence.
All experiments are performed with the proposed Generative Adversarial Neural Tangent Kernel ToolKit \GANTK\, that we release at \url{https://github.com/emited/gantk2} in the hope that the community leverages and expands it for principled GAN analyses.
It is based on the JAX Neural Tangents library \citep{Novak2020}, and is convenient to evaluate architectures and losses based on different visualizations and analyses.

For the sake of efficiency and for these experiments only, we choose $f_0 = 0$ using the antisymmetrical initialization \citep{Zhang2020}.
Indeed, in the analytical computations of the infinite-width regime, taking into account all previous discriminator states for each generator step is computationally infeasible.
This choice also allows us to ignore residual gradients from the initialization, which introduce noise in the optimization process.

\paragraph{Adequacy for fixed distributions.}
We first study the case where generated and target distributions are fixed.
In this setting, we qualitatively study the similarity between the finite- and infinite-width regimes of the discriminator.
\cref{fig:convergence_1d} shows $c_{f^\star}$ and its gradients on one- and two-dimensional data for LSGAN and IPM losses with a ReLU MLP with $3$ hidden layers of varying widths.
We find the behavior of finite-width discriminators to be close to their infinite-width counterpart for standard widths, and converges rapidly to the given limit as the width becomes larger.

In the rest of this section, we focus on the study of convergence of the generated distribution.

\begin{figure*}
    \centering
    \includegraphics[width=\textwidth]{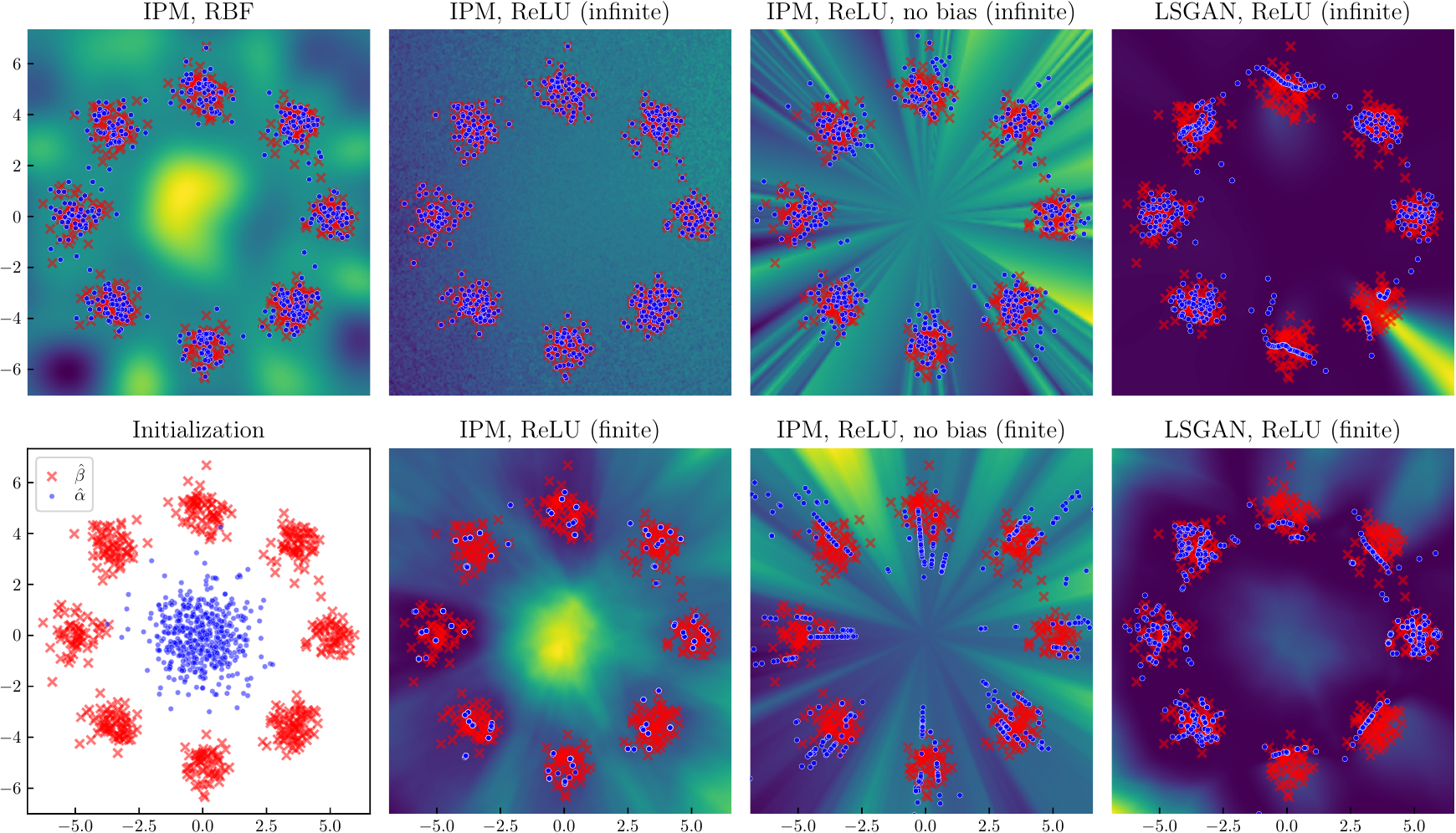}
    \caption{
        Generator (\textcolor{blue}{\ding{108}}) and target (\textcolor{red}{$\times$}) samples for different methods.
        In the background, $c_{f^\star}$.
    }
    \label{fig:8_gaussians}
\end{figure*}

\paragraph{Experimental setting.}
We consider a target distribution sampled from $8$ Gaussians evenly distributed on a centered sphere (cf.\ \cref{fig:8_gaussians}), in a setup similar to that of \citet{Metz2017}, \citet{Srivastava2017} and \citet{Arjovsky2017b}.
We alleviate the complexity of the analysis by following \cref{eq:gen_continuity_equation} with $\Tk{k_{g_{\tg}}}{p_{z}} = \id$, similarly to \citet{Mroueh2019} and \citet{Arbel2019}, thereby modeling the generator's evolution by considering a finite number of samples, initially Gaussian.
For IPM and LSGAN losses, we evaluate the convergence of the generated distributions for a discriminator with ReLU activations in the finite- and infinite-width regime, either with or without bias.
We also comparatively evaluate the advantages of this architecture by considering the case where the infinite-width loss is not given by an NTK, but by the popular Radial Basis Function (RBF) kernel, which is characteristic and presents attractive properties \citep{Muandet2017}.
We refer to \cref{fig:8_gaussians} for qualitative results and \cref{tab:res_8_gaussians_sigmoid} in \cref{app:more_experiments} for a numerical evaluation.
Note that similar results for more datasets, including MNIST and CelebA, and architectures are available in \cref{app:more_experiments}.

\paragraph{Adequacy.}
We observe that correlated performances between the finite- and infinite-width regimes, ReLU networks being considerably better in the latter.
Remarkably, for the infinite-width IPM, generated and target distributions perfectly match.
This can be explained by the high capacity of infinite-width networks; it has already been shown that NTKs benefit from low-data regimes \citep{Arora2020}.

\paragraph{Impact of bias.}
The bias-free discriminator performs worse than with bias, for both regimes and both losses.
This is in line with findings of e.g.\ \citet{Basri2019}, and can be explained in our theoretical framework by comparing their NTKs.
Indeed, the NTK of a bias-free ReLU network is not characteristic, whereas its bias counterpart was proven to present powerful approximation properties \citep{Ji2020}.
Furthermore, results of \cref{sec:differentiability} state that the ReLU NTK with bias is differentiable at $0$, whereas its bias-free version is not, which can disrupt optimization based on its gradients: note in \cref{fig:8_gaussians} the abrupt streaks of the discriminator directed towards $0$ and their consequences on convergence.

\paragraph{NTK vs. RBF.}
We observe the superiority of NTKs over the RBF kernel.
This highlights that the gradients of a ReLU network with bias are particularly well adapted to GANs.
Visualizations of these gradients in the infinite-width limit are available in \cref{sec:gradient_field} and further corroborate these findings.
More generally, we believe that the NTK of ReLU networks could be of particular interest for kernel methods requiring the computation of a spatial gradient, like Stein variational gradient descent \citep{Liu2016}.

\section{Conclusion}
\label{sec:discussion}

Leveraging the theory of infinite-width neural networks, we propose a framework of analysis for GANs explicitly modeling a large variety of discriminator architectures under the alternating optimization setting.
We show that the proposed framework more accurately models GAN training compared to prior approaches by deriving properties of the trained discriminator.
We demonstrate the analysis opportunities of the proposed modeling by studying the generated distribution that we find to follow a gradient flow on probability spaces minimizing some functional that we characterize.
We further study the latter for specific GAN losses and architectures, both theoretically and empirically, notably using our public GAN analysis toolkit.
We believe that this work will serve as a basis for more elaborate analyses, thus leading to more principled, better GAN models.

\section*{Acknowledgements}

We would like to thank all members of the MLIA team from the ISIR laboratory of Sorbonne Université for helpful discussions and comments.

We acknowledge financial support from the DEEPNUM ANR project (ANR-21-CE23-0017-02), the ETH Foundations of Data Science, and the European Union’s Horizon 2020 research and innovation programme under grant agreement 825619 (AI4EU).
This work was granted access to the HPC resources of IDRIS under allocations 2020-AD011011360 and 2021-AD011011360R1 made by GENCI (Grand Equipement National de Calcul Intensif).
Patrick Gallinari is additionally funded by the 2019 ANR AI Chairs program via the DL4CLIM project.

\bibliography{refs}

\begin{thebibliography}{96}
\providecommand{\natexlab}[1]{#1}
\providecommand{\url}[1]{\texttt{#1}}
\expandafter\ifx\csname urlstyle\endcsname\relax
  \providecommand{\doi}[1]{doi: #1}\else
  \providecommand{\doi}{doi: \begingroup \urlstyle{rm}\Url}\fi

\bibitem[Adler(1981)]{Adler1981}
Adler, R.~J.
\newblock \emph{The Geometry Of Random Fields}.
\newblock Society for Industrial and Applied Mathematics, December 1981.

\bibitem[Adler(1990)]{Adler1990}
Adler, R.~J.
\newblock An introduction to continuity, extrema, and related topics for
  general gaussian processes.
\newblock \emph{Lecture Notes-Monograph Series}, 12:\penalty0 i--155, 1990.

\bibitem[Alemohammad et~al.(2021)Alemohammad, Wang, Balestriero, and
  Baraniuk]{Alemohammad2021}
Alemohammad, S., Wang, Z., Balestriero, R., and Baraniuk, R.~G.
\newblock The recurrent neural tangent kernel.
\newblock In \emph{International Conference on Learning Representations}, 2021.

\bibitem[Allen-Zhu et~al.(2019)Allen-Zhu, Li, and Song]{AllenZhu2019}
Allen-Zhu, Z., Li, Y., and Song, Z.
\newblock A convergence theory for deep learning via over-parameterization.
\newblock In Chaudhuri, K. and Salakhutdinov, R. (eds.), \emph{Proceedings of
  the 36th International Conference on Machine Learning}, volume~97 of
  \emph{Proceedings of Machine Learning Research}, pp.\  242--252. PMLR, June
  2019.

\bibitem[Ambrosio \& Crippa(2014)Ambrosio and Crippa]{Ambrosio2014}
Ambrosio, L. and Crippa, G.
\newblock Continuity equations and {ODE} flows with non-smooth velocity.
\newblock \emph{Proceedings of the Royal Society of Edinburgh: Section A
  Mathematics}, 144\penalty0 (6):\penalty0 1191--1244, 2014.

\bibitem[Ambrosio et~al.(2008)Ambrosio, Gigli, and Savaré]{Ambrosio2008}
Ambrosio, L., Gigli, N., and Savaré, G.
\newblock \emph{Gradient Flows}.
\newblock Birkhäuser Basel, Basel, Switzerland, 2008.

\bibitem[Arbel et~al.(2019)Arbel, Korba, Salim, and Gretton]{Arbel2019}
Arbel, M., Korba, A., Salim, A., and Gretton, A.
\newblock Maximum mean discrepancy gradient flow.
\newblock In Wallach, H., Larochelle, H., Beygelzimer, A., d'Alché Buc, F.,
  Fox, E., and Garnett, R. (eds.), \emph{Advances in Neural Information
  Processing Systems}, volume~32, pp.\  6484--6494. Curran Associates, Inc.,
  2019.

\bibitem[Arjovsky \& Bottou(2017)Arjovsky and Bottou]{Arjovsky2017a}
Arjovsky, M. and Bottou, L.
\newblock Towards principled methods for training generative adversarial
  networks.
\newblock In \emph{International Conference on Learning Representations}, 2017.

\bibitem[Arjovsky et~al.(2017)Arjovsky, Chintala, and Bottou]{Arjovsky2017b}
Arjovsky, M., Chintala, S., and Bottou, L.
\newblock Wasserstein generative adversarial networks.
\newblock In Precup, D. and Teh, Y.~W. (eds.), \emph{Proceedings of the 34th
  International Conference on Machine Learning}, volume~70 of \emph{Proceedings
  of Machine Learning Research}, pp.\  214--223. PMLR, August 2017.

\bibitem[Arora et~al.(2017)Arora, Ge, Liang, Ma, and Zhang]{Arora2017}
Arora, S., Ge, R., Liang, Y., Ma, T., and Zhang, Y.
\newblock Generalization and equilibrium in generative adversarial nets
  ({GAN}s).
\newblock In Precup, D. and Teh, Y.~W. (eds.), \emph{Proceedings of the 34th
  International Conference on Machine Learning}, volume~70 of \emph{Proceedings
  of Machine Learning Research}, pp.\  224--232. PMLR, August 2017.

\bibitem[Arora et~al.(2019)Arora, Du, Hu, Li, Salakhutdinov, and
  Wang]{Arora2019}
Arora, S., Du, S.~S., Hu, W., Li, Z., Salakhutdinov, R., and Wang, R.
\newblock On exact computation with an infinitely wide neural net.
\newblock In Wallach, H., Larochelle, H., Beygelzimer, A., d'Alché Buc, F.,
  Fox, E., and Garnett, R. (eds.), \emph{Advances in Neural Information
  Processing Systems}, volume~32, pp.\  8141--8150. Curran Associates, Inc.,
  2019.

\bibitem[Arora et~al.(2020)Arora, Du, Li, Salakhutdinov, Wang, and
  Yu]{Arora2020}
Arora, S., Du, S.~S., Li, Z., Salakhutdinov, R., Wang, R., and Yu, D.
\newblock Harnessing the power of infinitely wide deep nets on small-data
  tasks.
\newblock In \emph{International Conference on Learning Representations}, 2020.

\bibitem[Bai et~al.(2019)Bai, Ma, and Risteski]{Bai2018}
Bai, Y., Ma, T., and Risteski, A.
\newblock Approximability of discriminators implies diversity in {GAN}s.
\newblock In \emph{International Conference on Learning Representations}, 2019.

\bibitem[Balaji et~al.(2021)Balaji, Sajedi, Kalibhat, Ding, Stöger,
  Soltanolkotabi, and Feizi]{Balaji2021}
Balaji, Y., Sajedi, M., Kalibhat, N.~M., Ding, M., Stöger, D., Soltanolkotabi,
  M., and Feizi, S.
\newblock Understanding over-parameterization in generative adversarial
  networks.
\newblock In \emph{International Conference on Learning Representations}, 2021.

\bibitem[Basri et~al.(2020)Basri, Galun, Geifman, Jacobs, Kasten, and
  Kritchman]{Basri2019}
Basri, R., Galun, M., Geifman, A., Jacobs, D., Kasten, Y., and Kritchman, S.
\newblock Frequency bias in neural networks for input of non-uniform density.
\newblock In Daumé, III, H. and Singh, A. (eds.), \emph{Proceedings of the
  37th International Conference on Machine Learning}, volume 119 of
  \emph{Proceedings of Machine Learning Research}, pp.\  685--694. PMLR, July
  2020.

\bibitem[Biau et~al.(2021)Biau, Sangnier, and Tanielian]{Biau2021}
Biau, G., Sangnier, M., and Tanielian, U.
\newblock Some theoretical insights into wasserstein {GANs}.
\newblock \emph{Journal of Machine Learning Research}, 22\penalty0
  (119):\penalty0 1--45, 2021.

\bibitem[Bietti \& Bach(2021)Bietti and Bach]{Bietti2021}
Bietti, A. and Bach, F.
\newblock Deep equals shallow for {ReLU} networks in kernel regimes.
\newblock In \emph{International Conference on Learning Representations}, 2021.

\bibitem[Bietti \& Mairal(2019)Bietti and Mairal]{Bietti2019}
Bietti, A. and Mairal, J.
\newblock On the inductive bias of neural tangent kernels.
\newblock In Wallach, H., Larochelle, H., Beygelzimer, A., d'Alché Buc, F.,
  Fox, E., and Garnett, R. (eds.), \emph{Advances in Neural Information
  Processing Systems}, volume~32, pp.\  12893--12904. Curran Associates, Inc.,
  2019.

\bibitem[Bradbury et~al.(2018)Bradbury, Frostig, Hawkins, Johnson, Leary,
  Maclaurin, Necula, Paszke, VanderPlas, Wanderman-Milne, and
  Zhang]{Bradbury2018}
Bradbury, J., Frostig, R., Hawkins, P., Johnson, M.~J., Leary, C., Maclaurin,
  D., Necula, G., Paszke, A., VanderPlas, J., Wanderman-Milne, S., and Zhang,
  Q.
\newblock {JAX}: composable transformations of {P}ython+{N}um{P}y programs,
  2018.
\newblock URL \url{http://github.com/google/jax}.

\bibitem[Brock et~al.(2019)Brock, Donahue, and Simonyan]{Brock2019}
Brock, A., Donahue, J., and Simonyan, K.
\newblock Large scale {GAN} training for high fidelity natural image synthesis.
\newblock In \emph{International Conference on Learning Representations}, 2019.

\bibitem[Chen \& Xu(2021)Chen and Xu]{Chen2021}
Chen, L. and Xu, S.
\newblock Deep neural tangent kernel and {Laplace} kernel have the same {RKHS}.
\newblock In \emph{International Conference on Learning Representations}, 2021.

\bibitem[Cheng \& Xie(2021)Cheng and Xie]{Cheng2021}
Cheng, X. and Xie, Y.
\newblock Neural tangent kernel maximum mean discrepancy.
\newblock In Ranzato, M., Beygelzimer, A., Dauphin, Y., Liang, P.~S., and
  Wortman~Vaughan, J. (eds.), \emph{Advances in Neural Information Processing
  Systems}, volume~34, pp.\  6658--6670. Curran Associates, Inc., 2021.

\bibitem[Chu et~al.(2020)Chu, Minami, and Fukumizu]{Chu2020}
Chu, C., Minami, K., and Fukumizu, K.
\newblock The equivalence between {Stein} variational gradient descent and
  black-box variational inference.
\newblock \emph{arXiv preprint arXiv:2004.01822}, 2020.

\bibitem[Corless et~al.(1996)Corless, Gonnet, Hare, Jeffrey, and
  Knuth]{Corless1996}
Corless, R.~M., Gonnet, G.~H., Hare, D. E.~G., Jeffrey, D.~J., and Knuth, D.~E.
\newblock On the {Lambert $W$} function.
\newblock \emph{Advances in Computational Mathematics}, 5\penalty0
  (1):\penalty0 329--359, December 1996.

\bibitem[Corless et~al.(2007)Corless, Ding, Higham, and Jeffrey]{Corless2017}
Corless, R.~M., Ding, H., Higham, N.~J., and Jeffrey, D.~J.
\newblock The solution of {$S \app{\exp}{S} = A$} is not always the {Lambert
  $W$} function of {$A$}.
\newblock In \emph{Proceedings of the 2007 International Symposium on Symbolic
  and Algebraic Computation}, ISSAC '07, pp.\  116--121, New York, NY, USA,
  2007. Association for Computing Machinery.

\bibitem[Domingo-Enrich et~al.(2020)Domingo-Enrich, Jelassi, Mensch, Rotskoff,
  and Bruna]{DomingoEnrich2020}
Domingo-Enrich, C., Jelassi, S., Mensch, A., Rotskoff, G., and Bruna, J.
\newblock A mean-field analysis of two-player zero-sum games.
\newblock In Larochelle, H., Ranzato, M., Hadsell, R., Balcan, M.-F., and Lin,
  H.-T. (eds.), \emph{Advances in Neural Information Processing Systems},
  volume~33, pp.\  20215--20226. Curran Associates, Inc., 2020.

\bibitem[Duncan et~al.(2019)Duncan, Nüsken, and Szpruch]{Duncan2019}
Duncan, A., Nüsken, N., and Szpruch, L.
\newblock On the geometry of {Stein} variational gradient descent.
\newblock \emph{arXiv preprint arXiv:1912.00894}, 2019.

\bibitem[Fan \& Wang(2020)Fan and Wang]{Fan2020}
Fan, Z. and Wang, Z.
\newblock Spectra of the conjugate kernel and neural tangent kernel for
  linear-width neural networks.
\newblock In Larochelle, H., Ranzato, M., Hadsell, R., Balcan, M.-F., and Lin,
  H.-T. (eds.), \emph{Advances in Neural Information Processing Systems},
  volume~33, pp.\  7710--7721. Curran Associates, Inc., 2020.

\bibitem[Farkas \& Wegner(2016)Farkas and Wegner]{Farkas2016}
Farkas, B. and Wegner, S.-A.
\newblock Variations on {Barbălat}'s lemma.
\newblock \emph{The American Mathematical Monthly}, 123\penalty0 (8):\penalty0
  825--830, 2016.

\bibitem[Feydy et~al.(2019)Feydy, Séjourné, Vialard, Amari, Trouve, and
  Peyré]{Feydy2018}
Feydy, J., Séjourné, T., Vialard, F.-X., Amari, S.-i., Trouve, A., and
  Peyré, G.
\newblock Interpolating between optimal transport and {MMD} using {Sinkhorn}
  divergences.
\newblock In Chaudhuri, K. and Sugiyama, M. (eds.), \emph{Proceedings of the
  Twenty-Second International Conference on Artificial Intelligence and
  Statistics}, volume~89 of \emph{Proceedings of Machine Learning Research},
  pp.\  2681--2690. PMLR, April 2019.

\bibitem[Geiger et~al.(2020)Geiger, Spigler, Jacot, and Wyart]{Geiger2020}
Geiger, M., Spigler, S., Jacot, A., and Wyart, M.
\newblock Disentangling feature and lazy training in deep neural networks.
\newblock \emph{Journal of Statistical Mechanics: Theory and Experiment},
  2020\penalty0 (11), November 2020.

\bibitem[Goodfellow(2016)]{Goodfellow2016}
Goodfellow, I.
\newblock {NIPS} 2016 tutorial: Generative adversarial networks.
\newblock \emph{arXiv preprint arXiv:1701.00160}, 2016.

\bibitem[Goodfellow et~al.(2014)Goodfellow, Pouget-Abadie, Mirza, Xu,
  Warde-Farley, Ozair, Courville, and Bengio]{Goodfellow2014}
Goodfellow, I., Pouget-Abadie, J., Mirza, M., Xu, B., Warde-Farley, D., Ozair,
  S., Courville, A., and Bengio, Y.
\newblock Generative adversarial nets.
\newblock In Ghahramani, Z., Welling, M., Cortes, C., Lawrence, N.~D., and
  Weinberger, K.~Q. (eds.), \emph{Advances in Neural Information Processing
  Systems}, volume~27, pp.\  2672--2680. Curran Associates, Inc., 2014.

\bibitem[Gretton et~al.(2007)Gretton, Borgwardt, Rasch, Schölkopf, and
  Smola]{Gretton2007}
Gretton, A., Borgwardt, K.~M., Rasch, M., Schölkopf, B., and Smola, A.
\newblock A kernel method for the two-sample-problem.
\newblock In Schölkopf, B., Platt, J.~C., and Hoffman, T. (eds.),
  \emph{Advances in Neural Information Processing Systems}, volume~19, pp.\
  513--520. MIT Press, 2007.

\bibitem[Gretton et~al.(2012)Gretton, Borgwardt, Rasch, Schölkopf, and
  Smola]{Gretton2012}
Gretton, A., Borgwardt, K.~M., Rasch, M.~J., Schölkopf, B., and Smola, A.
\newblock A kernel two-sample test.
\newblock \emph{Journal of Machine Learning Research}, 13\penalty0
  (25):\penalty0 723--773, 2012.

\bibitem[He et~al.(2016)He, Zhang, Ren, and Sun]{He2016}
He, K., Zhang, X., Ren, S., and Sun, J.
\newblock Deep residual learning for image recognition.
\newblock In \emph{IEEE Conference on Computer Vision and Pattern Recognition
  (CVPR)}, pp.\  770--778, June 2016.

\bibitem[Higham(2008)]{Higham2008}
Higham, N.~J.
\newblock \emph{Functions of matrices: theory and computation}.
\newblock Society for Industrial and Applied Mathematics, 2008.

\bibitem[Hornik et~al.(1989)Hornik, Stinchcombe, and White]{Hornik1989}
Hornik, K., Stinchcombe, M., and White, H.
\newblock Multilayer feedforward networks are universal approximators.
\newblock \emph{Neural Networks}, 2\penalty0 (5):\penalty0 359--366, 1989.

\bibitem[Hron et~al.(2020)Hron, Bahri, Sohl-Dickstein, and Novak]{Hron2020}
Hron, J., Bahri, Y., Sohl-Dickstein, J., and Novak, R.
\newblock Infinite attention: {NNGP} and {NTK} for deep attention networks.
\newblock In Daumé, III, H. and Singh, A. (eds.), \emph{Proceedings of the
  37th International Conference on Machine Learning}, volume 119 of
  \emph{Proceedings of Machine Learning Research}, pp.\  4376--4386. PMLR, July
  2020.

\bibitem[Huang et~al.(2020)Huang, Wang, Tao, and Zhao]{Huang2020}
Huang, K., Wang, Y., Tao, M., and Zhao, T.
\newblock Why do deep residual networks generalize better than deep feedforward
  networks? --- a neural tangent kernel perspective.
\newblock In Larochelle, H., Ranzato, M., Hadsell, R., Balcan, M.-F., and Lin,
  H.-T. (eds.), \emph{Advances in Neural Information Processing Systems},
  volume~33, pp.\  2698--2709. Curran Associates, Inc., 2020.

\bibitem[Iacono \& Boyd(2017)Iacono and Boyd]{Iacono2017}
Iacono, R. and Boyd, J.~P.
\newblock New approximations to the principal real-valued branch of the
  {Lambert} {$W$}-function.
\newblock \emph{Advances in Computational Mathematics}, 43\penalty0
  (6):\penalty0 1403--1436, 2017.

\bibitem[Jacot et~al.(2018)Jacot, Gabriel, and Hongler]{Jacot2018}
Jacot, A., Gabriel, F., and Hongler, C.
\newblock Neural tangent kernel: Convergence and generalization in neural
  networks.
\newblock In Bengio, S., Wallach, H., Larochelle, H., Grauman, K.,
  Cesa-Bianchi, N., and Garnett, R. (eds.), \emph{Advances in Neural
  Information Processing Systems}, volume~31, pp.\  8580--8589. Curran
  Associates, Inc., 2018.

\bibitem[Jacot et~al.(2019)Jacot, Gabriel, Ged, and Hongler]{Jacot2020}
Jacot, A., Gabriel, F., Ged, F., and Hongler, C.
\newblock Order and chaos: {NTK} views on {DNN} normalization, checkerboard and
  boundary artifacts.
\newblock \emph{arXiv preprint arXiv:1907.05715}, 2019.

\bibitem[Jain et~al.(2020)Jain, Olmo, Sengupta, Manikonda, and
  Kambhampati]{Jain2020}
Jain, N., Olmo, A., Sengupta, S., Manikonda, L., and Kambhampati, S.
\newblock Imperfect {imaGANation}: Implications of {GANs} exacerbating biases
  on facial data augmentation and {Snapchat} selfie lenses.
\newblock \emph{arXiv preprint arXiv:2001.09528}, 2020.

\bibitem[Ji et~al.(2020)Ji, Telgarsky, and Xian]{Ji2020}
Ji, Z., Telgarsky, M., and Xian, R.
\newblock Neural tangent kernels, transportation mappings, and universal
  approximation.
\newblock In \emph{International Conference on Learning Representations}, 2020.

\bibitem[Karras et~al.(2019)Karras, Laine, and Aila]{Karras2019}
Karras, T., Laine, S., and Aila, T.
\newblock A style-based generator architecture for generative adversarial
  networks.
\newblock In \emph{IEEE/CVF Conference on Computer Vision and Pattern
  Recognition (CVPR)}, pp.\  4396--4405, June 2019.

\bibitem[Karras et~al.(2020)Karras, Laine, Aittala, Hellsten, Lehtinen, and
  Aila]{Karras2020}
Karras, T., Laine, S., Aittala, M., Hellsten, J., Lehtinen, J., and Aila, T.
\newblock Analyzing and improving the image quality of {StyleGAN}.
\newblock In \emph{IEEE/CVF Conference on Computer Vision and Pattern
  Recognition (CVPR)}, pp.\  8107--8116, June 2020.

\bibitem[Kingma \& Welling(2014)Kingma and Welling]{Kingma2014}
Kingma, D.~P. and Welling, M.
\newblock Auto-encoding variational {B}ayes.
\newblock In \emph{International Conference on Learning Representations}, 2014.

\bibitem[Kurach et~al.(2019)Kurach, Lucic, Zhai, Michalski, and
  Gelly]{Kurach2019}
Kurach, K., Lucic, M., Zhai, X., Michalski, M., and Gelly, S.
\newblock A large-scale study on regularization and normalization in {GAN}s.
\newblock In Chaudhuri, K. and Salakhutdinov, R. (eds.), \emph{Proceedings of
  the 36th International Conference on Machine Learning}, volume~97 of
  \emph{Proceedings of Machine Learning Research}, pp.\  3581--3590. PMLR, June
  2019.

\bibitem[LeCun et~al.(1998)LeCun, Bottou, Bengio, and Haffner]{LeCun1998}
LeCun, Y., Bottou, L., Bengio, Y., and Haffner, P.
\newblock Gradient-based learning applied to document recognition.
\newblock \emph{Proceedings of the IEEE}, 86\penalty0 (11):\penalty0
  2278--2324, November 1998.

\bibitem[Lee et~al.(2018)Lee, Bahri, Novak, Schoenholz, Pennington, and
  Sohl-Dickstein]{Lee2018b}
Lee, J., Bahri, Y., Novak, R., Schoenholz, S.~S., Pennington, J., and
  Sohl-Dickstein, J.
\newblock Deep neural networks as {Gaussian} processes.
\newblock In \emph{International Conference on Learning Representations}, 2018.

\bibitem[Lee et~al.(2019)Lee, Xiao, Schoenholz, Bahri, Novak, Sohl-Dickstein,
  and Pennington]{Lee2019}
Lee, J., Xiao, L., Schoenholz, S.~S., Bahri, Y., Novak, R., Sohl-Dickstein, J.,
  and Pennington, J.
\newblock Wide neural networks of any depth evolve as linear models under
  gradient descent.
\newblock In Wallach, H., Larochelle, H., Beygelzimer, A., d'Alché Buc, F.,
  Fox, E., and Garnett, R. (eds.), \emph{Advances in Neural Information
  Processing Systems}, volume~32, pp.\  8572--8583. Curran Associates, Inc.,
  2019.

\bibitem[Lee et~al.(2020)Lee, Schoenholz, Pennington, Adlam, Xiao, Novak, and
  Sohl-Dickstein]{Lee2020}
Lee, J., Schoenholz, S.~S., Pennington, J., Adlam, B., Xiao, L., Novak, R., and
  Sohl-Dickstein, J.
\newblock Finite versus infinite neural networks: an empirical study.
\newblock In Wallach, H., Larochelle, H., Beygelzimer, A., d'Alché Buc, F.,
  Fox, E., and Garnett, R. (eds.), \emph{Advances in Neural Information
  Processing Systems}, volume~33, pp.\  15156--15172. Curran Associates, Inc.,
  2020.

\bibitem[Leipnik \& Pearce(2007)Leipnik and Pearce]{Leipnik2007}
Leipnik, R.~B. and Pearce, C. E.~M.
\newblock The multivariate {Faà di Bruno} formula and multivariate {Taylor}
  expansions with explicit integral remainder term.
\newblock \emph{The ANZIAM Journal}, 48\penalty0 (3):\penalty0 327--341, 2007.

\bibitem[Leshno et~al.(1993)Leshno, Lin, Pinkus, and Schocken]{Leshno1993}
Leshno, M., Lin, V.~Y., Pinkus, A., and Schocken, S.
\newblock Multilayer feedforward networks with a nonpolynomial activation
  function can approximate any function.
\newblock \emph{Neural Networks}, 6\penalty0 (6):\penalty0 861--867, 1993.

\bibitem[Li et~al.(2017)Li, Chang, Cheng, Yang, and Páczos]{Li2017}
Li, C.-L., Chang, W.-C., Cheng, Y., Yang, Y., and Páczos, B.
\newblock {MMD GAN}: Towards deeper understanding of moment matching network.
\newblock In Guyon, I., von Luxburg, U., Bengio, S., Wallach, H., Fergus, R.,
  Vishwanathan, S. V.~N., and Garnett, R. (eds.), \emph{Advances in Neural
  Information Processing Systems}, volume~30, pp.\  2200--2210. Curran
  Associates, Inc., 2017.

\bibitem[Lim \& Ye(2017)Lim and Ye]{Lim2017}
Lim, J.~H. and Ye, J.~C.
\newblock Geometric {GAN}.
\newblock \emph{arXiv preprint arXiv:1705.02894}, 2017.

\bibitem[Littwin et~al.(2020{\natexlab{a}})Littwin, Galanti, Wolf, and
  Yang]{Littwin2020a}
Littwin, E., Galanti, T., Wolf, L., and Yang, G.
\newblock On infinite-width hypernetworks.
\newblock In Larochelle, H., Ranzato, M., Hadsell, R., Balcan, M.-F., and Lin,
  H.-T. (eds.), \emph{Advances in Neural Information Processing Systems},
  volume~33, pp.\  13226--13237. Curran Associates, Inc., 2020{\natexlab{a}}.

\bibitem[Littwin et~al.(2020{\natexlab{b}})Littwin, Myara, Sabah, Susskind,
  Zhai, and Golan]{Littwin2020b}
Littwin, E., Myara, B., Sabah, S., Susskind, J., Zhai, S., and Golan, O.
\newblock Collegial ensembles.
\newblock In Larochelle, H., Ranzato, M., Hadsell, R., Balcan, M.-F., and Lin,
  H.-T. (eds.), \emph{Advances in Neural Information Processing Systems},
  volume~33, pp.\  18738--18748. Curran Associates, Inc., 2020{\natexlab{b}}.

\bibitem[Liu et~al.(2020)Liu, Zhu, and Belkin]{Liu2020}
Liu, C., Zhu, L., and Belkin, M.
\newblock On the linearity of large non-linear models: when and why the tangent
  kernel is constant.
\newblock In Larochelle, H., Ranzato, M., Hadsell, R., Balcan, M.-F., and Lin,
  H.-T. (eds.), \emph{Advances in Neural Information Processing Systems},
  volume~33, pp.\  15954--15964. Curran Associates, Inc., 2020.

\bibitem[Liu et~al.(2021)Liu, Huang, Yu, Wang, and Mallya]{Liu2021}
Liu, M.-Y., Huang, X., Yu, J., Wang, T.-C., and Mallya, A.
\newblock Generative adversarial networks for image and video synthesis:
  Algorithms and applications.
\newblock \emph{Proceedings of the IEEE}, 109\penalty0 (5):\penalty0 839--862,
  2021.

\bibitem[Liu \& Wang(2016)Liu and Wang]{Liu2016}
Liu, Q. and Wang, D.
\newblock Stein variational gradient descent: A general purpose {Bayesian}
  inference algorithm.
\newblock In Lee, D.~D., Sugiyama, M., von Luxburg, U., Guyon, I., and Garnett,
  R. (eds.), \emph{Advances in Neural Information Processing Systems},
  volume~29, pp.\  2378--2386. Curran Associates, Inc., 2016.

\bibitem[Liu et~al.(2017)Liu, Bousquet, and Chaudhuri]{Liu2017}
Liu, S., Bousquet, O., and Chaudhuri, K.
\newblock Approximation and convergence properties of generative adversarial
  learning.
\newblock In Guyon, I., von Luxburg, U., Bengio, S., Wallach, H., Fergus, R.,
  Vishwanathan, S. V.~N., and Garnett, R. (eds.), \emph{Advances in Neural
  Information Processing Systems}, volume~30, pp.\  5551--5559. Curran
  Associates, Inc., 2017.

\bibitem[Liu et~al.(2015)Liu, Luo, Wang, and Tang]{Liu2015}
Liu, Z., Luo, P., Wang, X., and Tang, X.
\newblock Deep learning face attributes in the wild.
\newblock In \emph{IEEE International Conference on Computer Vision (ICCV)},
  pp.\  3730--3738, December 2015.

\bibitem[Lucic et~al.(2018)Lucic, Kurach, Michalski, Gelly, and
  Bousquet]{Lucic2018}
Lucic, M., Kurach, K., Michalski, M., Gelly, S., and Bousquet, O.
\newblock Are {GAN}s created equal? a large-scale study.
\newblock In Bengio, S., Wallach, H., Larochelle, H., Grauman, K.,
  Cesa-Bianchi, N., and Garnett, R. (eds.), \emph{Advances in Neural
  Information Processing Systems}, volume~31, pp.\  698--707. Curran
  Associates, Inc., 2018.

\bibitem[Mao et~al.(2017)Mao, Li, Xie, Lau, Wang, and Paul~Smolley]{Mao2017}
Mao, X., Li, Q., Xie, H., Lau, R. Y.~K., Wang, Z., and Paul~Smolley, S.
\newblock Least squares generative adversarial networks.
\newblock In \emph{IEEE International Conference on Computer Vision (ICCV)},
  pp.\  2813--2821, October 2017.

\bibitem[Mescheder et~al.(2017)Mescheder, Nowozin, and Geiger]{Mescheder2017}
Mescheder, L., Nowozin, S., and Geiger, A.
\newblock The numerics of {GANs}.
\newblock In Guyon, I., von Luxburg, U., Bengio, S., Wallach, H., Fergus, R.,
  Vishwanathan, S. V.~N., and Garnett, R. (eds.), \emph{Advances in Neural
  Information Processing Systems}, volume~30, pp.\  1823--1833. Curran
  Associates, Inc., 2017.

\bibitem[Mescheder et~al.(2018)Mescheder, Geiger, and Nowozin]{Mescheder2018}
Mescheder, L., Geiger, A., and Nowozin, S.
\newblock Which training methods for {GAN}s do actually converge?
\newblock In Dy, J. and Krause, A. (eds.), \emph{Proceedings of the 35th
  International Conference on Machine Learning}, volume~80 of \emph{Proceedings
  of Machine Learning Research}, pp.\  3481--3490. PMLR, July 2018.

\bibitem[Metz et~al.(2017)Metz, Poole, Pfau, and Sohl-Dickstein]{Metz2017}
Metz, L., Poole, B., Pfau, D., and Sohl-Dickstein, J.
\newblock Unrolled generative adversarial networks.
\newblock In \emph{International Conference on Learning Representations}, 2017.

\bibitem[Mroueh \& Nguyen(2021)Mroueh and Nguyen]{Mroueh2021}
Mroueh, Y. and Nguyen, T.
\newblock On the convergence of gradient descent in {GANs}: {MMD GAN} as a
  gradient flow.
\newblock In Banerjee, A. and Fukumizu, K. (eds.), \emph{Proceedings of The
  24th International Conference on Artificial Intelligence and Statistics},
  volume 130 of \emph{Proceedings of Machine Learning Research}, pp.\
  1720--1728. PMLR, April 2021.

\bibitem[Mroueh et~al.(2019)Mroueh, Sercu, and Raj]{Mroueh2019}
Mroueh, Y., Sercu, T., and Raj, A.
\newblock Sobolev descent.
\newblock In Chaudhuri, K. and Sugiyama, M. (eds.), \emph{Proceedings of the
  Twenty-Second International Conference on Artificial Intelligence and
  Statistics}, volume~89 of \emph{Proceedings of Machine Learning Research},
  pp.\  2976--2985. PMLR, April 2019.

\bibitem[Muandet et~al.(2017)Muandet, Fukumizu, Sriperumbudur, and
  Schölkopf]{Muandet2017}
Muandet, K., Fukumizu, K., Sriperumbudur, B., and Schölkopf, B.
\newblock Kernel mean embedding of distributions: A review and beyond.
\newblock \emph{Foundations and Trends® in Machine Learning}, 10\penalty0
  (1--2):\penalty0 1--141, 2017.

\bibitem[Müller(1997)]{Muller1997}
Müller, A.
\newblock Integral probability metrics and their generating classes of
  functions.
\newblock \emph{Advances in Applied Probability}, 29\penalty0 (2):\penalty0
  429--443, 1997.

\bibitem[Novak et~al.(2020)Novak, Xiao, Hron, Lee, Alemi, Sohl-Dickstein, and
  Schoenholz]{Novak2020}
Novak, R., Xiao, L., Hron, J., Lee, J., Alemi, A.~A., Sohl-Dickstein, J., and
  Schoenholz, S.~S.
\newblock {Neural Tangents}: Fast and easy infinite neural networks in
  {Python}.
\newblock In \emph{International Conference on Learning Representations}, 2020.
\newblock URL \url{https://github.com/google/neural-tangents}.

\bibitem[Nowozin et~al.(2016)Nowozin, Cseke, and Tomioka]{Nowozin2016}
Nowozin, S., Cseke, B., and Tomioka, R.
\newblock $f$-{GAN}: Training generative neural samplers using variational
  divergence minimization.
\newblock In Lee, D.~D., Sugiyama, M., von Luxburg, U., Guyon, I., and Garnett,
  R. (eds.), \emph{Advances in Neural Information Processing Systems},
  volume~29, pp.\  271--279. Curran Associates, Inc., 2016.

\bibitem[Radford et~al.(2016)Radford, Metz, and Chintala]{Radford2016}
Radford, A., Metz, L., and Chintala, S.
\newblock Unsupervised representation learning with deep convolutional
  generative adversarial networks.
\newblock In \emph{International Conference on Learning Representations}, 2016.

\bibitem[Rezende et~al.(2014)Rezende, Mohamed, and Wierstra]{Rezende2014}
Rezende, D.~J., Mohamed, S., and Wierstra, D.
\newblock Stochastic backpropagation and approximate inference in deep
  generative models.
\newblock In Xing, E.~P. and Jebara, T. (eds.), \emph{Proceedings of the 31st
  International Conference on Machine Learning}, volume~32 of \emph{Proceedings
  of Machine Learning Research}, pp.\  1278--1286, Beijing, China, June 2014.
  PMLR.

\bibitem[Sahiner et~al.(2022)Sahiner, Ergen, Ozturkler, Bartan, Pauly, Mardani,
  and Pilanci]{Sahiner2022}
Sahiner, A., Ergen, T., Ozturkler, B., Bartan, B., Pauly, J.~M., Mardani, M.,
  and Pilanci, M.
\newblock Hidden convexity of wasserstein {GAN}s: Interpretable generative
  models with closed-form solutions.
\newblock In \emph{International Conference on Learning Representations}, 2022.

\bibitem[Scheuerer(2009)]{Scheuerer2009}
Scheuerer, M.
\newblock \emph{A Comparison of Models and Methods for Spatial Interpolation in
  Statistics and Numerical Analysis}.
\newblock PhD thesis, Georg-August-Universität Göttingen, October 2009.
\newblock URL
  \url{https://ediss.uni-goettingen.de/handle/11858/00-1735-0000-0006-B3D5-1}.

\bibitem[Sohl-Dickstein et~al.(2020)Sohl-Dickstein, Novak, Schoenholz, and
  Lee]{Sohl2020}
Sohl-Dickstein, J., Novak, R., Schoenholz, S.~S., and Lee, J.
\newblock On the infinite width limit of neural networks with a standard
  parameterization.
\newblock \emph{arXiv preprint arXiv:2001.07301}, 2020.

\bibitem[Sriperumbudur et~al.(2010)Sriperumbudur, Gretton, Fukumizu,
  Schölkopf, and Lanckriet]{Sriperumbudur2010}
Sriperumbudur, B.~K., Gretton, A., Fukumizu, K., Schölkopf, B., and Lanckriet,
  G. R.~G.
\newblock Hilbert space embeddings and metrics on probability measures.
\newblock \emph{Journal of Machine Learning Research}, 11\penalty0
  (50):\penalty0 1517--1561, 2010.

\bibitem[Sriperumbudur et~al.(2011)Sriperumbudur, Fukumizu, and
  Lanckriet]{Sriperumbudur2011}
Sriperumbudur, B.~K., Fukumizu, K., and Lanckriet, G. R.~G.
\newblock Universality, characteristic kernels and {RKHS} embedding of
  measures.
\newblock \emph{Journal of Machine Learning Research}, 12\penalty0
  (70):\penalty0 2389--2410, 2011.

\bibitem[Srivastava et~al.(2017)Srivastava, Valkov, Russell, Gutmann, and
  Sutton]{Srivastava2017}
Srivastava, A., Valkov, L., Russell, C., Gutmann, M.~U., and Sutton, C.
\newblock {VEEGAN}: Reducing mode collapse in {GANs} using implicit variational
  learning.
\newblock In Guyon, I., von Luxburg, U., Bengio, S., Wallach, H., Fergus, R.,
  Vishwanathan, S. V.~N., and Garnett, R. (eds.), \emph{Advances in Neural
  Information Processing Systems}, volume~30, pp.\  3310--3320. Curran
  Associates, Inc., 2017.

\bibitem[Steinwart(2001)]{Steinwart2001}
Steinwart, I.
\newblock On the influence of the kernel on the consistency of support vector
  machines.
\newblock \emph{Journal of Machine Learning Research}, 2:\penalty0 67--93,
  November 2001.

\bibitem[Sun et~al.(2020)Sun, Fang, and Schwing]{Sun2020}
Sun, R., Fang, T., and Schwing, A.
\newblock Towards a better global loss landscape of {GANs}.
\newblock In Larochelle, H., Ranzato, M., Hadsell, R., Balcan, M.-F., and Lin,
  H.-T. (eds.), \emph{Advances in Neural Information Processing Systems},
  volume~33, pp.\  10186--10198. Curran Associates, Inc., 2020.

\bibitem[Sønderby et~al.(2017)Sønderby, Caballero, Theis, Shi, and
  Huszár]{Sonderby2017}
Sønderby, C.~K., Caballero, J., Theis, L., Shi, W., and Huszár, F.
\newblock Amortised {MAP} inference for image super-resolution.
\newblock In \emph{International Conference on Learning Representations}, 2017.

\bibitem[Tancik et~al.(2020)Tancik, Srinivasan, Mildenhall, Fridovich-Keil,
  Raghavan, Singhal, Ramamoorthi, Barron, and Ng]{Tancik2020}
Tancik, M., Srinivasan, P.~P., Mildenhall, B., Fridovich-Keil, S., Raghavan,
  N., Singhal, U., Ramamoorthi, R., Barron, J.~T., and Ng, R.
\newblock Fourier features let networks learn high frequency functions in low
  dimensional domains.
\newblock In Larochelle, H., Ranzato, M., Hadsell, R., Balcan, M.-F., and Lin,
  H.-T. (eds.), \emph{Advances in Neural Information Processing Systems},
  volume~33, pp.\  7537--7547. Curran Associates, Inc., 2020.

\bibitem[Tolosana et~al.(2020)Tolosana, Vera-Rodriguez, Fierrez, Morales, and
  Ortega-Garcia]{Tolosana2020}
Tolosana, R., Vera-Rodriguez, R., Fierrez, J., Morales, A., and Ortega-Garcia,
  J.
\newblock {DeepFakes} and beyond: A survey of face manipulation and fake
  detection.
\newblock \emph{Information Fusion}, 64:\penalty0 131--148, 2020.

\bibitem[Villani(2009)]{Villani2009}
Villani, C.
\newblock \emph{The Wasserstein distances}, pp.\  93--111.
\newblock Grundlehren der mathematischen Wissenschaften. Springer Berlin
  Heidelberg, Berlin - Heidelberg, Germany, 2009.

\bibitem[Wang et~al.(2021)Wang, She, and Ward]{Wang2021}
Wang, Z., She, Q., and Ward, T.~E.
\newblock Generative adversarial networks in computer vision: A survey and
  taxonomy.
\newblock \emph{ACM Computing Surveys}, 54\penalty0 (2), April 2021.

\bibitem[Yang(2020)]{Yang2020a}
Yang, G.
\newblock Tensor programs {II}: Neural tangent kernel for any architecture.
\newblock \emph{arXiv preprint arXiv:2006.14548}, 2020.

\bibitem[Yang \& Hu(2021)Yang and Hu]{Yang2021}
Yang, G. and Hu, E.~J.
\newblock Tensor programs iv: Feature learning in infinite-width neural
  networks.
\newblock In Meila, M. and Zhang, T. (eds.), \emph{Proceedings of the 38th
  International Conference on Machine Learning}, volume 139 of
  \emph{Proceedings of Machine Learning Research}, pp.\  11727--11737. PMLR,
  July 2021.

\bibitem[Yang \& Salman(2019)Yang and Salman]{Yang2019}
Yang, G. and Salman, H.
\newblock A fine-grained spectral perspective on neural networks.
\newblock \emph{arXiv preprint arXiv:1907.10599}, 2019.

\bibitem[Yang \& E(2022)Yang and E]{Yang2022}
Yang, H. and E, W.
\newblock Generalization error of {GAN} from the discriminator’s perspective.
\newblock \emph{Research in the Mathematical Sciences}, 9\penalty0 (8), 2022.

\bibitem[Zhang et~al.(2020)Zhang, Xu, Luo, and Ma]{Zhang2020}
Zhang, Y., Xu, Z.-Q.~J., Luo, T., and Ma, Z.
\newblock A type of generalization error induced by initialization in deep
  neural networks.
\newblock In Lu, J. and Ward, R. (eds.), \emph{Proceedings of The First
  Mathematical and Scientific Machine Learning Conference}, volume 107 of
  \emph{Proceedings of Machine Learning Research}, pp.\  144--164, Princeton
  University, Princeton, NJ, USA, July 2020. PMLR.

\bibitem[Zhou et~al.(2019)Zhou, Liang, Song, Yu, Wang, Zhang, Yu, and
  Zhang]{Zhou2019}
Zhou, Z., Liang, J., Song, Y., Yu, L., Wang, H., Zhang, W., Yu, Y., and Zhang,
  Z.
\newblock {L}ipschitz generative adversarial nets.
\newblock In Chaudhuri, K. and Salakhutdinov, R. (eds.), \emph{Proceedings of
  the 36th International Conference on Machine Learning}, volume~97 of
  \emph{Proceedings of Machine Learning Research}, pp.\  7584--7593, Long
  Beach, California, USA, June 2019. PMLR.

\end{thebibliography}
\bibliographystyle{icml2022}

\newpage
\appendix
\onecolumn

\renewcommand{\partname}{}
\renewcommand{\thepart}{}
\addcontentsline{toc}{section}{Supplementary Material} 
\part{Appendix} 
\parttoc 
\clearpage

In the course of this appendix, we drop the subscript $g$ for $\hgamma_g$, $\halpha_g$ and other notations when the dependency on a fixed generator $g$ is clear and indicated in the main paper, for the sake of clarity.

\section{Proofs of Theoretical Results and Additional Results}
\label{app:proofs}

We prove in this section all theoretical results mentioned in \cref{sec:discriminator_theory,sec:case_analysis}.
\cref{app:proof_ode_solution} is devoted to the proof of \cref{thm:discr_characterization}, \cref{app:proof_differentiability} focuses on proving the differentiability results skimmed in \cref{sec:differentiability}, \cref{app:dynamics_generated_proof} contains the demonstration of \cref{prop:gen_dynamics}, and \cref{app:optimality_concave,app:ode_solve} develop the results presented in \cref{sec:case_analysis}.

We will need in the course of these proofs the following standard definition.
For any measurable function $T$ and measure $\mu$, $T_\sharp\mu$ denotes the push-forward measure which is defined as $\app{T_\sharp \mu}{B} = \app{\mu}{\app{T^{-1}}{B}}$, for any measurable set $B$.

\subsection{Recall of Assumptions in the Paper}
\label{sec:assumptions}

\begin{repassumption}{hyp:finite_gamma}[Finite training set]
    $\hgamma \in \app{\gP}{\Omega}$ is a finite mixture of Diracs.
\end{repassumption}

\begin{repassumption}{hyp:kernel}[Kernel]
    $k\colon \Omega^2 \to \R$ is a symmetric positive semi-definite kernel with $k\in\app{L^2}{\Omega^2}$.
\end{repassumption}

\begin{repassumption}{hyp:differentiable_a_b}[Loss regularity]
    $a$ and $b$ from \cref{eq:sup_discr} are differentiable with Lipschitz derivatives over $\R$.
\end{repassumption}

\begin{repassumption}{hyp:discr_archi}[Discriminator architecture]
    The discriminator is a standard architecture (fully connected, convolutional or residual).
    Any activation $\phi$ in the network satisfies the following properties:
    \begin{itemize}
        \item $\phi$ is smooth everywhere except on a finite set $D$;
        \item for all $j\in\sN$, there exist scalars $\lambda_1^{\parentheses*{j}}$ and $\lambda_2^{\parentheses*{j}}$ such that:
        \begin{equation}
            \label{eq:phi_bound}
            \forall x\in\R\setminus D, \lrverts*{\app{\phi^{\parentheses*{j}}}{x}} \leq \lambda_1^{\parentheses*{j}}\lrverts*{x}+\lambda_2^{\parentheses*{j}},
        \end{equation}
        where $\phi^{\parentheses*{j}}$ is the $j$-th derivative of $\phi$.
    \end{itemize}
\end{repassumption}

\begin{repassumption}{hyp:discr_archi_smooth_act}[Discriminator regularity]
    $D = \emptyset$, i.e.\ $\phi$ is smooth.
\end{repassumption}

\begin{repassumption}{hyp:discr_archi_bias}[Discriminator bias]
    Linear layers have non-null bias terms.
    Moreover, for all $x, y \in \R$ such that $x \neq y$, the following holds:
    \begin{equation}
        \label{eq:hyp_relu}
        \E_{\varepsilon \sim \app{\gN}{0, 1}} \app{\phi}{x \varepsilon}^2 \neq \E_{\varepsilon \sim \app{\gN}{0, 1}} \app{\phi}{y \varepsilon}^2.
    \end{equation}
\end{repassumption}

\begin{remark}[Typical activations]
    \cref{hyp:discr_archi,hyp:discr_archi_smooth_act,hyp:discr_archi_bias} cover multiple standard activation functions, including $\tanh$, softplus, ReLU, leaky ReLU and sigmoid.
\end{remark}

\subsection{On the Solutions of \texorpdfstring{\cref{eq:discr_ode}}{Equation (\ref{eq:discr_ode})}}
\label{app:proof_ode_solution}

The methods used in this section are adaptations to our setting of standard methods of proof.
In particular, they can be easily adapted to slightly different contexts, the main ingredient being the structure of the kernel integral operator.
Moreover, it is also worth noting that, although we relied on \cref{hyp:finite_gamma} for $\hgamma$, the results are essentially unchanged if we take a compactly supported measure $\gamma$ instead.

We decompose the proof into several intermediate results.
\cref{thm:existence_uniqueness,prop:integral_inversion}, stated and demonstrated in this section, correspond when combined to \cref{thm:discr_characterization}.

Let us first prove the following two intermediate lemmas.

\begin{lemma}
    \label{lem_completude}
    Let $\delta T>0$ and $\mathcal{F}_{\delta T} = \app{\mathcal{C}}{\brackets*{0,\delta T},\app{B_{\app{L^2}{\hgamma}}}{f_0,1}}$ endowed with the norm:
    \begin{equation}
    \forall u\in\mathcal{F}_{\delta T},\ \euclideannorm*{u} = \sup_{t\in\brackets*{0,\delta T}} \euclideannorm*{u_t}_{\app{L^2}{\hgamma}}.
    \end{equation}
    Then $\mathcal{F}_{\delta T}$ is complete.
\end{lemma}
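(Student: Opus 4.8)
The plan is to treat this as the classical fact that the space of continuous maps from a compact interval into a complete metric space, equipped with the uniform metric, is itself complete; the only thing that genuinely needs checking is that the codomain $\app{B_{\app{L^2}{\hgamma}}}{f_0, 1}$ is a complete metric space. By \cref{hyp:finite_gamma}, $\hgamma$ is a finite mixture of Diracs, so $\app{L^2}{\hgamma}$ is finite-dimensional, in particular a Banach space; a closed ball therein is a closed subset of a complete space, hence complete for the induced metric. So the reduction is immediate and the rest is a routine verification.

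First I would take an arbitrary Cauchy sequence $\parentheses*{u^{(n)}}_{n}$ in $\mathcal{F}_{\delta T}$ and build a candidate limit pointwise. For each fixed $t \in \brackets*{0, \delta T}$, the inequality $\euclideannorm*{u^{(n)}_t - u^{(m)}_t}_{\app{L^2}{\hgamma}} \le \euclideannorm*{u^{(n)} - u^{(m)}}$ shows that $\parentheses*{u^{(n)}_t}_n$ is Cauchy in the closed ball $\app{B_{\app{L^2}{\hgamma}}}{f_0, 1}$, hence converges to some $u_t$ that still lies in this ball by closedness. Next I would upgrade pointwise convergence to uniform convergence: given $\epsilon > 0$, choose $N$ with $\euclideannorm*{u^{(n)} - u^{(m)}} \le \epsilon$ for all $n, m \ge N$; letting $m \to \infty$ in $\euclideannorm*{u^{(n)}_t - u^{(m)}_t}_{\app{L^2}{\hgamma}} \le \epsilon$, which is legitimate by continuity of the norm, yields $\euclideannorm*{u^{(n)}_t - u_t}_{\app{L^2}{\hgamma}} \le \epsilon$ for every $t$ and every $n \ge N$, i.e.\ $\euclideannorm*{u^{(n)} - u} \le \epsilon$.

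Finally I would check that $u$ actually belongs to $\mathcal{F}_{\delta T}$, i.e.\ that $t \mapsto u_t$ is continuous, via the standard three-$\epsilon$ argument: fixing $t_0$ and $\epsilon > 0$, pick $n$ with $\euclideannorm*{u^{(n)} - u} \le \epsilon / 3$, use continuity of $u^{(n)}$ to obtain $\delta > 0$ with $\euclideannorm*{u^{(n)}_t - u^{(n)}_{t_0}}_{\app{L^2}{\hgamma}} \le \epsilon / 3$ whenever $\abs{t - t_0} \le \delta$, and conclude by the triangle inequality. Combined with the uniform estimate of the previous step, this shows $u^{(n)} \to u$ in $\mathcal{F}_{\delta T}$, establishing completeness. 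I do not expect any genuine obstacle here; the only points demanding a little care are that the pointwise limits remain inside the closed ball (so that $u$ has the right codomain) and that continuity survives a uniform limit, both of which are classical.
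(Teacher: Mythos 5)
Your proposal is correct and follows essentially the same route as the paper's proof: extract a pointwise limit using completeness of $\app{L^2}{\hgamma}$, pass to the limit in the Cauchy estimate to get uniform convergence, note the limit stays in the closed ball, and recover continuity by the three-$\epsilon$ argument. The only cosmetic difference is your appeal to finite-dimensionality of $\app{L^2}{\hgamma}$, which is harmless but unnecessary since $\app{L^2}{\hgamma}$ is complete for any measure.
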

\begin{proof}
    Let $\parentheses*{u^n}_n$ be a Cauchy sequence in $\mathcal{F}_{\delta T}$. For a fixed $t\in\brackets*{0,\delta T}$:
    \begin{equation}
        \forall n,m, \euclideannorm*{u^n_t-u^m_t}_{\app{L^2}{\hgamma}} \leq \euclideannorm*{u^n-u^m},
    \end{equation}
    which shows that $\parentheses*{u^n_t}_n$ is a Cauchy sequence in ${\app{L^2}{\hgamma}}$.
    ${\app{L^2}{\hgamma}}$ being complete, $\parentheses*{u^n_t}_n$ converges to a $u^\infty_t\in{\app{L^2}{\hgamma}}$.
    Moreover, for $\varepsilon>0$, because $\parentheses*{u^n}$ is Cauchy, we can choose $N$ such that:
    \begin{equation}
        \forall n,m\geq N, \euclideannorm*{u^n-u^m} \leq \varepsilon.
    \end{equation}
    We thus have that:
    \begin{equation}
        \forall t, \forall n,m\geq N, \euclideannorm*{u_t^n-u_t^m}_{\app{L^2}{\hgamma}}\leq \varepsilon.
    \end{equation}
    Then, by taking $m$ to $\infty$, by continuity of the $\app{L^2}{\hgamma}$ norm:
    \begin{equation}
        \forall t, \forall n\geq N, \euclideannorm*{u_t^n-u_t^\infty}_{\app{L^2}{\hgamma}}\leq \varepsilon,
    \end{equation}
    which means that:
    \begin{equation}
        \forall n\geq N, \euclideannorm*{u^n-u^\infty}\leq \varepsilon.
    \end{equation}
    so that $\parentheses*{u^n}_n$ tends to $u^\infty$.

    Moreover, as:
    \begin{equation}
        \forall n, \euclideannorm*{u^n_t}_{\app{L^2}{\hgamma}}\leq 1,
    \end{equation}
    we have that $\euclideannorm*{u^\infty_t}_{\app{L^2}{\hgamma}}\leq 1$.

    Finally, let us consider $s,t\in\brackets*{0,\delta T}$. We have that:
    \begin{equation}
        \forall n, \euclideannorm*{u^\infty_t-u^\infty_s}_{\app{L^2}{\hgamma}}\leq \euclideannorm*{u^\infty_t-u^n_t}_{\app{L^2}{\hgamma}} + \euclideannorm*{u^n_t-u^n_s}_{\app{L^2}{\hgamma}} + \euclideannorm*{u^\infty_s-u^n_s}_{\app{L^2}{\hgamma}}.
    \end{equation}
    The first and the third terms can then be taken as small as needed by definition of $u^\infty$ by taking $n$ high enough, while the second can be made to tend to $0$ as $t$ tends to $s$ by continuity of $u^n$.
    This proves the continuity of $u^\infty$ and shows that $u^\infty\in\mathcal{F}_{\delta T}$.
\end{proof}

\begin{lemma}
    \label{lem_ineq}
    For any $F\in \app{L^2}{\hgamma}$, we have that $F\in \app{L^2}{\halpha}$ and $F\in \app{L^2}{\hbeta}$ with:
    \begin{equation}
        \euclideannorm*{F}_{\app{L^2}{\halpha}} \leq \sqrt{2}\euclideannorm*{F}_{L^2(\hgamma)}\text{ and }\euclideannorm*{F}_{\app{L^2}{\hbeta}}\leq \sqrt{2}\euclideannorm*{F}_{L^2(\hgamma)}.
    \end{equation}
\end{lemma}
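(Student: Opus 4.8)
The plan is to reduce the statement to the single algebraic identity $\hgamma = \tfrac12\halpha + \tfrac12\hbeta$ introduced in \cref{sec:issues_large_discr_scpace}. First I would record the elementary consequence that $\hgamma$ dominates both $\halpha$ and $\hbeta$: if a set $A$ satisfies $\hgamma(A)=0$, then $\tfrac12\halpha(A)+\tfrac12\hbeta(A)=0$ forces $\halpha(A)=\hbeta(A)=0$. Hence every $\hgamma$-null set is $\halpha$- and $\hbeta$-null, so a function defined $\hgamma$-almost everywhere (an element of $L^2(\hgamma)$) is canonically also defined $\halpha$- and $\hbeta$-almost everywhere, and a fixed representative may be integrated against all three measures. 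Under \cref{hyp:finite_gamma} this point is in any case immediate, since $\supp\halpha,\supp\hbeta\subseteq\supp\hgamma$ is a finite set.

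Next, since integration of a nonnegative measurable function against $\tfrac12\halpha+\tfrac12\hbeta$ is linear by the standard simple-function/monotone-convergence argument, I would apply this to $F^2$ to obtain
\begin{equation}
\|F\|_{L^2(\hgamma)}^2 \;=\; \int F^2 \dif\hgamma \;=\; \tfrac12\int F^2 \dif\halpha + \tfrac12\int F^2 \dif\hbeta \;=\; \tfrac12\|F\|_{L^2(\halpha)}^2 + \tfrac12\|F\|_{L^2(\hbeta)}^2 ,
\end{equation}
where a priori the two integrals on the right lie in $[0,+\infty]$. Since both summands are nonnegative and their sum equals the finite number $\|F\|_{L^2(\hgamma)}^2$, each summand is finite — so $F\in L^2(\halpha)$ and $F\in L^2(\hbeta)$ — and each is bounded by $2\|F\|_{L^2(\hgamma)}^2$. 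Taking square roots yields the two claimed inequalities.

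There is essentially no obstacle here; the only step requiring a word of justification is the well-definedness of $F$ as an element of $L^2(\halpha)$ and $L^2(\hbeta)$, which is handled by the domination remark above. The argument works verbatim for any finite convex combination, and more generally whenever $\halpha\le C\hgamma$ and $\hbeta\le C\hgamma$ (with $\sqrt2$ replaced by $\sqrt{C}$); this is why the lemma is worth isolating on its own — it is precisely the estimate that will let the fixed-point and Grönwall-type arguments of \cref{app:proof_ode_solution} be carried out in the single space $L^2(\hgamma)$ while still controlling the loss $\Lo_\halpha$, whose definition in \cref{eq:sup_discr} involves the expectations under $\halpha$ and $\hbeta$ separately.
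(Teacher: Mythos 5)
Your proof is correct and follows essentially the same route as the paper: both rest on the identity $\|F\|^2_{L^2(\hgamma)} = \tfrac12\|F\|^2_{L^2(\halpha)} + \tfrac12\|F\|^2_{L^2(\hbeta)}$ coming from $\hgamma = \tfrac12\halpha + \tfrac12\hbeta$, from which each summand is bounded by $2\|F\|^2_{L^2(\hgamma)}$. Your extra remark on the domination of $\halpha,\hbeta$ by $\hgamma$ (well-definedness of $F$ under the other two measures) is a welcome bit of care that the paper leaves implicit.
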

\begin{proof}
    For any $F\in \app{L^2}{\hgamma}$, we have that
    \begin{equation}
    \euclideannorm*{F}^2_{\app{L^2}{\hgamma}} = \dfrac{1}{2}\euclideannorm*{F}^2_{\app{L^2}{\halpha}} + \dfrac{1}{2}\euclideannorm*{F}^2_{\app{L^2}{\hbeta}},
    \end{equation}
    so that $F\in \app{L^2}{\halpha}$ and $F\in \app{L^2}{\hbeta}$ with:
    \begin{align}
        \euclideannorm*{F}^2_{\app{L^2}{\halpha}} & = 2\euclideannorm*{F}^2_{\app{L^2}{\hgamma}} - \euclideannorm*{F}^2_{\app{L^2}{\hbeta}} \leq 2\euclideannorm*{F}^2_{L^2(\hgamma)}, && \euclideannorm*{F}^2_{\app{L^2}{\hbeta}} & = 2\euclideannorm*{F}_{\app{L^2}{\hgamma}} - \euclideannorm*{F}_{\app{L^2}{\halpha}} \leq 2\euclideannorm*{F}^2_{L^2(\hgamma)},
    \end{align}
    which allows us to conclude.
\end{proof}

From this, we can prove the existence and uniqueness of the initial value problem from \cref{eq:discr_ode}.
\begin{theorem}[Existence and Uniqueness]
    \label{thm:existence_uniqueness}
    Under \cref{hyp:finite_gamma,hyp:differentiable_a_b,hyp:kernel}, \cref{eq:discr_ode} with initial value $f_0$ admits a unique solution $f_\cdot:\R_+\rightarrow\app{L^2}{\Omega}$.
\end{theorem}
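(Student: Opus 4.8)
The plan is a Picard--Lindelöf fixed-point argument on the integral reformulation of \cref{eq:discr_ode}, using \cref{lem_completude,lem_ineq}, followed by a globalization step. The one structural subtlety to keep in mind is that $f\mapsto\app{\Tk{k}{\hgamma}}{\mgrad{\hgamma}\app{\Lo_{\halpha}}{f}}$ is \emph{not} a genuine vector field on $\app{L^2}{\Omega}$: the functional gradient $\mgrad{\hgamma}\app{\Lo_{\halpha}}{f}$ depends on $f$ only through its restriction to $\supp\hgamma$, and $\Tk{k}{\hgamma}$ has range in the low-dimensional space $\gH_k^{\hgamma}$. So the dynamics on $\supp\hgamma$ is self-contained and finite-dimensional, and I would solve it there first and then reconstruct $f_t$ on all of $\Omega$.

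\textbf{Step 1 (the vector field on $\app{L^2}{\hgamma}$).} Since $\app{\Lo_{\halpha}}{f}=\E_{\halpha}[a\circ f]-\E_{\hbeta}[b\circ f]$ is a pointwise loss and $\hgamma=\tfrac12\halpha+\tfrac12\hbeta$, the functional gradient is computed explicitly as $\mgrad{\hgamma}\app{\Lo_{\halpha}}{f}=\frac{\dm\halpha}{\dm\hgamma}\,(a'\circ f)-\frac{\dm\hbeta}{\dm\hgamma}\,(b'\circ f)\in\app{L^2}{\hgamma}$, the Radon--Nikodym derivatives being bounded by $2$. By \cref{hyp:differentiable_a_b}, $a'$ and $b'$ are Lipschitz; combining this with the norm comparison of \cref{lem_ineq} shows $f\mapsto\mgrad{\hgamma}\app{\Lo_{\halpha}}{f}$ is Lipschitz from $\app{L^2}{\hgamma}$ to $\app{L^2}{\hgamma}$, with constant depending only on the Lipschitz constants of $a',b'$. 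Since $\hgamma$ is a finite mixture of Diracs (\cref{hyp:finite_gamma}), $\app{L^2}{\hgamma}$ is finite dimensional and $\Tk{k}{\hgamma}$ restricts there to the bounded (finite-rank) Gram operator of $k$ over $\supp\hgamma$, well-defined by \cref{hyp:kernel}. Hence $\Phi\colon f\mapsto\app{\Tk{k}{\hgamma}}{\mgrad{\hgamma}\app{\Lo_{\halpha}}{f}}$ is globally Lipschitz on $\app{L^2}{\hgamma}$ and bounded on bounded sets.

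\textbf{Steps 2--3 (fixed point, globalization, and lift).} On the complete metric space $\mathcal{F}_{\delta T}$ of \cref{lem_completude}, consider the map sending $u$ to $t\mapsto f_0+\int_0^t\app{\Phi}{u_s}\dif s$. Boundedness of $\Phi$ on $\app{B_{\app{L^2}{\hgamma}}}{f_0,1}$ makes this map preserve $\mathcal{F}_{\delta T}$ for $\delta T$ small, and its Lipschitz constant is at most $\delta T\cdot\mathrm{Lip}(\Phi)$, so it contracts once $\delta T<1/\mathrm{Lip}(\Phi)$; Banach's theorem yields a unique local solution of the $\app{L^2}{\hgamma}$-restricted equation. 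Since $a',b'$ are \emph{globally} Lipschitz, $\Phi$ has at most linear growth, so a Grönwall estimate forbids finite-time blow-up and the standard continuation argument extends the solution uniquely to $u_\cdot\colon\R_+\to\app{L^2}{\hgamma}$. Finally, define $f_t\triangleq f_0+\app{\Tk{k}{\hgamma}}{\int_0^t\mgrad{\hgamma}\app{\Lo_{\halpha}}{u_s}\dif s}\in\app{L^2}{\Omega}$; restricting to $\supp\hgamma$ and using that $\Tk{k}{\hgamma}$ acts there as the Gram operator shows $f_t$ agrees with $u_t$ on $\supp\hgamma$, hence $\mgrad{\hgamma}\app{\Lo_{\halpha}}{f_t}=\mgrad{\hgamma}\app{\Lo_{\halpha}}{u_t}$, and differentiating the defining identity shows $f_\cdot$ solves \cref{eq:discr_ode} with initial value $f_0$ (this also establishes \cref{eq:discr_dynamics}). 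Conversely, any $\app{L^2}{\Omega}$-solution restricted to $\supp\hgamma$ solves the $\app{L^2}{\hgamma}$-equation, so equals $u_\cdot$ there, and then the integral identity pins down its values on $\Omega\setminus\supp\hgamma$; uniqueness follows.

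\textbf{Main obstacle.} The delicate point is not the fixed-point estimates, which are routine once \cref{lem_ineq} and \cref{hyp:differentiable_a_b} are available, but the bookkeeping around the fact that \cref{eq:discr_ode} is not well-posed as an ODE on $\app{L^2}{\Omega}$ as literally written: one must isolate the finite-dimensional sub-dynamics on $\supp\hgamma$, solve and globalize it there, and then verify that the reconstruction via $\Tk{k}{\hgamma}$ is the unique $\app{L^2}{\Omega}$-valued solution and is consistent with $u_\cdot$ on the support.
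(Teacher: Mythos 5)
Your proposal is correct and follows essentially the same route as the paper: a Banach--Picard fixed point on the space $\mathcal{F}_{\delta T}$ of \cref{lem_completude}, Lipschitz estimates for $f\mapsto\app{\Tk{k}{\hgamma}}{\mgrad{\hgamma}\app{\Lo_{\halpha}}{f}}$ from \cref{hyp:differentiable_a_b} and \cref{lem_ineq}, a Grönwall-based continuation to all of $\R_+$, and then reconstruction and uniqueness of the $\app{L^2}{\Omega}$-valued solution via its restriction to $\supp\hgamma$. The only cosmetic difference is that you define the lift with $\Tk{k}{\hgamma}$ outside the time integral, which the paper justifies separately by a Fubini argument but which is immediate here since $\hgamma$ is a finite mixture of Diracs.
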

\begin{proof}
    ~
    \paragraph{A few inequalities.}
    We start this proof by proving a few inequalities.

    Let $f,g\in\app{L^2}{\hgamma}$. We have, by the Cauchy-Schwarz inequality, for all $z\in\Omega$:
    \begin{equation}
        \lrverts*{\app{\parentheses*{\app{\Tk{k}{\hgamma}}{\mgrad{\hgamma} \app{\Lo_{\halpha}}{f}} - \app{\Tk{k}{\hgamma}}{\mgrad{\hgamma} \app{\Lo_{\halpha}}{g}}}}{z}} \leq \euclideannorm*{\app{k}{z,\cdot}}_{\app{L^2}{\hgamma}}\euclideannorm*{\mgrad{\hgamma}\app{\Lo_{\halpha}}{f} - \mgrad{\hgamma} \app{\Lo_{\halpha}}{g}}_{\app{L^2}{\hgamma}}.
    \end{equation}
    Moreover, by definition:
    \begin{equation}
        \sprod{\mgrad{\hgamma}\app{\Lo_{\halpha}}{f} - \mgrad{\hgamma} \app{\Lo_{\halpha}}{g}}{h}_{\app{L^2}{\hgamma}} = \int \parentheses*{a'_f-a'_g}h\dif\halpha - \int \parentheses*{b'_f-b'_g}h\dif\hbeta,
    \end{equation}
    so that:
    \begin{equation}
        \euclideannorm*{\mgrad{\hgamma}\app{\Lo_{\halpha}}{f} - \mgrad{\hgamma} \app{\Lo_{\halpha}}{g}}^2_{\app{L^2}{\hgamma}} \leq \euclideannorm*{\mgrad{\hgamma}\app{\Lo_{\halpha}}{f} - \mgrad{\hgamma} \app{\Lo_{\halpha}}{g}}_{\app{L^2}{\hgamma}}\parentheses*{\euclideannorm*{a'_f-a'_g}_{\app{L^2}{\halpha}} + \euclideannorm*{b'_f-b'_g}_{\app{L^2}{\hbeta}}},
    \end{equation}
    and then, along with \cref{lem_ineq}:
    \begin{equation}
        \euclideannorm*{\mgrad{\hgamma}\app{\Lo_{\halpha}}{f} - \mgrad{\hgamma} \app{\Lo_{\halpha}}{g}}_{\app{L^2}{\hgamma}} \leq \euclideannorm*{a'_f-a'_g}_{\app{L^2}{\halpha}} + \euclideannorm*{b'_f-b'_g}_{\app{L^2}{\hbeta}} \leq \sqrt{2} \parentheses*{\euclideannorm*{a'_f-a'_g}_{\app{L^2}{\hgamma}} + \euclideannorm*{b'_f-b'_g}_{\app{L^2}{\hgamma}}}.
    \end{equation}
    By \cref{hyp:differentiable_a_b}, we know that $a'$ and $b'$ are Lipschitz with constants that we denote $K_1$ and $K_2$.
    We can then write for all $x$:
    \begin{align}
        \lrverts*{\app{a'}{\app{f}{x}}-\app{a'}{\app{g}{x}}} \leq K_1\lrverts*{\app{f}{x}-\app{g}{x}}, && \lrverts*{\app{b'}{\app{f}{x}}-\app{b'}{\app{g}{x}}} \leq K_2\lrverts*{\app{f}{x}-\app{g}{x}},
    \end{align}
    so that:
    \begin{align}
        \euclideannorm*{a'_f-a'_g}_{\app{L^2}{\hgamma}} \leq K_1\euclideannorm*{f-g}_{\app{L^2}{\hgamma}}, && \euclideannorm*{b'_f-b'_g}_{\app{L^2}{\hgamma}} \leq K_2\euclideannorm*{f-g}_{\app{L^2}{\hgamma}}.
    \end{align}
    Finally, we can now write, for all $z\in\Omega$:
    \begin{equation}
        \label{eq:ineq_omega}
        \lrverts*{\app{\parentheses*{\app{\Tk{k}{\hgamma}}{\mgrad{\hgamma} \app{\Lo_{\halpha}}{f}} - \app{\Tk{k}{\hgamma}}{\mgrad{\hgamma} \app{\Lo_{\halpha}}{g}}}}{z}}
        \leq \sqrt{2}\parentheses*{K_1+K_2}\euclideannorm*{f-g}_{\app{L^2}{\hgamma}}\euclideannorm*{\app{k}{z,\cdot}}_{\app{L^2}{\hgamma}},
        \tag{A}
    \end{equation}
    and then:
    \begin{equation}
        \label{eq:A_lip}
        \euclideannorm*{\app{\Tk{k}{\hgamma}}{\mgrad{\hgamma} \app{\Lo_{\halpha}}{f}} - \app{\Tk{k}{\hgamma}}{\mgrad{\hgamma} \app{\Lo_{\halpha}}{g}}}_{\app{L^2}{\hgamma}} \leq K\euclideannorm*{f-g}_{\app{L^2}{\hgamma}},
        \tag{B}
    \end{equation}
    where $K=\sqrt{2}\parentheses*{K_1+K_2}\sqrt{\int\euclideannorm*{\app{k}{z,\cdot}}^2_{\app{L^2}{\hgamma}}\dif\app{\hgamma}{z}}$ is finite as a finite sum of finite terms from \cref{hyp:finite_gamma,hyp:kernel}.
    In particular, putting $g=0$ and using the triangular inequality also gives us:
    \begin{equation}
        \label{eq:A_bound}
        \euclideannorm*{\app{\Tk{k}{\hgamma}}{\mgrad{\hgamma} \app{\Lo_{\halpha}}{f}}}_{\app{L^2}{\hgamma}} \leq K\euclideannorm*{f}_{\app{L^2}{\hgamma}} + M,
        \tag{B'}
    \end{equation}
    where $M=\euclideannorm*{\app{\Tk{k}{\hgamma}}{\mgrad{\hgamma} \app{\Lo_{\halpha}}{0}}}_{\app{L^2}{\hgamma}}$.

    \paragraph{Existence and uniqueness in $\app{L^2}{\hgamma}$.}

    We now adapt the standard fixed point proof to prove existence and uniqueness of a solution to the studied equation in $\app{L^2}{\hgamma}$.

    We consider the family of spaces $\mathcal{F}_{\delta T} = \app{\mathcal{C}}{\brackets*{0,\delta T},\app{B_{\app{L^2}{\hgamma}}}{f_0,1}}$. $\mathcal{F}_{\delta T}$ is defined, for $\delta T>0$, as the space of continuous functions from $\brackets*{0,\delta T}$ to the closed ball of radius $1$ centered around $f_0$ in $\app{L^2}{\hgamma}$ which we endow with the norm:
    \begin{equation}
        \forall u\in\mathcal{F}_{\delta T},\ \euclideannorm*{u} = \sup_{t\in\brackets*{0,\delta T}} \euclideannorm*{u_t}_{\app{L^2}{\hgamma}}.
    \end{equation}

    We now define the application $\Phi$ where $\app{\Phi}{u}$ is defined as, for any $u\in\mathcal{F}_{\delta T}$:
    \begin{equation}
        \Phi(u)_t = f_0 + \int_0^t \app{\Tk{k}{\hgamma}}{\mgrad{\hgamma} \app{\Lo_{\halpha}}{u_s}} \dif s.
    \end{equation}
    We have, using \cref{eq:A_bound}:
    \begin{equation}
        \euclideannorm*{\app{\Phi}{u}_t-f_0}_{\app{L^2}{\hgamma}} \leq \int_0^t \parentheses*{K\euclideannorm*{u_s}_{\app{L^2}{\hgamma}}+M} \dif s \leq \parentheses*{K+M}\delta T.
    \end{equation}
    Thus, taking $\delta T = \parentheses*{2\parentheses*{K+M}}^{-1}$ makes $\Phi$ an application from $\mathcal{F}_{\delta T}$ into itself.
    Moreover, we have:
    \begin{equation}
        \forall u,v\in\mathcal{F}_{\delta T}, \euclideannorm*{\app{\Phi}{u}-\app{\Phi}{v}} \leq \frac{1}{2}\euclideannorm*{u-v},
    \end{equation}
    which means that $\Phi$ is a contraction of $\mathcal{F}_{\delta T}$.
    \cref{lem_completude} and the Banach-Picard theorem then tell us that $\Phi$ has a unique fixed point in $\mathcal{F}_{\delta T}$.
    It is then obvious that such a fixed point is a solution of \cref{eq:discr_ode} over $\brackets*{0,\delta T}$.

    Let us now consider the maximal $T>0$ such that a solution $f_t$ of \cref{eq:discr_ode} is defined over $\rightopeninterval*{0,T}$.
    We have, using \cref{eq:A_bound}:
    \begin{equation}
        \forall t\in\rightopeninterval*{0,T}, \euclideannorm*{f_t}_{\app{L^2}{\hgamma}} \leq \euclideannorm*{f_0}_{\app{L^2}{\hgamma}} + \int_0^t \parentheses*{\euclideannorm*{f_s}_{\app{L^2}{\hgamma}}+M}\dif s,
    \end{equation}
    which, using Grönwall's lemma, gives:
    \begin{equation}
        \label{eq:gronwall_lem}
        \forall t\in\rightopeninterval*{0,T}, \euclideannorm*{f_t}_{\app{L^2}{\hgamma}} \leq \euclideannorm*{f_0}_{\app{L^2}{\hgamma}}\erm^{KT} + \frac{M}{K}\parentheses*{\erm^{KT}-1}.
    \end{equation}
    Define $g^n = f_{T-\frac{1}{n}}$.
    We have, again using \cref{eq:A_bound}:
    \begin{equation}
        \forall m\geq n, \euclideannorm*{g^n-g^m}_{\app{L^2}{\hgamma}} \leq \int_{T-\frac{1}{n}}^{T-\frac{1}{m}}(K\euclideannorm*{f_s}+M)\dif s \leq \parentheses*{\frac{1}{n} - \frac{1}{m}}\parentheses*{\euclideannorm*{f_0}_{\app{L^2}{\hgamma}}\erm^{KT} + \frac{M}{K}\parentheses*{\erm^{KT}-1}},
    \end{equation}
    which shows that $\parentheses*{g^n}_n$ is a Cauchy sequence.
    $\app{L^2}{\hgamma}$ being complete, we can thus consider its limit $g^\infty$.
    Clearly, $f_t$ tends to $g^\infty$ in $\app{L^2}{\hgamma}$.
    By considering the initial value problem associated with \cref{eq:discr_ode} starting from $g^\infty$, we can thus extend the solution $f_t$ to $\rightopeninterval*{0,T+\delta T}$, thus contradicting the maximality of $T$, which proves that the solution can be extended to $\R_+$.

    \paragraph{Existence and uniqueness in $\app{L^2}{\Omega}$.}

    We now conclude the proof by extending the previous solution to $\app{L^2}{\Omega}$.
    We keep the same notations as above and, in particular, $f$ is the unique solution of \cref{eq:discr_ode} with initial value $f_0$.

    Let us define $\Tilde{f}$ as:
    \begin{equation}
        \forall t,\forall x, \app{\Tilde{f}_t}{x} = \app{f_0}{x} + \int_0^t \app{\app{\Tk{k}{\hgamma}}{\mgrad{\hgamma} \app{\Lo_{\halpha}}{f_s}}}{x} \dif s,
    \end{equation}
    where the r.h.s.\ only depends on $f$ and is thus well-defined.
    By remarking that $\Tilde{f}$ is equal to $f$ on $\supp \hgamma$ and that, for every $s$,
    \begin{equation}
        \app{\Tk{k}{\hgamma}}{\mgrad{\hgamma} \app{\Lo_{\halpha}}{\Tilde{f}_s}} = \app{\Tk{k}{\hgamma}}{\mgrad{\hgamma} \app{\Lo_{\halpha}}{\rightvert*{\Tilde{f}_s}_{\supp \hgamma}}} = \app{\Tk{k}{\hgamma}}{\mgrad{\hgamma} \app{\Lo_{\halpha}}{f_s}},
    \end{equation}
    we see that $\Tilde{f}$ is solution to \cref{eq:discr_ode}.
    Moreover, from \cref{hyp:kernel}, we know that, for any $z\in\Omega$, $\int \app{k}{z,x}^2\dif\app{\Omega}{x}$ is finite and, from \cref{hyp:finite_gamma}, that $\euclideannorm*{\app{k}{z,\cdot}}^2_{\app{L^2}{\hgamma}}$ is a finite sum of terms $\app{k}{z,x_i}^2$ which shows that $\int \euclideannorm*{\app{k}{z,\cdot}}^2_{\app{L^2}{\hgamma}}\dif\app{\Omega}{z}$ is finite, again from \cref{hyp:kernel}.
    We can then say that $\Tilde{f}_s\in\app{L^2}{\Omega}$ for any $s$ by using the above with \cref{eq:ineq_omega} taken for $g=0$.

    Finally, suppose $h$ is a solution to \cref{eq:discr_ode} with initial value $f_0$.
    We know that $\rightvert*{h}_{\supp \hgamma}$ coincides with $f$ and thus with $\rightvert*{\Tilde{f}}_{\supp \hgamma}$ in $\app{L^2}{\hgamma}$ as we already proved uniqueness in the latter space.
    Thus, we have that $\euclideannorm*{\rightvert*{h_s}_{\supp \hgamma}-\rightvert*{\Tilde{f}_s}_{\supp \hgamma}}_{\app{L^2}{\hgamma}} = 0$ for any $s$.
    Now, we have:
    \begin{equation}
        \begin{multlined}
            \forall z \in\Omega, \forall s, \lrverts*{\app{\parentheses*{\app{\Tk{k}{\hgamma}}{\mgrad{\hgamma} \app{\Lo_{\halpha}}{h_s}} - \app{\Tk{k}{\hgamma}}{\mgrad{\hgamma} \app{\Lo_{\halpha}}{\Tilde{f}_s}}}}{z}} \\
            = \lrverts*{\app{\parentheses*{\app{\Tk{k}{\hgamma}}{\mgrad{\hgamma} \app{\Lo_{\halpha}}{\rightvert*{h_s}_{\supp \hgamma}}} - \app{\Tk{k}{\hgamma}}{\mgrad{\hgamma} \app{\Lo_{\halpha}}{\rightvert*{\Tilde{f}_s}_{\supp \hgamma}}}}}{z}} \leq 0,
        \end{multlined}
    \end{equation}
    by \cref{eq:ineq_omega}.
    This shows that $\app{\partial_t}{\Tilde{f}-h} = 0$ and, given that $h_0=\Tilde{f}_0=f_0$, we have $h=\Tilde{f}$ which concludes the proof.

\end{proof}

There only remains to prove for \cref{thm:discr_characterization} the inversion between the integral over time and the integral operator.
We first prove an intermediate lemma and then conclude with the proof of the inversion.

\begin{lemma}
    \label{lem:finiteness}
    Under \cref{hyp:finite_gamma,hyp:differentiable_a_b,hyp:kernel}, $\int_0^T \parentheses*{\euclideannorm*{a'}_{\app{L^2}{\parentheses*{f_s}_\sharp\halpha}} + \euclideannorm*{b'}_{\app{L^2}{\parentheses*{f_s}_\sharp\hbeta}}} \dif s$ is finite for any $T>0$.
\end{lemma}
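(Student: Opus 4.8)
The plan is to reduce the statement to a uniform-in-$s$ bound on $\euclideannorm*{f_s}_{\app{L^2}{\hgamma}}$ over $[0,T]$, which is already available from the proof of \cref{thm:discr_characterization}. First I would use the change-of-variables formula for pushforward measures to rewrite $\euclideannorm*{a'}_{\app{L^2}{\parentheses*{f_s}_\sharp\halpha}} = \euclideannorm*{a'\circ f_s}_{\app{L^2}{\halpha}} = \euclideannorm*{a'_{f_s}}_{\app{L^2}{\halpha}}$ and likewise $\euclideannorm*{b'}_{\app{L^2}{\parentheses*{f_s}_\sharp\hbeta}} = \euclideannorm*{b'_{f_s}}_{\app{L^2}{\hbeta}}$, where $a'_{f_s} \triangleq a'\circ f_s$ and $b'_{f_s}\triangleq b'\circ f_s$. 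Thus it suffices to show $\int_0^T \parentheses*{\euclideannorm*{a'_{f_s}}_{\app{L^2}{\halpha}} + \euclideannorm*{b'_{f_s}}_{\app{L^2}{\hbeta}}} \dif s < \infty$.

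Next, I would invoke \cref{hyp:differentiable_a_b}: writing $K_1, K_2$ for the Lipschitz constants of $a', b'$, one has $\lvert a'(u)\rvert \le \lvert a'(0)\rvert + K_1\lvert u\rvert$ and $\lvert b'(u)\rvert \le \lvert b'(0)\rvert + K_2\lvert u\rvert$ for all $u\in\R$. Since $\halpha$ and $\hbeta$ are probability measures, the triangle inequality in $\app{L^2}{\halpha}$ (resp.\ $\app{L^2}{\hbeta}$) gives $\euclideannorm*{a'_{f_s}}_{\app{L^2}{\halpha}} \le \lvert a'(0)\rvert + K_1\euclideannorm*{f_s}_{\app{L^2}{\halpha}}$ and similarly for $b'$; combining with \cref{lem_ineq} yields the pointwise-in-$s$ bound $\euclideannorm*{a'_{f_s}}_{\app{L^2}{\halpha}} + \euclideannorm*{b'_{f_s}}_{\app{L^2}{\hbeta}} \le \lvert a'(0)\rvert + \lvert b'(0)\rvert + \sqrt2\,(K_1+K_2)\,\euclideannorm*{f_s}_{\app{L^2}{\hgamma}}$.

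Finally, by \cref{thm:discr_characterization} the solution $f_\cdot$ is defined on all of $\R_+$, and the Gronwall estimate \eqref{eq:gronwall_lem} established in its proof bounds $\euclideannorm*{f_s}_{\app{L^2}{\hgamma}} \le \euclideannorm*{f_0}_{\app{L^2}{\hgamma}}\erm^{KT} + \frac{M}{K}\parentheses*{\erm^{KT}-1}$ for every $s\in[0,T]$, where $K$ and $M$ are finite thanks to \cref{hyp:finite_gamma,hyp:kernel} (and the degenerate case $K=0$ is handled by the even simpler linear bound $\euclideannorm*{f_s}_{\app{L^2}{\hgamma}}\le\euclideannorm*{f_0}_{\app{L^2}{\hgamma}}+Ms$). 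Substituting this into the bound of the previous paragraph, the integrand is dominated on $[0,T]$ by a finite constant $C_T$, so the integral is at most $T\,C_T<\infty$. The argument is essentially routine once the Gronwall bound is in hand; the only point that genuinely needs care is that the solution $f_\cdot$ is defined on the whole of $[0,T]$ and that the a priori bound applies throughout that interval, which is exactly what was established while proving \cref{thm:discr_characterization}, so I do not expect a substantial obstacle beyond this bookkeeping.
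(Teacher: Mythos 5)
Your proposal is correct and follows essentially the same route as the paper: bound $\lvert a'(f_s(x))\rvert$ and $\lvert b'(f_s(x))\rvert$ linearly via the Lipschitz hypothesis, pass from $\app{L^2}{\halpha}$ and $\app{L^2}{\hbeta}$ to $\app{L^2}{\hgamma}$ with \cref{lem_ineq}, and invoke the Gronwall estimate \eqref{eq:gronwall_lem} from the proof of \cref{thm:discr_characterization} to dominate the integrand by a constant on $[0,T]$. Your extra remarks (the pushforward change of variables made explicit, and the degenerate case $K=0$) are harmless refinements of the same argument.
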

\begin{proof}
    Let $T>0$.
    We have, by \cref{hyp:differentiable_a_b} and the triangular inequality:
    \begin{equation}
        \forall x, \lrverts*{\app{a'}{\app{f}{x}}}\leq K_1\lrverts*{\app{f}{x}}+M_1,
    \end{equation}
    where $M_1 = \lrverts*{\app{a'}{0}}$.
    We can then write, using \cref{lem_ineq} and the inequality from \cref{eq:gronwall_lem}:
    \begin{equation}
        \forall s\leq T, \euclideannorm*{a'}_{\app{L^2}{\parentheses*{f_s}_\sharp\halpha}} \leq K_1\sqrt{2}\euclideannorm*{f_s}_{\app{L^2}{\hgamma}} + M_1 \leq K_1\sqrt{2}\parentheses*{\euclideannorm*{f_0}_{\app{L^2}{\hgamma}}\erm^{KT} + \frac{M}{K}\parentheses*{\erm^{KT}-1}} + M_1,
    \end{equation}
    the latter being constant in $s$ and thus integrable on $\brackets*{0,T}$.
    We can then bound $\euclideannorm*{b'}_{\app{L^2}{\parentheses*{f_s}_\sharp\hbeta}}$ similarly, which concludes the proof.
\end{proof}

\begin{prop}[Integral inversion]
    \label{prop:integral_inversion}
    Under \cref{hyp:finite_gamma,hyp:differentiable_a_b,hyp:kernel}, the following integral inversion holds:
    \begin{equation}
        f_t = f_0 + \int_0^t \app{\Tk{k_f}{\hgamma}}{\mgrad{\hgamma} \app{\Lo_{\halpha, \hbeta}}{f_{s}}} \dif s = f_0 + \app{\Tk{k_f}{\hgamma}}{\int_0^t \mgrad{\hgamma} \app{\Lo_{\halpha, \hbeta}}{f_{s}} \dif s}.
    \end{equation}
\end{prop}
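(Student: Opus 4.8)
The plan is to read the claimed identity as an instance of the general principle that a bounded linear operator commutes with the Bochner integral. Concretely, I would reduce the statement to two facts: (i) the time-dependent map $s \mapsto \mgrad{\hgamma}\app{\Lo_{\halpha}}{f_s}$ is a Bochner-integrable $\app{L^2}{\hgamma}$-valued map on $[0,t]$, and (ii) $\Tk{k}{\hgamma}$ is a bounded linear operator from $\app{L^2}{\hgamma}$ to $\app{L^2}{\Omega}$. Granting (i) and (ii), the second equality in the statement follows because bounded operators pass through Bochner integrals, and the first equality is exactly the integral form of \cref{eq:discr_ode} already established in the proof of \cref{thm:discr_characterization}.

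For (ii), I would argue as in \cref{eq:ineq_omega}: by Cauchy--Schwarz, $\abs{\app{\Tk{k}{\hgamma}}{h}(z)} \le \euclideannorm{k(z,\cdot)}_{\app{L^2}{\hgamma}}\,\euclideannorm{h}_{\app{L^2}{\hgamma}}$ for every $h \in \app{L^2}{\hgamma}$ and $z \in \Omega$, hence $\euclideannorm{\app{\Tk{k}{\hgamma}}{h}}_{\app{L^2}{\Omega}}^2 \le \euclideannorm{h}_{\app{L^2}{\hgamma}}^2 \int \euclideannorm{k(z,\cdot)}_{\app{L^2}{\hgamma}}^2 \dif\Omega(z)$. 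The last integral is finite because $\hgamma$ is a finite mixture of Diracs, so $\euclideannorm{k(z,\cdot)}_{\app{L^2}{\hgamma}}^2$ is a finite weighted sum of terms $k(z,x_i)^2$, each of which is in $\app{L^2}{\Omega}$ in the $z$ variable by \cref{hyp:kernel}; this is the same finiteness argument that produced the constant $K$ in the existence proof. Thus $\Tk{k}{\hgamma}$ is bounded linear.

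For (i), the decisive observation is that $\app{L^2}{\hgamma}$ is finite-dimensional (since $\supp\hgamma$ is finite), so Bochner integrability is equivalent to strong measurability together with $\int_0^t \euclideannorm{\mgrad{\hgamma}\app{\Lo_{\halpha}}{f_s}}_{\app{L^2}{\hgamma}} \dif s < \infty$. Measurability follows from continuity of $s \mapsto f_s$ in $\app{L^2}{\hgamma}$ (from \cref{thm:discr_characterization}) composed with continuity of $f \mapsto \mgrad{\hgamma}\app{\Lo_{\halpha}}{f}$, which holds because $\mgrad{\hgamma}\app{\Lo_{\halpha}}{f} = \od{\halpha}{\hgamma}\,(a'\circ f) - \od{\hbeta}{\hgamma}\,(b'\circ f)$ and $a',b'$ are (Lipschitz, hence) continuous by \cref{hyp:differentiable_a_b}. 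Finiteness of the norm integral is precisely where \cref{lem:finiteness} enters: bounding $\euclideannorm{\mgrad{\hgamma}\app{\Lo_{\halpha}}{f_s}}_{\app{L^2}{\hgamma}}$ by $\sqrt{2}\bigl(\euclideannorm{a'}_{\app{L^2}{\parentheses*{f_s}_\sharp\halpha}} + \euclideannorm{b'}_{\app{L^2}{\parentheses*{f_s}_\sharp\hbeta}}\bigr)$ via \cref{lem_ineq} and then integrating in $s$ using \cref{lem:finiteness}.

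Putting the pieces together, $\app{\Tk{k}{\hgamma}}{\int_0^t \mgrad{\hgamma}\app{\Lo_{\halpha}}{f_s}\dif s} = \int_0^t \app{\Tk{k}{\hgamma}}{\mgrad{\hgamma}\app{\Lo_{\halpha}}{f_s}}\dif s$, and the right-hand side equals $f_t - f_0$ by \cref{thm:discr_characterization}, which is the asserted identity. I expect the only mildly delicate point to be the uniform-in-$s$ integrability of the functional gradient, and this is exactly what \cref{lem:finiteness} supplies; the rest is a routine consequence of finite-dimensionality of $\app{L^2}{\hgamma}$ and Cauchy--Schwarz, so I would keep that part brief.
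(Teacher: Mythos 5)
Your proof is correct, and it rests on exactly the same quantitative ingredients as the paper's argument: the Cauchy--Schwarz bound $\euclideannorm*{\mgrad{\hgamma}\app{\Lo_{\halpha}}{f_s}}_{\app{L^2}{\hgamma}}\le\sqrt{2}\parentheses*{\euclideannorm*{a'}_{\app{L^2}{\parentheses*{f_s}_\sharp\halpha}}+\euclideannorm*{b'}_{\app{L^2}{\parentheses*{f_s}_\sharp\hbeta}}}$ obtained via \cref{lem_ineq}, and the integrability in time of that bound, which is precisely \cref{lem:finiteness}; the first equality is indeed just the integral form of the solution from \cref{thm:discr_characterization}. Where you differ is the device used for the swap: the paper fixes $z\in\Omega$, verifies that $\int_0^t\int_x\abs{\app{k}{z,x}\,\app{\mgrad{\hgamma}\app{\Lo_{\halpha}}{f_s}}{x}}\dif\app{\hgamma}{x}\dif s$ is finite, and applies Fubini pointwise in $z$, whereas you treat $\Tk{k}{\hgamma}$ as a bounded linear operator $\app{L^2}{\hgamma}\to\app{L^2}{\Omega}$ and commute it with a Bochner integral of the gradient path, using finite-dimensionality of $\app{L^2}{\hgamma}$ to settle measurability. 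The two routes are essentially equivalent here --- since $\supp\hgamma$ is finite, both the Fubini swap and the Bochner commutation reduce to interchanging a time integral with a finite sum --- but your framing avoids pointwise-in-$z$ bookkeeping at the cost of verifying operator boundedness into $\app{L^2}{\Omega}$, i.e.\ $\int_\Omega\euclideannorm*{\app{k}{z,\cdot}}^2_{\app{L^2}{\hgamma}}\dif z<\infty$; this is the same finiteness the paper establishes from \cref{hyp:finite_gamma,hyp:kernel} at the end of the existence proof, so nothing is missing, while the paper's Fubini route only needs $\euclideannorm*{\app{k}{z,\cdot}}_{\app{L^2}{\hgamma}}$ finite at each $z$. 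No gap; the argument is sound as stated.
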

\begin{proof}
    By definition, a straightforward computation gives, for any function $h\in\app{L^2}{\hgamma}$:
    \begin{equation}
        \sprod{\mgrad{\hgamma} \app{\Lo_{\halpha}}{f}}{h}_{\app{L^2}{\hgamma}} = \dif \app{\Lo_{\halpha}}{f}\brackets*{h} = \int a'_{f}h\dif\halpha  - \int b'_{f}h \dif\hbeta.
    \end{equation}
    We can then write:
    \begin{equation}
        \euclideannorm*{\mgrad{\hgamma} \app{\Lo_{\halpha}}{f_{t}}}_{\app{L^2}{\hgamma}}^2 = \sprod{\mgrad{\hgamma} \app{\Lo_{\halpha}}{f_{t}}}{\mgrad{\hgamma} \app{\Lo_{\halpha}}{f_{t}}}_{\app{L^2}{\hgamma}} = \int a'_{f_t}\mgrad{\hgamma} \app{\Lo_{\halpha}}{f_{t}}\dif\halpha  - \int b'_{f_t}\mgrad{\hgamma} \app{\Lo_{\halpha}}{f_{t}} \dif\hbeta,
    \end{equation}
    so that, with the Cauchy-Schwarz inequality and \cref{lem_ineq}:
    \begin{equation}
       \begin{aligned}
            \euclideannorm*{\mgrad{\hgamma} \app{\Lo_{\halpha}}{f_{t}}}_{\app{L^2}{\hgamma}}^2
            & \leq \int \lrverts*{a'_{f_t}} \lrverts*{\mgrad{\hgamma} \app{\Lo_{\halpha}}{f_{t}}}\dif\halpha  + \int \lrverts*{b'_{f_t}} \lrverts*{\mgrad{\hgamma} \app{\Lo_{\halpha}}{f_{t}}} \dif\hbeta \\
            & \leq \euclideannorm*{a'_{f_t}}_{\app{L^2}{\halpha}}\euclideannorm*{\mgrad{\hgamma} \app{\Lo_{\halpha}}{f_{t}}}_{\app{L^2}{\halpha}} + \euclideannorm*{b'_{f_t}}_{\app{L^2}{\hbeta}}\euclideannorm*{\mgrad{\hgamma} \app{\Lo_{\halpha}}{f_{t}}}_{\app{L^2}{\hbeta}} \\
            & \leq \sqrt{2}\euclideannorm*{\mgrad{\hgamma} \app{\Lo_{\halpha}}{f_{t}}}_{L^2(\hgamma)}\brackets*{\euclideannorm*{a'_{f_t}}_{\app{L^2}{\halpha}} + \euclideannorm*{b'_{f_t}}_{\app{L^2}{\hbeta}}},
        \end{aligned}
    \end{equation}
    which then gives us:
    \begin{equation}\label{eq:lemma}
        \euclideannorm*{\mgrad{\hgamma} \app{\Lo_{\halpha}}{f_{t}}}_{\app{L^2}{\hgamma}} \leq \sqrt{2}\brackets*{\euclideannorm*{a'}_{\app{L^2}{\parentheses*{f_t}_\sharp\halpha}} + \euclideannorm*{b'}_{\app{L^2}{\parentheses*{f_t}_\sharp\hbeta}}}.
    \end{equation}

    By the Cauchy-Schwarz inequality and \cref{eq:lemma}, we then have for all $z$:
    \begin{equation}
        \begin{multlined}
            \int_0^t \int_x \lrverts*{\app{k}{z, x} \app{\mgrad{\hgamma} \app{\Lo_{\halpha}}{f_{s}}}{x}}\dif\app{\hgamma}{x}\dif s \leq \int_0^t \euclideannorm*{\app{k}{z, \cdot}}_{\app{L^2}{\hgamma}}\euclideannorm*{\mgrad{\hgamma} \app{\Lo_{\halpha}}{f_{s}}}_{\app{L^2}{\hgamma}}\dif s \\
            \leq \sqrt{2}\euclideannorm*{\app{k}{z, \cdot}}_{\app{L^2}{\hgamma}}\int_0^t\brackets*{\euclideannorm*{a'}_{\app{L^2}{\parentheses*{f_s}_\sharp\halpha}} + \euclideannorm*{b'}_{\app{L^2}{\parentheses*{f_s}_\sharp\hbeta}}}\dif s.
        \end{multlined}
    \end{equation}
    The latter being finite by \cref{lem:finiteness}, we can now use Fubini's theorem to conclude that:
    \begin{equation}
        \begin{aligned}
            \int_0^t \app{\Tk{k_f}{\hgamma}}{\mgrad{\hgamma} \app{\Lo_{\halpha}}{f_{s}}} \dif s & = \int_0^t \int_x\app{k}{\cdot, x} \app{\mgrad{\hgamma} \app{\Lo_{\halpha}}{f_{s}}}{x}\dif\app{\hgamma}{x}\dif s \\
            & = \int_x \app{k}{\cdot, x} \brackets*{\int_0^t \app{\mgrad{\hgamma} \app{\Lo_{\halpha}}{f_{s}}}{x}\dif s}\dif\app{\hgamma}{x}\\
            & = \app{\Tk{k_f}{\hgamma}}{\int_0^t \app{\mgrad{\hgamma} \app{\Lo_{\halpha}}{f_{s}}}{x}\dif s}.
        \end{aligned}
    \end{equation}
\end{proof}

\subsection{Differentiability of Infinite-Width Networks and their NTKs}
\label{app:proof_differentiability}

Given \cref{thm:discr_characterization}, establishing the desired differentiability of $f_t$ can be done by separately proving similar results on both $f_t - f_0$ and $f_0$.

In both cases, this involves the differentiability of the following activation kernel $\app{\gK_{\phi}}{A}$ given another differentiable kernel $A$:
\begin{equation}
    \label{eq:act_kernel}
    \app{\gK_{\phi}}{A}\colon x, y \mapsto \mathbb{E}_{f\sim \app{\mathcal{GP}}{0,A}}\brackets*{\app{\phi}{\app{f}{x}}\app{\phi}{\app{f}{y}}},
\end{equation}
where $\app{\mathcal{GP}}{0,A}$ is a univariate centered Gaussian Process (GP) with covariance function $A$.
Indeed, the kernel-transforming operator $\gK_{\phi}$ is central in the recursive computation of the neural network conjugate kernel sss which determines the NTK (involved in $f_t - f_0 \in \gH_{k}^{\hgamma_g}$) as well as the behavior of the network at initialization (which follows a GP with the conjugate kernel as covariance).

Hence, our proof of \cref{thm:diff_ntk_informal} relies on the preservation of kernel smoothness through $\gK_{\phi}$, proved in \cref{app:proof_differentiability_act_kernel}, which ensures the smoothness of the conjugate kernel, the NTK and, in turn, of $f_t$ as addressed in \cref{app:proof_differentiability_networks} which concludes the overall proof.

Before developing these two main steps, we first need to state the following lemma showing the regularity of samples of a GP from the regularity of the corresponding kernel.

\begin{lemma}[GP regularity]
    \label{lem:gp_as_diff}
    Let $A\colon\R^n\times\R^n\rightarrow\R$ be a symmetric kernel.
    Let $V$ an open set such that $A$ is $\mathcal{C}^\infty$ on $V\times V$.
    Then the GP induced by the kernel $A$ has a.s.\ $\mathcal{C}^\infty$ sample paths on $V$.
\end{lemma}
\begin{proof}
    Because $A$ is $\mathcal{C}^\infty$ on $V\times V$, we know, from Theorem 2.2.2 of \citet{Adler1981} for example, that the corresponding GP $f$ is mean-square smooth on $V$.
    If we take $\alpha$ a $k$-th order multi-index, we also know, again from \citet{Adler1981}, that $\partial^\alpha f$ is also a GP with covariance kernel $\partial^\alpha A$.
    As $A$ is $\mathcal{C}^\infty$, $\partial^\alpha A$ then is differentiable and $\partial^\alpha f$ has partial derivatives which are mean-square continuous.
    Then, by the Corollary 5.3.12 of \citet{Scheuerer2009}, we can say that $\partial^\alpha f$ has continuous sample paths a.s.\ which means that $f\in \app{\mathcal{C}^k}{V}$.
    This proves the lemma.
\end{proof}

\subsubsection{\texorpdfstring{$\gK_{\phi}$}{K_phi} Preserves Kernel Differentiability}
\label{app:proof_differentiability_act_kernel}

Given the definition of $\app{\gK_{\phi}}{A}$ in \cref{eq:act_kernel}, we choose to prove its differentiability via the dominated convergence theorem and Leibniz integral rule.
This requires to derive separate proofs depending on whether $\phi$ is smooth everywhere or almost everywhere.

The former case allows us to apply strong GP regularity results leading to $\gK_{\phi}$ preserving kernel smoothness without additional hypothesis in \cref{lem:rec_diff_smooth}.
The latter case requires a careful decomposition of the expectation of \cref{eq:act_kernel} via two-dimensional Gaussian sampling to circumvent the non-differentiability points of $\phi$, yielding additional constraints on kernels $A$ for $\gK_{\phi}$ to preserve their smoothness in \cref{lem:rec_diff_non_smooth}; these constraints are typically verified in the case of neural networks with bias (cf.\ \cref{app:proof_differentiability_networks}).

In any case, we emphasize that these differentiability constraints may not be tight and are only sufficient conditions ensuring the smoothness of $\app{\gK_{\phi}}{A}$.

\begin{lemma}[$\gK_{\phi}$ with smooth $\phi$]
    \label{lem:rec_diff_smooth}
    Let $A\colon\R^n\times\R^n\rightarrow\R$ be a symmetric positive semi-definite kernel and $\phi\colon\R\rightarrow\R$.
    We suppose that $\phi$ is an activation function following \cref{hyp:discr_archi,hyp:discr_archi_smooth_act}; in particular, $\phi$ is smooth.

    Let $y \in \R^n$ and $U$ be an open subset of $\R^n$ such that $x \mapsto \app{A}{x, x}$ and $x \mapsto \app{A}{x, y}$ are infinitely differentiable over $U$.
    Then, $x \mapsto \app{\app{\gK_{\phi}}{A}}{x, x}$ and $x \mapsto \app{\app{\gK_{\phi}}{A}}{x, y}$ are infinitely differentiable over $U$ as well.
\end{lemma}
\begin{proof}
    In order to prove the smoothness results over the open set $U$, it suffices to prove them on any open bounded subset of $U$.
    Let then $V \subseteq U$ be an open bounded set.
    Without loss of generality, we can assume that its closure $\cl V$ is also included in $U$.

    We define $B_1$ and $B_2$ from \cref{eq:act_kernel} as follows, for all $x \in V$:
    \begin{align}
        \label{eq:k_phi_app}
        \app{B_1}{x} \triangleq \app{\app{\gK_{\phi}}{A}}{x, y} = \mathbb{E}_{f\sim \app{\mathcal{GP}}{0,A}}\brackets*{\app{\phi}{\app{f}{x}}\app{\phi}{\app{f}{y}}}, && \app{B_2}{x} \triangleq \app{\app{\gK_{\phi}}{A}}{x, x} = \mathbb{E}_{f\sim \app{\mathcal{GP}}{0,A}}\brackets*{\app{\phi}{\app{f}{x}}^2}.
    \end{align}
    In the previous expressions, \cref{lem:gp_as_diff} tells us that we can take $f$ to be $\mathcal{C}^\infty$ over $\cl V$ with probability one.
    Hence, $B_1$ and $B_2$ are expectations of smooth functions over $V$.
    We seek to apply the dominated convergence theorem to prove that $B_1$ and $B_2$ are, in turn, smooth over $V$.
    To this end, we prove in the following the integrability of the derivatives of their integrands.

    Let $\alpha=\parentheses*{\alpha_1,\ldots,\alpha_n}\in\mathbb{N}^n$.
    Using the usual notations for multi-indexed partial derivatives, via a multivariate Faà di Bruno formula \citep{Leipnik2007}, we can write the derivatives $\partial^{\alpha} \parentheses*{\psi\circ f}$ at $x\in V$ for $\psi \in \braces*{\phi, \phi^2}$ as a weighted sum of terms of the form:
    \begin{equation}
        \app{\psi^{\parentheses*{j}}}{\app{f}{x}}\app{g_1}{x}\cdots \app{g_N}{x},
    \end{equation}
    where the $g_i$s are partial derivatives of $f$ at $x$.
    As $A$ is $\mathcal{C}^\infty$ over $V$, each of the $g_i$s is thus a GP with a $\mathcal{C}^\infty$ covariance function by \cref{lem:gp_as_diff}.
    We can also write for all $x\in V$:
    \begin{equation}
        \begin{aligned}
            \lrverts*{\app{\psi^{\parentheses*{j}}}{\app{f}{x}}\app{g_1}{x}\cdots \app{g_N}{x}} & \leq \sup_{z\in \cl V} \lrverts*{\app{\psi^{\parentheses*{j}}}{\app{f}{z}}\app{g_1}{z}\cdots \app{g_N}{z}} \\
            & \leq \sup_{z_0\in \cl V} \lrverts*{\app{\psi^{\parentheses*{j}}}{\app{f}{z_0}}} \sup_{z_1\in \cl V}\lrverts*{\app{g_1}{z_1}} \cdots \sup_{z_N\in \cl V} \lrverts*{\app{g_N}{z_N}}.
        \end{aligned}
    \end{equation}

    For each $i$, because the covariance function of $g_i$ is smooth over the compact set $\cl V$, its variance admits a maximum in $\cl V$ and we take $\sigma^2_i$ the double of its value.
    We then know from \citet{Adler1990}, that there is an $M_i$ such that:
    \begin{equation}
        \label{eq:integrability_1}
        \forall m \in \sN, \mathbb{E}_{f\sim \app{\mathcal{GP}}{0,A}}\brackets*{\sup_{z_i\in \cl V} \lrverts*{\app{g_i}{z_i}}^m} \leq M^m_i \mathbb{E} \lrverts*{Y_i}^m,
    \end{equation}
    where $Y_i$ is a Gaussian distribution which variance is $\sigma_i^2$, the right-hand side thus being finite.

    We also have, by \cref{hyp:discr_archi} from \cref{sec:assumptions}, that:
    \begin{equation}
        \label{eq:integrability_2}
        \sup_{z\in \cl V} \lrverts*{\app{\phi^{\parentheses*{j}}}{\app{f}{z}}}^2 \leq \sup_{z\in \cl V} \parentheses*{\lambda_1^{\parentheses*{j}}\lrverts*{\app{f}{z}}+\lambda_2^{\parentheses*{j}}}^2,
    \end{equation}
    which is shown to be integrable over $f$ by the same arguments as for the $g_i$s.
    Moreover, the Faà di Bruno formula decomposes $\psi^{\parentheses*{j}}$ when $\psi = \phi^2$ as a weighted sum of terms of the form $\phi^{\parentheses*{l}}\phi^{\parentheses*{l'}}$ with $l, l' \in \sN$.
    Therefore, thanks to similar arguments, for any $\psi\in\braces*{\phi, \phi^2}$:
    \begin{equation}
        \mathbb{E}_{f\sim \app{\mathcal{GP}}{0,A}} \brackets*{\sup_{z\in \cl V} \lrverts*{\app{\psi^{\parentheses*{j}}}{\app{f}{z}}}^2} < \infty.
    \end{equation}

    Now, by using the Cauchy-Schwarz inequality, we have that:
    \begin{equation}
        \begin{multlined}
            \mathbb{E}\brackets*{\sup_{z_0\in \cl V} \lrverts*{\app{\psi^{\parentheses*{j}}}{\app{f}{z_0}}} \sup_{z_1\in \cl V}\lrverts*{\app{g_1}{z_1}} \cdots \sup_{z_N\in \cl V} \lrverts*{\app{g_N}{z_N}}} \\
            \leq \sqrt{\mathbb{E}\brackets*{\sup_{z_0\in \cl V} \lrverts*{\app{\psi^{\parentheses*{j}}}{\app{f}{z_0}}}^2}}\sqrt{\mathbb{E}\brackets*{\sup_{z_1\in \cl V}\lrverts*{\app{g_1}{z_1}}^2 \cdots \sup_{z_N\in \cl V} \lrverts*{\app{g_N}{z_N}}^2}}.
        \end{multlined}
    \end{equation}
    By iterated applications of the Cauchy-Schwarz inequality and using the previous arguments, we can then show that:
    \begin{equation}
        \sup_{z_0\in \cl V} \lrverts*{\app{\psi^{\parentheses*{j}}}{\app{f}{z_0}}} \sup_{z_1\in \cl V}\lrverts*{\app{g_1}{z_1}} \cdots \sup_{z_N\in \cl V} \lrverts*{\app{g_N}{z_N}}
    \end{equation}
    is integrable over $f$.
    Additionally, note that by the same arguments for the case of $\psi = \phi$, a multiplication by $\app{\phi}{\app{f}{y}}$ preserves this integrability.

    We can then write for all $x \in V$, by a standard corollary of the dominated convergence theorem:
    \begin{align}
        \partial^{\alpha} \app{B_1}{x} = \mathbb{E}_{f\sim \app{\mathcal{GP}}{0,A}}\brackets*{\partial^{\alpha}\rightvert*{\parentheses*{\phi\circ f}}_{x}\app{\phi}{\app{f}{y}}}, && \partial^{\alpha} \app{B_2}{x} = \mathbb{E}_{f\sim \app{\mathcal{GP}}{0,A}}\brackets*{\partial^{\alpha}\rightvert*{\parentheses*{\phi^2\circ f}}_{x}},
    \end{align}
    which shows that $B_1$ and $B_2$ are $\mathcal{C}^\infty$ over $V$.
    This in turn means that $B_1$ and $B_2$ are $\mathcal{C}^\infty$ over $U$.
\end{proof}

\begin{lemma}[$\gK_{\phi}$ with piecewise smooth $\phi$]
    \label{lem:rec_diff_non_smooth}
    Let $A\colon\R^n\times\R^n\rightarrow\R$ be a symmetric positive semi-definite kernel and $\phi\colon\R\rightarrow\R$.
    We suppose that $\phi$ is an activation function following \cref{hyp:discr_archi,hyp:discr_archi_bias} (cf.\ \cref{sec:assumptions}).
    Let us define the matrix $\Sigma_A^{x,y}$ as:
    \begin{equation}
        \Sigma_A^{x,y} \triangleq \begin{pmatrix} \app{A}{x,x} & \app{A}{x,y} \\ \app{A}{x,y} & \app{A}{y,y} \end{pmatrix}.
    \end{equation}

    Let $y \in \R^n$ and $U$ be an open subset of $\R^n$ such that $x \mapsto \app{A}{x, x}$ and $x \mapsto \app{A}{x, y}$ are infinitely differentiable over $U$.
    Then, $x \mapsto \app{\app{\gK_{\phi}}{A}}{x, x}$ and $x \mapsto \app{\app{\gK_{\phi}}{A}}{x, y}$ are infinitely differentiable over $U' \triangleq \midbracesx*{x \in U}{\Sigma_A^{x,y} \text{ is invertible}}$.
\end{lemma}
\begin{proof}
    Since $\det\Sigma_A^{x,y}$ is smooth over $U$ and $U' = \midbracesx*{x \in U}{\det\Sigma_A^{x,y} > 0}$, $U'$ is an open subset of $U$.
    Hence, similarly to the proof of \cref{lem:rec_diff_smooth}, it suffices to prove the smoothness of $B_1$ and $B_2$ defined in \cref{eq:k_phi_app} on any open bounded subset of $U'$.
    Let then $V \subseteq \R^n$ be an open bounded set such that $\cl V \subseteq U'$.
    Note that $\det\Sigma_A^{x,y} > 0$ implies that $\app{A}{x, x} > 0$ and $\app{A}{y, y} > 0$.

    We will conduct in the following the proof that $B_1$ is smooth over $V$.
    Like in the proof of \cref{lem:rec_diff_smooth}, the smoothness of $B_2$ follows the same reasoning with little adaptation; in particular, it relies on the fact that $\app{A}{x, x} > 0$ for all $x \in U'$, making its square root smooth over $\cl V$.

    Since the dominated convergence theorem cannot be directly applied from \cref{eq:k_phi_app} because of $\phi$'s potential non-differentiability points $D$, let us decompose its expression for all $x \in U'$:
    \begin{align}
        \app{B_1}{x} & = \mathbb{E}_{f\sim \app{\mathcal{GP}}{0,A}}\brackets*{\app{\phi}{\app{f}{x}}\app{\phi}{\app{f}{y}}} = \mathbb{E}_{\parentheses*{z, z'} \sim \app{\gN}{\parentheses*{0, 0}, \Sigma_A^{x,y}}} \brackets*{\app{\phi}{z} \app{\phi}{z'}} \\
        & = \mathbb{E}_{z' \sim \app{\gN}{0, \app{A}{y, y}}} \brackets*{\app{\phi}{z'} \mathbb{E}_{z \sim \app{\gN}{\frac{\app{A}{x,y}}{\app{A}{y,y}}z', \app{A}{x, x} - \frac{\app{A}{x, y}^2}{\app{A}{y, y}}}} \brackets*{\app{\phi}{z}}} \\
        & = \mathbb{E}_{z' \sim \app{\gN}{0, \app{A}{y, y}}} \brackets*{\app{\phi}{z'} \app{h}{z', x}},
    \end{align}
    where $h$ is defined as:
    \begin{align}
        \label{eq:k_phi_before_separation}
        \app{h}{z', x} \triangleq \int_{-\infty}^{+\infty} \app{\phi}{z} \cdot \frac{1}{\app{\sigma}{x}\sqrt{2\pi}} \erm^{-\frac{1}{2}\parentheses*{\frac{z - \app{\mu}{x}z'}{\app{\sigma}{x}}}^2} \dif z, && \app{\mu}{x} = \frac{\app{A}{x,y}}{\app{A}{y,y}}, && \app{\sigma}{x} = \sqrt{\frac{\det\Sigma_A^{x,y}}{\app{A}{y,y}}}.
    \end{align}
    Now, if $D = \braces*{c_1, \ldots, c_L}$ with $L \in \sN$ and $c_1 < \cdots < c_L$, the $c_l$s constitute the non-differentiability points of $\phi$; we can then decompose the integration of $\phi$ in \cref{eq:k_phi_before_separation} as a sum of $L+1$ integrals with differentiable integrands, using $c_0 = -\infty$ and $c_{L+1} = +\infty$:
    \begin{equation}
        \app{h}{\varepsilon, x} = \frac{1}{\sqrt{2\pi}} \sum_{l = 0}^L \int_{c_l}^{c_{l+1}} \frac{\app{\phi}{z}}{\app{\sigma}{x}} \erm^{-\frac{1}{2}\parentheses*{\frac{z - \app{\mu}{x}z'}{\app{\sigma}{x}}}^2} \dif z.
    \end{equation}
    Therefore, it remains to show the smoothness of all applications $B_{1, l}$ for $l \in \lrbrackets*{0, L}$ defined as:
    \begin{equation}
        \label{eq:k_phi_after_separation}
        \app{B_{1, j}}{x} = \mathbb{E}_{z' \sim \app{\gN}{0, \app{A}{y, y}}} \brackets*{\int_{c_l}^{c_{l+1}} \frac{\app{\phi}{z'} \app{\phi}{z}}{\app{\sigma}{x} \sqrt{2\pi}} \erm^{-\frac{1}{2}\parentheses*{\frac{z - \app{\mu}{x}z'}{\app{\sigma}{x}}}^2} \dif z}.
    \end{equation}

    The rest of this proof unfolds similarly to the one of \cref{lem:rec_diff_smooth}.
    Indeed, the integrand of \cref{eq:k_phi_after_separation} is smooth over $\cl V$.
    There remains to show that all derivatives of this integrand are dominated by an integrable function of $z$ and $z'$.
    Consider the following integrand:
    \begin{equation}
        \app{\iota}{z, z', x} = \frac{\app{\phi}{z'} \app{\phi}{z}}{\app{\sigma}{x} \sqrt{2\pi}} \erm^{-\frac{1}{2}\parentheses*{\frac{z - \app{\mu}{x}z'}{\app{\sigma}{x}}}^2}.
    \end{equation}
    By applying the multivariate Faà di Bruno formula and noticing that $\sigma$ and $\mu$ are smooth over the closed set $\cl V$, we know that the derivatives of $\app{\iota}{z, z', x}$ with respect to $x$ for any derivation order are weighted sums of terms of the form:
    \begin{equation}
        z^k {z'}^{k'} \app{\kappa}{x} \frac{\app{\phi}{z'} \app{\phi}{z}}{\app{\sigma}{x} \sqrt{2\pi}} \erm^{-\frac{1}{2}\parentheses*{\frac{z - \app{\mu}{x}z'}{\app{\sigma}{x}}}^2},
    \end{equation}
    where $\kappa$ is a smooth function over $\cl V$ and $k, k' \in \sN$.
    Moreover, because $\sigma$, $\mu$ and $\kappa$ are smooth over the closed set $\cl V$ with positive values for $\sigma$, there are constants $a_1$, $a_2$ and $a_3$ such that:
    \begin{equation}
        \lrverts*{z^k {z'}^{k'} \app{\kappa}{x} \frac{\app{\phi}{z'} \app{\phi}{z}}{\app{\sigma}{x} \sqrt{2\pi}} \erm^{-\frac{1}{2}\parentheses*{\frac{z - \app{\mu}{x}z'}{\app{\sigma}{x}}}^2}} \leq \lrverts*{z^k {z'}^{k'} \app{\phi}{z'} \app{\phi}{z}} a_3 \erm^{-\frac{1}{2}\parentheses*{\frac{z - a_1 z'}{a_2}}^2},
    \end{equation}
    which is integrable over $z$ via \cref{hyp:discr_archi,eq:phi_bound}.
    Finally, let us notice that for some constants $b_1$, $b_2$ and $b_3$:
    \begin{equation}
        \int_{c_l}^{c_{l+1}} \lrverts*{z^k {z'}^{k'} \app{\phi}{z'} \app{\phi}{z}} a_3 \erm^{-\frac{1}{2}\parentheses*{\frac{z - a_1 z'}{a_2}}^2} \leq b_1 \E_{z \sim \app{\gN}{b_2 z', b_3}} \lrverts*{z^k {z'}^{k'} \app{\phi}{z'} \app{\phi}{z}},
    \end{equation}
    which is also integrable with respect to $\E_{z' \sim \app{\gN}{0, \app{A}{y, y}}}$ by similar arguments (see also the integrability of \cref{,eq:integrability_1} in \cref{lem:rec_diff_smooth}).
    This concludes the proof of integrability required to apply the dominated convergence theorem, allowing us to conclude about the smoothness of all $B_{1, j}$ and, in turn, of $B_1$ over $U'$.
\end{proof}

\begin{remark}[Relaxed condition for smoothness]
    \label{rk:relax_invertibility}
    The invertibility condition of \cref{lem:rec_diff_non_smooth} is actually stronger than needed: it suffices to assume that the rank of $\Sigma_A^{x,y}$ remains constant in a neighborhood of $x$.
\end{remark}

\subsubsection{Differentiability of Conjugate Kernels, NTKs and Discriminators}
\label{app:proof_differentiability_networks}

From the previous lemmas, we can then prove the results of \cref{sec:differentiability}.
We start by demonstrating the smoothness of the conjugate kernel for dense networks, and conclude in consequence about the smoothness of the NTK and trained network.

\begin{lemma}[Differentiability of the conjugate kernel]
    \label{lem:conjugate_diff}
    Let $k_{\mathrm{c}}$ be the conjugate kernel \citep{Lee2018b} of an infinite-width dense non-residual architecture such as in \cref{hyp:discr_archi}.
    For any $y \in \R^n$, the following holds for $A \in \braces*{k_{\mathrm{c}}, \app{\gK_{\phi'}}{k_{\mathrm{c}}}}$:
    \begin{itemize}
        \item if \cref{hyp:discr_archi_smooth_act} holds, then $x \mapsto \app{A}{x,x}$ and $x \mapsto \app{A}{x,y}$ are smooth everywhere over $\R^n$;
        \item if \cref{hyp:discr_archi_bias} holds, then $x \mapsto \app{A}{x,x}$ and $x \mapsto \app{A}{x,y}$ are smooth over an open set whose complement has null Lebesgue measure.
    \end{itemize}
\end{lemma}
\begin{proof}
    We define the following kernel:
    \begin{equation}
        \app{C_L^\phi}{x,y} = \mathbb{E}_{f\sim \app{\gG\gP}{0,C_{L-1}^\phi}}\brackets*{\app{\phi}{\app{f}{x}}\app{\phi}{\app{f}{y}}} + \beta^2 = \app{\gK_{\phi}}{C_{L-1}^\phi} + \beta^2,
    \end{equation}
    with:
    \begin{equation}
        \app{C_0^\phi}{x,y} = \frac{1}{n} x^\top y + \beta^2.
    \end{equation}
    We have that $k_{\mathrm{c}} = C_L^\phi$, with $L$ being the number of hidden layers in the network.
    Therefore, \cref{lem:rec_diff_smooth} ensures the smoothness result under \cref{hyp:discr_archi_smooth_act}.

    Let us now consider \cref{hyp:discr_archi_bias} (cf.\ the detailed assumption in \cref{sec:assumptions}); in particular, $\beta > 0$.
    We prove by induction over $L$ in the following that:
    \begin{itemize}
        \item $B_1\colon x \mapsto \app{C_L^\phi}{x, y}$ is smooth over $U = \midbracesx*{x\in\R^n}{\euclideannorm*{x} \neq \euclideannorm*{y}}$;
        \item $B_2\colon x \mapsto \app{C_L^\phi}{x, x}$ is smooth;
        \item for all $x, x' \in \R^n$ with $\euclideannorm*{x} \neq \euclideannorm*{x'}$, $\app{B_2}{x} \neq \app{B_2}{x'}$.
    \end{itemize}
    The result is immediate for $L = 0$.
    We now suppose that it holds for some $L\in\sN$ and prove that it also holds for $L+1$ hidden layers.
    Let us express $B_2$:
    \begin{equation}
        \app{B_2}{x} = \mathbb{E}_{\varepsilon\sim \app{\gN}{0, 1}}\brackets*{\app{\phi}{\varepsilon\sqrt{\app{C_L^\phi}{x,x} + \beta^2}}^2}.
    \end{equation}
    Using \cref{lem:rec_diff_non_smooth,rk:relax_invertibility}, the fact that $\beta > 0$ and the induction hypothesis ensures that $B_2$ is smooth.
    Moreover, \cref{hyp:discr_archi_bias}, in particular \cref{eq:hyp_relu}, allows us to assert that $\euclideannorm*{x} \neq \euclideannorm*{x'}$ implies $\app{B_2}{x} \neq \app{B_2}{x'}$.

    Finally, in order to apply \cref{lem:rec_diff_non_smooth} to prove the smoothness of $B_1$ over $U$, there remains to show that the following matrix is invertible:
    \begin{equation}
        \Sigma_\beta^{x,y} \triangleq \begin{pmatrix} \app{C_L^\phi}{x,x} + \beta^2 & \app{C_L^\phi}{x,y} + \beta^2 \\ \app{C_L^\phi}{x,y} + \beta^2 & \app{C_L^\phi}{y,y} + \beta^2 \end{pmatrix}.
    \end{equation}
    Let us compute its determinant:
    \begin{equation}
        \begin{aligned}
            \det \Sigma_\beta^{x,y} & = \parentheses*{\app{C_L^\phi}{x,x} + \beta^2}\parentheses*{\app{C_L^\phi}{y,y} + \beta^2} - \parentheses*{\app{C_L^\phi}{x,y} + \beta^2}^2 \\
            & = \det \Sigma_0^{x,y} + \beta^2 \parentheses*{\app{C_L^\phi}{x,x} + \app{C_L^\phi}{y,y} - 2\app{C_L^\phi}{x,y}}.
        \end{aligned}
    \end{equation}
    $C_L^\phi$ is a symmetric positive semi-definite kernel, thus:
    \begin{equation}
        \det \Sigma_\beta^{x,y} - \det \Sigma_0^{x,y} = \beta^2 \cdot \begin{pmatrix} 1 & -1 \end{pmatrix} \Sigma_0^{x,y} \begin{pmatrix} 1 \\ -1 \end{pmatrix} \geq 0.
    \end{equation}
    Hence, if $\det \Sigma_0^{x,y} > 0$, then $\det \Sigma_\beta^{x,y} > 0$.
    Besides, if $\det \Sigma_0^{x,y} = 0$, then:
    \begin{equation}
        \det \Sigma_\beta^{x,y} = \beta^2 \parentheses*{\sqrt{\app{B_2}{x}} - \sqrt{\app{B_2}{y}}}^2 > 0,
    \end{equation}
    for all $x \in U$.
    This proves that $B_1$ is indeed smooth over $U$, and concludes the induction.

    Note that $U$ is indeed an open set whose complement in $\R^n$ has null Lebesgue measure.
    Overall, the result is thus proved for $A = k_{\mathrm{c}}$; a similar reasoning using the previous induction result also transfers the result to $A = \app{\gK_{\phi'}}{k_{\mathrm{c}}}$.
\end{proof}

\begin{repprop}{prop:ntk_differentiability}[Differentiability of $k$]
    Let $k$ be the NTK of an infinite-width architecture following \cref{hyp:discr_archi}.
    For any $y \in \R^n$:
    \begin{itemize}
        \item if \cref{hyp:discr_archi_smooth_act} holds, then $\app{k}{\cdot,y}$ is smooth everywhere over $\R^n$;
        \item if \cref{hyp:discr_archi_bias} holds, then $\app{k}{\cdot,y}$ is smooth almost everywhere over $\R^n$, in particular over an open set whose complement has null Lebesgue measure.
    \end{itemize}
\end{repprop}
\begin{proof}
    According to the definitions of \citet{Jacot2018}, \citet{Arora2019} and \citet{Huang2020}, the smoothness of the kernel is guaranteed whenever the conjugate kernel $k_{\mathrm{c}}$ and its transform $\app{\gK_{\phi'}}{k_{\mathrm{c}}}$ are smooth; the result of \cref{lem:conjugate_diff} then applies.
    In the case of residual networks, there is a slight adaptation of the formula which does not change its regularity.
    Regarding convolutional networks, their conjugate kernels and NTKs involve finite combinations of such dense conjugate kernels and NTKs, thereby preserving their smoothness almost everywhere.
\end{proof}

\begin{reptheorem}{thm:diff_ntk_informal}[Differentiability of $f_t$]
    Let $f_t$ be a solution to \cref{eq:discr_ode} under \cref{hyp:finite_gamma,hyp:differentiable_a_b} by \cref{thm:discr_characterization}, with $k$ the NTK of an infinite-width neural network and $f_0$ an initialization of the latter.

    Then, under \cref{hyp:discr_archi,hyp:discr_archi_smooth_act}, $f_t$ is smooth everywhere.
    Under \cref{hyp:discr_archi,hyp:discr_archi_bias}, $f_t$ is smooth almost everywhere, in particular over an open set whose complement has null Lebesgue measure.
\end{reptheorem}
\begin{proof}
    From \cref{thm:discr_characterization}, we have:
    \begin{equation}
        f_t-f_0 = \app{\Tk{k}{\hgamma}}{\int_0^t\mgrad{\hgamma} \app{\Lo_{\halpha}}{f_s}\dif s}.
    \end{equation}
    We observe that $\app{\Tk{k}{\hgamma}}{h}$ has, for any $h \in \app{L^2}{\hgamma}$, a regularity which only depends on the regularity of $\app{k}{\cdot,y}$ for $y\in\supp{\hgamma}$.
    Indeed, if $\app{k}{\cdot,y}$ is smooth in a certain neighborhood $V$ for every such $y$, we can bound $\partial^{\alpha} \app{k}{\cdot,y}$ over $V$ for every $y$ and any multi-index $\alpha$ and then use dominated convergence to prove that $\app{\app{\Tk{k}{\hgamma}}{h}}{\cdot}$ is smooth over $V$.
    Therefore, the regularity of $\app{k}{\cdot, y}$ transfers to $f_t - f_0$.
    Given \cref{prop:ntk_differentiability}, there remains to prove the same result for $f_0$.

    The theorem then follows from the fact that $f_0$ has the same regularity as its conjugate kernel $k_{\mathrm{c}}$ thanks to \cref{lem:gp_as_diff} because $f_0$ is a sample from the GP with kernel $k_{\mathrm{c}}$.
    \cref{lem:conjugate_diff} shows the smoothness almost everywhere over an open set of applications $x \mapsto \app{k_{\mathrm{c}}}{x, y}$; to apply \cref{lem:gp_as_diff} and concludes this proof, this result must be generalized to prove the smoothness of $k_{\mathrm{c}}$ with respect to both its inputs.
    This can be done by generalizing the proofs of \cref{lem:rec_diff_smooth,lem:rec_diff_non_smooth} to show the smoothness of kernels with respect to both $x$ and $y$, with the same arguments than for $x$ alone.
\end{proof}

\begin{remark}
    In the previous theorem, $f_0$ is considered to be the initialization of the network.
    However, we highlight that, without loss of generality, this theorem encompasses the change of training distribution $\hgamma$ during GAN training.
    Indeed, as explained in \cref{sec:ntk_intro}, $f_0$ after $j$ steps of generator training can actually be decomposed as, for some $h_k \in \app{L^2}{\hgamma_k}$, $k \in \lrbrackets*{1, j}$:
    \begin{equation}
        f_0 = f^0 + \sum_{k=1}^j \app{\Tk{k}{\hgamma_k}}{h_k},
    \end{equation}
    by taking into account the updates of the discriminators over the whole GAN optimization process.
    The proof of \cref{thm:diff_ntk_informal} can then be applied similarly in this case by showing the differentiability of $f_0 - f^0$ on the one hand and of $f^0$, being the initialization of the discriminator at the very beginning of GAN training, on the other hand.
\end{remark}

\subsection{Dynamics of the Generated Distribution}
\label{app:dynamics_generated_proof}

We derive in this proposition the differential equation governing the dynamics of the generated distribution.

\begin{repprop}{prop:gen_dynamics}[Dynamics of $\alpha_{\tg}$]
    Under \cref{hyp:discr_archi,hyp:discr_archi_smooth_act}, \cref{eq:gen_descent} is well-posed.
    Let us consider its continuous-time version with discriminators trained on discrete distributions as described above:
    \begin{equation}
        \label{eq:gen_descent_continuous}
        \partial_{\tg} \theta_{\tg} = - \E_{z \sim p_{z}} \brackets*{\grad_\theta \app{g_{\tg}}{z}^{\top} \grad_{x} \rightvert*{\app{c_{f^\star_{\halpha_{g_{\tg}}}}}{x}}_{x = \app{g_{\tg}}{z}}}.
    \end{equation}
    This yields, with $k_{g_{\tg}}$ the NTK of the generator $g_{\tg}$:
    \begin{equation}
        \partial_{\tg} g_{\tg} = - \app{\Tk{k_{g_{\tg}}}{p_{z}}}{z \mapsto \grad_{x} \rightvert*{\app{c_{f^{\star}_{\halpha_{g_{\tg}}}}}{x}}_{x = \app{g_{\tg}}{z}}}.
    \end{equation}
    Equivalently, the following continuity equation holds for the joint distribution $\alpha^{z}_{\tg} \triangleq \parentheses*{\id, g_{\ell}}_\sharp p_{z}$:
    \begin{equation}
        \partial_{\tg} \alpha^{z}_{\tg} = - \grad_{x} \cdot \parentheses*{\alpha^{z}_{\tg} \app{\Tk{k_{g_{\tg}}}{p_{z}}}{z \mapsto \grad_{x} \rightvert*{\app{c_{f^{\star}_{\halpha_{g_{\tg}}}}}{x}}_{x = \app{g_{\tg}}{z}}}}.
    \end{equation}
\end{repprop}
\begin{proof}
    \cref{hyp:discr_archi,hyp:discr_archi_smooth_act} ensure, via \cref{prop:ntk_differentiability,thm:diff_ntk_informal} that the trained discriminator is differentiable everywhere at all times, whatever the state of the generator.
    Therefore, \cref{eq:gen_descent} is well-posed.

    By following \citet[Equation~(5)]{Mroueh2019}'s reasoning on a similar equation, \cref{eq:gen_descent_continuous} yields the following generator dynamics for all inputs $z \in \R^d$:
    \begin{equation}
        \partial_{\tg} \app{g_{\tg}}{z} = - \E_{z' \sim p_{z}} \brackets*{\grad_{\theta_{\tg}} \app{g_{\tg}}{z}^\top \grad_{\theta_{\tg}} \app{g_{\tg}}{z'} \grad_{x} \rightvert*{\app{c_{f^{\star}_{\halpha_{g_{\tg}}}}}{x}}_{x = \app{g_{\tg}}{z'}}}.
    \end{equation}
    We recognize the NTK $k_{g_{\tg}}$ of the generator as:
    \begin{equation}
        \app{k_{g_{\tg}}}{z, z'} \triangleq \grad_{\theta_{\tg}} \app{g_{\tg}}{z}^\top \grad_{\theta_{\tg}} \app{g_{\tg}}{z'}.
    \end{equation}
    From this, we obtain the dynamics of the generator:
    \begin{equation}
        \partial_{\tg} g_{\tg} = - \app{\Tk{k_{g_{\tg}}}{p_{z}}}{z \mapsto \grad_{x} \rightvert*{\app{c_{f^{\star}_{\halpha_{g_{\tg}}}}}{x}}_{x = \app{g_{\tg}}{z}}}.
    \end{equation}

    In other words, the transported particles $\parentheses*{z, \app{g_{\tg}}{z}}$ have trajectories $X_{\tg}$ which are solutions of the Ordinary Differential Equation (ODE):
    \begin{equation}
        \dod{X_{\tg}}{\tg} = \parentheses*{0, \app{v_{\tg}}{X_{\tg}}},
    \end{equation}
    where:
    \begin{equation}
        v_{\tg}=-\app{\Tk{k_{g_{\tg}}}{p_{z}}}{z \mapsto \grad_{x} \rightvert*{\app{c_{f^{\star}_{\halpha_{g_{\tg}}}}}{x}}_{x = \app{g_{\tg}}{z}}}.
    \end{equation}
    Then, because $\alpha^{z}_{\tg} \triangleq \parentheses*{\id, g_{\ell}}_\sharp p_{z}$ is the induced transported density, following \citet{Ambrosio2014}, whenever the ODE above is well-defined and has unique solutions~(which is necessarily the case for any trained $g$), $\alpha^{z}_{\tg}$ verifies the continuity equation with the velocity field $v_{\tg}$:
    \begin{equation}
        \begin{aligned}
            \partial_{\tg} \alpha^{z}_{\tg} & = - \grad_{z, x} \cdot \parentheses*{\alpha^{z}_{\tg} \parentheses*{0, -\app{\Tk{k_{g_{\tg}}}{p_{z}}}{z \mapsto \grad_{x} \rightvert*{\app{c_{f^{\star}_{\halpha_{g_{\tg}}}}}{x}}_{x = \app{g_{\tg}}{z}}}}} \\
            & = \grad_{x} \cdot \parentheses*{\alpha^{z}_{\tg} \app{\Tk{k_{g_{\tg}}}{p_{z}}}{z \mapsto \grad_{x} \rightvert*{\app{c_{f^{\star}_{\halpha_{g_{\tg}}}}}{x}}_{x = \app{g_{\tg}}{z}}}}.
        \end{aligned}
    \end{equation}
    This yields the desired result.
\end{proof}

\subsection{Optimality in Concave Setting}
\label{app:optimality_concave}

We derive an optimality result for concave bounded loss functions of the discriminator and positive definite kernels.

\subsubsection{Assumptions}

We first assume that the NTK is positive definite over the training dataset.
\begin{assumption}[Positive definite kernel]
    \label{hyp:positive_definite_ntk}
    $k$ is positive definite over $\hgamma$.
\end{assumption}
This positive definiteness property equates for finite datasets to the invertibility of the mapping
\begin{equation}
    \label{eq:inv_integral_operator}
    \begin{aligned}
        \rightvert*{\Tk{k}{\hgamma}}_{\supp \hgamma}\colon \app{L^2}{\hgamma} & \to \app{L^2}{\hgamma} \\
        h & \mapsto \rightvert*{\app{\Tk{k}{\hgamma}}{h}}_{\supp \hgamma}
    \end{aligned}
    ,
\end{equation}
that can be seen as a multiplication by the invertible Gram matrix of $k$ over $\hgamma$.
We further discuss this hypothesis in \cref{app:characteristic_ntk}.

We also assume the following properties on the discriminator loss function.
\begin{assumption}[Concave loss]
    \label{hyp:concave_bounded_loss}
    $\Lo_{\halpha}$ is concave and bounded from above, and its supremum is reached on a unique point $y^\star$ in $\app{L^2}{\hgamma}$.
\end{assumption}
Moreover, we need for the sake of the proof a uniform continuity assumption on the solution to \cref{eq:discr_ode}.
\begin{assumption}[Solution continuity]
    \label{hyp:uniformly_continuous_solution}
    $t \mapsto \rightvert*{f_t}_{\supp \hgamma}$ is uniformly continuous over $\R_+$.
\end{assumption}
Note that these assumptions are verified in the case of LSGAN, which is the typical application of the optimality results that we prove in the following.

\subsubsection{Optimality Result}

\begin{prop}[Asymptotic optimality]
    \label{prop:optimality_concave}
    Under \cref{hyp:finite_gamma,hyp:kernel,hyp:differentiable_a_b,hyp:concave_bounded_loss,hyp:positive_definite_ntk,hyp:uniformly_continuous_solution}, $f_t$ converges pointwise when $t \to \infty$, and:
    \begin{align}
        \app{\Lo_{\halpha}}{f_t} \xrightarrow[t \to \infty]{} \app{\Lo_{\halpha}}{y^\star}, && f_\infty = f_0 + \app{\Tk{k}{\hgamma}}{\app{\rightvert*{\Tk{k}{\hgamma}}_{\supp \hgamma}^{-1}}{y^\star - \rightvert*{f_0}_{\supp \hgamma}}}, && \rightvert*{f_\infty}_{\supp \hgamma} = y^\star,
    \end{align}
    where we recall that:
    \begin{equation}
        y^\star = \argmax_{y \in \app{L^2}{\hgamma}} \app{\Lo_{\halpha}}{y}.
    \end{equation}
\end{prop}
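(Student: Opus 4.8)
The plan is to reduce everything to a finite-dimensional gradient flow on $\supp\hgamma$ and then run a Lyapunov argument. Since $\hgamma$ is a finite mixture of Diracs (\cref{hyp:finite_gamma}), $\app{L^2}{\hgamma}$ is finite-dimensional; write $G \triangleq \rightvert*{\Tk{k}{\hgamma}}_{\supp\hgamma}$ for the operator of \cref{eq:inv_integral_operator}, which is symmetric by symmetry of $k$ and, by \cref{hyp:positive_definite_ntk}, positive definite, so $G^{-1}$ is well defined and $\sprod{\cdot}{\cdot}_{G^{-1}} \triangleq \sprod{\cdot}{\app{G^{-1}}{\cdot}}_{\app{L^2}{\hgamma}}$ is an inner product. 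By \cref{thm:discr_characterization} (see \cref{eq:discr_dynamics}) the solution $f_\cdot$ exists and satisfies $f_t - f_0 = \app{\Tk{k}{\hgamma}}{w_t}$ with $w_t \triangleq \int_0^t \mgrad{\hgamma}\app{\Lo_{\halpha}}{f_s}\dif s \in \app{L^2}{\hgamma}$. Since $\mgrad{\hgamma}\app{\Lo_{\halpha}}{f_t}$ depends on $f_t$ only through $u_t \triangleq \rightvert*{f_t}_{\supp\hgamma}$, restricting \cref{eq:discr_ode} to $\supp\hgamma$ yields the autonomous ODE $\partial_t u_t = \app{G}{\mgrad{\hgamma}\app{\Lo_{\halpha}}{u_t}}$ on $\app{L^2}{\hgamma}$, with $\mathcal{C}^1$ right-hand side (Lipschitz, by \cref{hyp:differentiable_a_b}); this is precisely the gradient ascent of the concave functional $\Lo_{\halpha}$ for the metric $\sprod{\cdot}{\cdot}_{G^{-1}}$.

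Next I would analyse this flow. Put $V(t) \triangleq \tfrac12 \euclideannorm*{u_t - y^\star}_{G^{-1}}^2$. Using the restricted ODE, the cancellation of $G^{-1}G$, and the first-order inequality for the concave $\Lo_{\halpha}$ between $u_t$ and $y^\star$,
\begin{equation}
\partial_t V(t) = \sprod{u_t - y^\star}{\mgrad{\hgamma}\app{\Lo_{\halpha}}{u_t}}_{\app{L^2}{\hgamma}} \leq \app{\Lo_{\halpha}}{u_t} - \app{\Lo_{\halpha}}{y^\star} \leq 0 ,
\end{equation}
so $V$ is nonincreasing, hence converges, and the trajectory $\braces*{u_t}_{t \geq 0}$ is bounded. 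Moreover $\partial_t \app{\Lo_{\halpha}}{u_t} = \sprod{\mgrad{\hgamma}\app{\Lo_{\halpha}}{u_t}}{\app{G}{\mgrad{\hgamma}\app{\Lo_{\halpha}}{u_t}}}_{\app{L^2}{\hgamma}} \geq 0$, so $t \mapsto \app{\Lo_{\halpha}}{u_t}$ is nondecreasing and bounded above by $\app{\Lo_{\halpha}}{y^\star}$; integrating the display above gives $\int_0^{\infty}\parentheses*{\app{\Lo_{\halpha}}{y^\star} - \app{\Lo_{\halpha}}{u_t}}\dif t < \infty$ with a nonnegative nonincreasing integrand, forcing $\app{\Lo_{\halpha}}{u_t} \to \app{\Lo_{\halpha}}{y^\star}$ (alternatively, this is where \cref{hyp:uniformly_continuous_solution} enters, allowing the same conclusion via Barbalat's lemma). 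Since $\app{L^2}{\hgamma}$ is finite-dimensional and the trajectory bounded, it has limit points, and any limit point $\bar u$ satisfies $\app{\Lo_{\halpha}}{\bar u} = \app{\Lo_{\halpha}}{y^\star} = \sup \Lo_{\halpha}$ by continuity of $\Lo_{\halpha}$; the uniqueness clause of \cref{hyp:concave_bounded_loss} then gives $\bar u = y^\star$, so $y^\star$ is the unique limit point and $u_t \to y^\star$ in $\app{L^2}{\hgamma}$.

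It remains to get the closed form and pointwise convergence. Restricting $f_t - f_0 = \app{\Tk{k}{\hgamma}}{w_t}$ to $\supp\hgamma$ gives $u_t - \rightvert*{f_0}_{\supp\hgamma} = \app{G}{w_t}$, hence $w_t = \app{G^{-1}}{u_t - \rightvert*{f_0}_{\supp\hgamma}} \to \app{G^{-1}}{y^\star - \rightvert*{f_0}_{\supp\hgamma}}$ in $\app{L^2}{\hgamma}$. For each fixed $x \in \Omega$ one has $\app{f_t}{x} = \app{f_0}{x} + \sprod{\app{k}{x, \cdot}}{w_t}_{\app{L^2}{\hgamma}}$ with $\app{k}{x,\cdot} \in \app{L^2}{\hgamma}$ (finite mixture of Diracs, \cref{hyp:kernel}), so $\app{f_t}{x}$ converges; thus $f_t \to f_\infty \triangleq f_0 + \app{\Tk{k}{\hgamma}}{\app{G^{-1}}{y^\star - \rightvert*{f_0}_{\supp\hgamma}}}$ pointwise, restricting once more gives $\rightvert*{f_\infty}_{\supp\hgamma} = \rightvert*{f_0}_{\supp\hgamma} + \app{G}{\app{G^{-1}}{y^\star - \rightvert*{f_0}_{\supp\hgamma}}} = y^\star$, and $\app{\Lo_{\halpha}}{f_t} = \app{\Lo_{\halpha}}{u_t} \to \app{\Lo_{\halpha}}{y^\star}$ from the previous step. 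I expect the middle step — upgrading boundedness of the flow to genuine convergence — to be the crux: it rests on the interplay of the $G^{-1}$-Lyapunov function, concavity, and uniqueness of the maximizer (with \cref{hyp:uniformly_continuous_solution} smoothing over the technical point), whereas the reduction to finite dimensions and the final bookkeeping are routine.
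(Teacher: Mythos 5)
Your proposal is correct, but it takes a genuinely different route from the paper. The paper first proves that $\mgrad{\hgamma} \app{\Lo_\halpha}{f_t} \to 0$: it shows $\od{\app{\Lo_\halpha}{f_t}}{t} = \euclideannorm*{\app{\Tk{k}{\hgamma}}{\mgrad{\hgamma} \app{\Lo_\halpha}{f_t}}}_{\gH_k^\hgamma}^2 \geq 0$, invokes Barb\u{a}lat's lemma — which is exactly where \cref{hyp:uniformly_continuous_solution} is needed — and uses positive definiteness to upgrade the RKHS semi-norm to a norm; it then deduces $\rightvert*{f_t}_{\supp \hgamma} \to y^\star$ from concavity and uniqueness of the maximizer, and finishes with a separate lemma stating $f = \app{\Tk{k}{\hgamma}}{\app{\rightvert*{\Tk{k}{\hgamma}}_{\supp \hgamma}^{-1}}{\rightvert*{f}_{\supp \hgamma}}}$ for $f \in \gH_k^\hgamma$. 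You instead reduce to the finite-dimensional flow $\partial_t u_t = \app{G}{\mgrad{\hgamma}\app{\Lo_\halpha}{u_t}}$ and run a Lyapunov argument on $V(t) = \tfrac12 \euclideannorm*{u_t - y^\star}_{G^{-1}}^2$, combining the concavity first-order inequality, monotonicity of the loss along the flow, and an integrability argument to get $\app{\Lo_\halpha}{u_t} \to \app{\Lo_\halpha}{y^\star}$, then compactness plus uniqueness of the maximizer to get $u_t \to y^\star$; your closed-form step re-derives the paper's inversion lemma inline via $w_t = \app{G^{-1}}{u_t - \rightvert*{f_0}_{\supp\hgamma}}$. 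Your approach buys two things: it dispenses with \cref{hyp:uniformly_continuous_solution} (you correctly flag Barb\u{a}lat only as an optional alternative), and it yields boundedness of the restricted trajectory for free from $V$ being nonincreasing — a point the paper's "gradient $\to 0$ hence $\hat f_t \to y^\star$" step glosses over. The paper's route, in exchange, stays entirely at the level of the functional dynamics and the RKHS semi-norm without introducing the $G^{-1}$ metric. One cosmetic remark: \cref{hyp:differentiable_a_b} gives a Lipschitz, not necessarily $\mathcal{C}^1$, right-hand side for the reduced ODE; this is harmless since existence and uniqueness already follow from \cref{thm:discr_characterization}, which is all you actually use.
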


This result ensures that, for concave losses such as LSGAN, the optimum for $\Lo_{\halpha}$ in $\app{L^2}{\Omega}$ is reached for infinite training times by neural network training in the infinite-width regime when the NTK of the discriminator is positive definite.
However, this also provides the expression of the optimal network outside $\supp \hgamma$ thanks to the smoothing of $\hgamma$.

In order to prove this proposition, we need the following intermediate results: the first one about the functional gradient of $\Lo_\halpha$ on the solution $f_t$; the second one about a direct application of positive definite kernels showing that one can retrieve $f \in \gH_k^\hgamma$ over all $\Omega$ from its restriction to $\supp \hgamma$.

\begin{lemma}
    \label{lem:zero_gradient}
    Under \cref{hyp:finite_gamma,hyp:kernel,hyp:differentiable_a_b,hyp:concave_bounded_loss,hyp:positive_definite_ntk,hyp:uniformly_continuous_solution}, $\mgrad{\hgamma} \app{\Lo_\halpha}{f_t} \to 0$ when $t \to \infty$.
    Since $\supp \hgamma$ is finite, this limit can be interpreted pointwise.
\end{lemma}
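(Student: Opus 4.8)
The plan is to use the discriminator loss $\app{\Lo_\halpha}{f_\cdot}$ itself as a Lyapunov functional for \cref{eq:discr_ode} and then close with a Barbalat-type argument. Write $g_t \triangleq \mgrad{\hgamma} \app{\Lo_\halpha}{f_t} \in \app{L^2}{\hgamma}$. Since $\supp \hgamma$ is finite, $\app{L^2}{\hgamma}$ is finite-dimensional and $g_t$ is obtained from the vectors $\parentheses*{\app{a'}{\app{f_t}{x}}}_x$ and $\parentheses*{\app{b'}{\app{f_t}{x}}}_x$ by reweighting with fixed coefficients (the densities of $\halpha,\hbeta$ with respect to $\hgamma$); hence by \cref{hyp:differentiable_a_b} the map $\rightvert*{f_t}_{\supp \hgamma} \mapsto g_t$ is globally Lipschitz.

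First I would differentiate the loss along the flow. The curve $t \mapsto \rightvert*{f_t}_{\supp \hgamma}$ is $\mathcal{C}^1$, its derivative $\app{\rightvert*{\Tk{k}{\hgamma}}_{\supp \hgamma}}{g_t}$ being continuous in $t$ by the previous remark; the chain rule and the definition of the functional gradient then give
\begin{equation}
    \od{}{t}\app{\Lo_\halpha}{f_t} = \sprod{g_t}{\partial_t \rightvert*{f_t}_{\supp \hgamma}}_{\app{L^2}{\hgamma}} = \sprod{g_t}{\app{\rightvert*{\Tk{k}{\hgamma}}_{\supp \hgamma}}{g_t}}_{\app{L^2}{\hgamma}}.
\end{equation}
By \cref{hyp:kernel}, $\rightvert*{\Tk{k}{\hgamma}}_{\supp \hgamma}$ is self-adjoint and positive semi-definite on $\app{L^2}{\hgamma}$; by \cref{hyp:positive_definite_ntk} it is in fact positive definite, hence admits a smallest eigenvalue $\lambda > 0$, so that $\od{}{t}\app{\Lo_\halpha}{f_t} \geq \lambda \|g_t\|_{\app{L^2}{\hgamma}}^2 \geq 0$.

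It follows that $t \mapsto \app{\Lo_\halpha}{f_t}$ is non-decreasing; it is bounded above by \cref{hyp:concave_bounded_loss}, so integrating the last inequality over $\brackets*{0, T}$ and letting $T \to \infty$ yields $\int_0^\infty \|g_t\|_{\app{L^2}{\hgamma}}^2 \dif t \leq \lambda^{-1}\parentheses*{\sup_{s} \app{\Lo_\halpha}{f_s} - \app{\Lo_\halpha}{f_0}} < \infty$. On the other hand, \cref{hyp:uniformly_continuous_solution} states that $t \mapsto \rightvert*{f_t}_{\supp \hgamma}$ is uniformly continuous, so composing with the Lipschitz map of the first paragraph, $t \mapsto g_t$ is uniformly continuous into $\app{L^2}{\hgamma}$. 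Then $g_t \to 0$: otherwise there would exist $\epsilon > 0$ and $t_n \to \infty$ with $\|g_{t_n}\|_{\app{L^2}{\hgamma}} \geq \epsilon$, and by uniform continuity $\|g_t\|_{\app{L^2}{\hgamma}} \geq \epsilon/2$ on intervals $\brackets*{t_n - \delta, t_n + \delta}$ of a fixed length $2\delta > 0$, which after extraction of a subsequence may be taken pairwise disjoint, contradicting $\int_0^\infty \|g_t\|_{\app{L^2}{\hgamma}}^2 \dif t < \infty$. Since $\supp \hgamma$ is finite, convergence in $\app{L^2}{\hgamma}$ coincides with pointwise convergence, which gives the stated interpretation.

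I expect the main obstacle to be the second paragraph: carefully justifying that $t \mapsto \app{\Lo_\halpha}{f_t}$ is differentiable with derivative equal to the displayed quadratic form — essentially a chain rule for the functional gradient along the $\mathcal{C}^1$ trajectory $f_\cdot$ — and extracting the spectral gap $\lambda > 0$ from \cref{hyp:positive_definite_ntk} (using that positive-definiteness over $\hgamma$ is exactly invertibility of the Gram matrix of $k$ on $\supp \hgamma$, hence of $\rightvert*{\Tk{k}{\hgamma}}_{\supp \hgamma}$). The a priori integrability estimate and the concluding Barbalat-type argument are then routine.
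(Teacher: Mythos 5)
Your proposal is correct and takes essentially the same route as the paper's proof: the loss $\app{\Lo_\halpha}{f_t}$ serves as a Lyapunov functional whose time-derivative is the quadratic form $\sprod{g_t}{\app{\rightvert*{\Tk{k}{\hgamma}}_{\supp \hgamma}}{g_t}}_{\app{L^2}{\hgamma}}$, boundedness from above gives monotone convergence of the loss, and \cref{hyp:uniformly_continuous_solution} combined with the Lipschitzness of $a',b'$ feeds a Barbălat-type argument, with finiteness of $\supp\hgamma$ giving the pointwise interpretation. The only (inessential) difference is organizational: you invoke the spectral gap of the Gram operator up front to get $\int_0^\infty \euclideannorm*{g_t}_{\app{L^2}{\hgamma}}^2 \dif t < \infty$ and re-derive the Barbălat conclusion by hand, whereas the paper applies Barbălat's lemma directly to $\od{}{t}\app{\Lo_\halpha}{f_t} = \euclideannorm*{\app{\Tk{k}{\hgamma}}{g_t}}_{\gH_k^\hgamma}^2$ and uses \cref{hyp:positive_definite_ntk} only at the end, to pass from the vanishing of this RKHS seminorm to $g_t \to 0$ in $\app{L^2}{\hgamma}$.
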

\begin{proof}
    \cref{hyp:finite_gamma,hyp:kernel,hyp:differentiable_a_b} ensure the existence and uniqueness of $f_t$, by \cref{thm:discr_characterization}.

    $t \mapsto \hat{f}_t \triangleq \rightvert*{f_t}_{\supp \hgamma}$ and $\Lo_\halpha$ being differentiable, $t \mapsto \app{\Lo_\halpha}{f_t}$ is differentiable, and:
    \begin{equation}
        \partial_t \app{\Lo_\halpha}{f_t} = \sprod{\mgrad{\hgamma} \app{\Lo_\halpha}{f_t}}{ \partial_t \hat{f}_t}_{\app{L^2}{\hgamma}} = \sprod{\mgrad{\hgamma} \app{\Lo_\halpha}{f_t}}{\app{\Tk{k}{\hgamma}}{\mgrad{\hgamma} \app{\Lo_{\halpha}}{f_t}}}_{\app{L^2}{\hgamma}},
    \end{equation}
    using \cref{eq:discr_ode}.
    This equates to:
    \begin{equation}
        \label{eq:loss_evolution}
        \partial_t \app{\Lo_\halpha}{f_t} = \euclideannorm*{\app{\Tk{k}{\hgamma}}{\mgrad{\hgamma} \app{\Lo_\halpha}{f_t}}}_{\gH_k^\hgamma}^2 \geq 0,
    \end{equation}
    where $\euclideannorm*{\cdot}_{\gH_k^\hgamma}$ is the semi-norm associated to the RKHS $\gH_k^\hgamma$.
    Note that this semi-norm is dependent on the restriction of its input to $\supp \hgamma$ only.
    Therefore, $t \mapsto \app{\Lo_\halpha}{f_t}$ is increasing.
    Since $\Lo_\halpha$ is bounded from above, $t \mapsto \app{\Lo_\halpha}{f_t}$ admits a limit when $t \to \infty$.

    We now aim at proving from the latter fact that $\partial_t \app{\Lo_\halpha}{f_t} \to 0$ when $t \to \infty$.
    We notice that $\euclideannorm*{\cdot}_{\gH_k^\hgamma}^2$ is uniformly continuous over $\app{L^2}{\hgamma}$ since $\supp \hgamma$ is finite, $\mgrad{\hgamma} \Lo_\halpha$ is uniformly continuous over $\app{L^2}{\hgamma}$ since $a'$ and $b'$ are Lipschitz-continuous, $\rightvert*{\Tk{k}{\hgamma}}_{\supp \hgamma}$ is uniformly continuous as it amounts to a finite matrix multiplication, and \cref{hyp:uniformly_continuous_solution} gives that $t \mapsto \rightvert*{f_t}_{\supp \hgamma}$ is uniformly continuous over $\R_+$.
    Therefore, their composition $t \mapsto \partial_t \app{\Lo_\halpha}{f_t}$ (from \cref{eq:loss_evolution}) is uniformly continuous over $\R_+$.
    Using Barbălat’s Lemma \citep{Farkas2016}, we conclude that $\partial_t \app{\Lo_\halpha}{f_t} \to 0$ when $t \to \infty$.

    Furthermore, $k$ is positive definite over $\hgamma$ by \cref{hyp:positive_definite_ntk}, so $\euclideannorm*{\cdot}_{\gH_k^\hgamma}$ is actually a norm.
    Therefore, since $\supp \hgamma$ is finite, the following pointwise convergence holds:
    \begin{equation}
        \mgrad{\hgamma} \app{\Lo_\halpha}{f_t} \xrightarrow[t \to \infty]{} 0.
    \end{equation}
\end{proof}

\begin{lemma}[$\gH_k^\hgamma$ determined by $\supp \hgamma$]
    \label{lem:inv_tk}
    Under \cref{hyp:finite_gamma,hyp:kernel,hyp:positive_definite_ntk}, for all $f \in \gH_k^\hgamma$, the following holds:
    \begin{equation}
        f = \app{\Tk{k}{\hgamma}}{\app{\rightvert*{\Tk{k}{\hgamma}}_{\supp \hgamma}^{-1}}{\rightvert*{f}_{\supp \hgamma}}}.
    \end{equation}
\end{lemma}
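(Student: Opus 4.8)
The plan is to exploit two facts that are already in place. By \cref{def:rkhs}, $\gH_k^\hgamma$ is the linear span of the functions $\app{k}{x,\cdot}$ for $x\in\supp\hgamma$, and since $\supp\hgamma$ is finite (\cref{hyp:finite_gamma}) this span is exactly the range of the operator $\Tk{k}{\hgamma}$ — indeed $\app{\Tk{k}{\hgamma}}{h}=\sum_{x\in\supp\hgamma}\hgamma(\{x\})\,h(x)\,\app{k}{x,\cdot}$, so every element of $\gH_k^\hgamma$ is $\app{\Tk{k}{\hgamma}}{h}$ for a suitable $h\in\app{L^2}{\hgamma}$. The second fact is that, under \cref{hyp:positive_definite_ntk}, the restricted operator $\rightvert*{\Tk{k}{\hgamma}}_{\supp\hgamma}$ of \cref{eq:inv_integral_operator} is a bijection of $\app{L^2}{\hgamma}$ onto itself, being multiplication by the invertible Gram matrix of $k$ over the support points.

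Concretely, I would fix $f\in\gH_k^\hgamma$ and first obtain, from the RKHS representation above, some $h\in\app{L^2}{\hgamma}$ with $f=\app{\Tk{k}{\hgamma}}{h}$. Restricting this equality to $\supp\hgamma$ gives $\rightvert*{f}_{\supp\hgamma}=\app{\rightvert*{\Tk{k}{\hgamma}}_{\supp\hgamma}}{h}$, and applying the inverse granted by \cref{hyp:positive_definite_ntk} yields $h=\app{\rightvert*{\Tk{k}{\hgamma}}_{\supp\hgamma}^{-1}}{\rightvert*{f}_{\supp\hgamma}}$. Substituting this expression for $h$ back into $f=\app{\Tk{k}{\hgamma}}{h}$ produces exactly the claimed identity $f=\app{\Tk{k}{\hgamma}}{\app{\rightvert*{\Tk{k}{\hgamma}}_{\supp\hgamma}^{-1}}{\rightvert*{f}_{\supp\hgamma}}}$.

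There is no genuine obstacle here; the only point deserving a word is that the $h$ coming out of the RKHS representation is the \emph{same} element as the one recovered through the inverse, which is immediate from the injectivity of $\rightvert*{\Tk{k}{\hgamma}}_{\supp\hgamma}$ under \cref{hyp:positive_definite_ntk}. Equivalently, one can phrase the whole argument without ever naming a preliminary $h$: invertibility of $\rightvert*{\Tk{k}{\hgamma}}_{\supp\hgamma}$ makes $\app{\rightvert*{\Tk{k}{\hgamma}}_{\supp\hgamma}^{-1}}{\rightvert*{f}_{\supp\hgamma}}$ the unique element of $\app{L^2}{\hgamma}$ whose restriction-image equals $\rightvert*{f}_{\supp\hgamma}$, hence it coincides with the coefficient vector of $f$ in its RKHS representation, and feeding it through $\Tk{k}{\hgamma}$ returns $f$.
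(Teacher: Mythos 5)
Your proposal is correct and follows essentially the same route as the paper's proof: use the RKHS definition to write $f = \app{\Tk{k}{\hgamma}}{h}$ for some $h \in \app{L^2}{\hgamma}$, restrict to $\supp\hgamma$, invert via \cref{hyp:positive_definite_ntk}, and substitute back. The extra remark on injectivity guaranteeing that the recovered coefficient coincides with the original $h$ is a harmless elaboration of the same argument.
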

\begin{proof}
    Since $k$ is positive definite by \cref{hyp:positive_definite_ntk}, then $\rightvert*{\Tk{k}{\hgamma}}_{\supp \hgamma}$ from \cref{eq:inv_integral_operator} is invertible.
    Let $f \in \gH_k^\hgamma$.
    Then, by definition of the RKHS in \cref{def:rkhs}, there exists $h \in \app{L^2}{\hgamma}$ such that $f = \app{\Tk{k}{\hgamma}}{h}$.
    In particular, $\rightvert*{f}_{\supp \hgamma} = \app{\rightvert*{\Tk{k}{\hgamma}}_{\supp \hgamma}}{h}$, hence $h = \app{\rightvert*{\Tk{k}{\hgamma}}_{\supp \hgamma}^{-1}}{\rightvert*{f}_{\supp \hgamma}}$.
\end{proof}

We can now prove the desired proposition.

\begin{proof}[Proof of \cref{prop:optimality_concave}]
    Let us first show that $f_t$ converges to the optimum $y^\star$ in $\app{L^2}{\hgamma}$.
    By applying \cref{lem:zero_gradient}, we know that $\mgrad{\hgamma} \app{\Lo_\halpha}{f_t} \to 0$ when $t \to \infty$.
    Given that the supremum of the differentiable concave function $\Lo_\halpha\colon \app{L^2}{\hgamma} \to \R$ is achieved at a unique point $y^\star \in \app{L^2}{\hgamma}$ with finite $\supp \hgamma$, then the latter convergence result implies that $\hat{f}_t \triangleq \rightvert*{f_t}_{\supp \hgamma}$ converges pointwise to $y^\star$ when $t \to \infty$.

    Given this convergence in $\app{L^2}{\hgamma}$, we can deduce convergence on the whole domain $\Omega$ by noticing that $f_t - f_0 \in \gH_k^\hgamma$, from \cref{cor:discr_rkhs}.
    Thus, using \cref{lem:inv_tk}:
    \begin{equation}
        f_t - f_0 = \app{\Tk{k}{\hgamma}}{\app{\rightvert*{\Tk{k}{\hgamma}}_{\supp \hgamma}^{-1}}{\rightvert*{\parentheses*{f_t - f_0}}_{\supp \hgamma}}}.
    \end{equation}
    Again, since $\supp \hgamma$ is finite, and $\rightvert*{\Tk{k}{\hgamma}}_{\supp \hgamma}^{-1}$ can be expressed as a matrix multiplication, the fact that $f_t$ converges to $y^\star$ over $\supp \hgamma$ implies that:
    \begin{equation}
        \app{\rightvert*{\Tk{k}{\hgamma}}_{\supp \hgamma}^{-1}}{\rightvert*{\parentheses*{f_t - f_0}}_{\supp \hgamma}} \xrightarrow[t \to \infty]{} \app{\rightvert*{\Tk{k}{\hgamma}}_{\supp \hgamma}^{-1}}{y^\star - \rightvert*{f_0}_{\supp \hgamma}}.
    \end{equation}
    Finally, using the definition of the integral operator in \cref{def:rkhs}, the latter convergence implies the following desired pointwise convergence:
    \begin{align}
        f_t \xrightarrow[t \to \infty]{} f_0 + \app{\Tk{k}{\hgamma}}{\app{\rightvert*{\Tk{k}{\hgamma}}_{\supp \hgamma}^{-1}}{y^\star - \rightvert*{f_0}_{\supp \hgamma}}}.
    \end{align}

    We showed at the beginning of this proof that $f_t$ converges to the optimum $y^\star$ in $\app{L^2}{\hgamma}$, so $\app{\Lo_{\halpha}}{f_t} \to \app{\Lo_{\halpha}}{y^\star}$ by continuity of $\Lo_{\halpha}$ as claimed in the proposition.
\end{proof}

\subsection{Case Studies of Discriminator Dynamics}
\label{app:ode_solve}

We study in the rest of this section the expression of the discriminators in the case of the IPM loss and LSGAN, as described in \cref{sec:case_analysis}, and of the original GAN formulation.

\subsubsection{Preliminaries}

We first need to introduce some definitions.

The presented solutions to \cref{eq:discr_ode} leverage a notion of functions of linear operators, similarly to functions of matrices \citep{Higham2008}.
We define such functions in the simplified case of non-negative symmetric compact operators with a finite number of eigenvalues, such as $\Tk{k}{\hgamma}$.

\begin{definition}[Linear operator]
    \label{def:operator_function}
    Let $\gA\colon \app{L^2}{\hgamma} \to \app{L^2}{\Omega}$ be a non-negative symmetric compact linear operator with a finite number of eigenvalues, for which the spectral theorem guarantees the existence of a countable orthonormal basis of eigenfunctions with non-negative eigenvalues.
    If $\varphi\colon \R_+ \to \R$, we define $\app{\varphi}{\gA}$ as the linear operator with the same eigenspaces as $\gA$, with their respective eigenvalues mapped by $\varphi$; in other words, if $\lambda$ is an eigenvalue of $\gA$, then $\app{\varphi}{\gA}$ admits the eigenvalue $\app{\varphi}{\lambda}$ with the same eigenspace.
\end{definition}

In the case where $\gA$ is a matrix, this amounts to diagonalizing $\gA$ and transforming its diagonalization elementwise using $\varphi$.
Note that $\Tk{k}{\hgamma}$ has a finite number of eigenvalues since it is generated by a finite linear combination of linear operators (see \cref{def:rkhs}).

We also need to define the following Radon–Nikodym derivatives with inputs in $\supp \hgamma$:
\begin{align}
    \label{eq:radon_nikodym}
    \rho = \dod{\parentheses*{\hbeta - \halpha}}{\parentheses*{\hbeta + \halpha}}, && \rho_1 = \dod{\halpha}{\hgamma}, && \rho_2 = \dod{\hbeta}{\hgamma},
\end{align}
knowing that
\begin{align}
    \rho = \frac{1}{2} \parentheses*{\rho_2 - \rho_1}, && \rho_1 + \rho_2 = 2.
\end{align}
These functions help us to compute the functional gradient of $\Lo_{\halpha}$, as follows.
\begin{lemma}[Loss derivative]
    \label{lem:gradient_loss}
    Under \cref{hyp:differentiable_a_b}:
    \begin{equation}
        \mgrad{\hgamma} \app{\Lo_{\halpha}}{f} = \rho_1 a'_f - \rho_2 b'_f = \rho_1 \cdot \parentheses*{a' \circ f} - \rho_2 \cdot \parentheses*{b' \circ f}.
    \end{equation}
\end{lemma}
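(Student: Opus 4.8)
The plan is to compute the Gâteaux derivative of $\Lo_{\halpha}$ directly from the definition of the functional gradient and then recognize it as a scalar product in $\app{L^2}{\hgamma}$. The crucial simplification, coming from \cref{hyp:finite_gamma}, is that $\hgamma$ — and hence also $\halpha$ and $\hbeta$ — is finitely supported, so $\app{L^2}{\hgamma}$ is finite-dimensional and every expectation below is a finite sum, which removes any interchange-of-limit concerns.

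First I would fix $f, \psi \in \app{L^2}{\hgamma}$ and expand $\tfrac{1}{\epsilon}\parentheses*{\app{\Lo_{\halpha}}{f + \epsilon\psi} - \app{\Lo_{\halpha}}{f}}$ atom by atom. By \cref{hyp:differentiable_a_b}, $a$ and $b$ are differentiable on $\R$, so for each atom $x$ one has $\tfrac1\epsilon\parentheses*{\app{a}{\app{f}{x} + \epsilon\app{\psi}{x}} - \app{a}{\app{f}{x}}} \to \app{a'}{\app{f}{x}}\app{\psi}{x}$ as $\epsilon \to 0$, and similarly for $b$; summing over the finitely many atoms of $\halpha$ and $\hbeta$ yields
\begin{equation}
    \lim_{\epsilon \to 0} \frac{1}{\epsilon}\parentheses*{\app{\Lo_{\halpha}}{f + \epsilon\psi} - \app{\Lo_{\halpha}}{f}} = \E_{x \sim \halpha}\brackets*{a'_f \, \psi} - \E_{y \sim \hbeta}\brackets*{b'_f \, \psi}.
\end{equation}

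Next, since $\hgamma = \tfrac12\halpha + \tfrac12\hbeta$, both $\halpha$ and $\hbeta$ are absolutely continuous with respect to $\hgamma$ with densities $\rho_1 = \od{\halpha}{\hgamma}$ and $\rho_2 = \od{\hbeta}{\hgamma}$ as in \cref{eq:radon_nikodym}, so each expectation rewrites as an integral against $\hgamma$: $\E_{x \sim \halpha}\brackets*{a'_f \, \psi} = \sprod{\rho_1 a'_f}{\psi}_{\app{L^2}{\hgamma}}$ and likewise $\E_{y \sim \hbeta}\brackets*{b'_f \, \psi} = \sprod{\rho_2 b'_f}{\psi}_{\app{L^2}{\hgamma}}$. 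Since $\supp \hgamma$ is finite, $\rho_1 a'_f - \rho_2 b'_f$ trivially lies in $\app{L^2}{\hgamma}$. Plugging in, the difference quotient tends to $\sprod{\rho_1 a'_f - \rho_2 b'_f}{\psi}_{\app{L^2}{\hgamma}}$ for every $\psi \in \app{L^2}{\hgamma}$; by the defining property of $\mgrad{\hgamma}$ and uniqueness of the Riesz representative, $\mgrad{\hgamma} \app{\Lo_{\halpha}}{f} = \rho_1 a'_f - \rho_2 b'_f$, which is the claim.

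There is no real obstacle here: the only points needing care are noting that $\halpha, \hbeta \ll \hgamma$ (immediate from $\hgamma = \tfrac12\halpha+\tfrac12\hbeta$) and that passing the limit through the summation is legitimate (immediate since the sum is finite). If one wanted the statement for a general compactly supported $\gamma$ rather than under \cref{hyp:finite_gamma}, the same computation goes through, but the interchange of limit and integral would then have to be justified by dominated convergence, bounding the difference quotient uniformly in $\epsilon$ via the Lipschitz bounds on $a'$ and $b'$ from \cref{hyp:differentiable_a_b}.
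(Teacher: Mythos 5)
Your argument is correct and follows essentially the same route as the paper: the paper rewrites $\app{\Lo_{\halpha}}{f}$ as $\sprod{\rho_1}{a_f}_{\app{L^2}{\hgamma}} - \sprod{\rho_2}{b_f}_{\app{L^2}{\hgamma}}$ and differentiates "by composition", which is exactly the chain-rule computation you carry out explicitly via the Gâteaux difference quotient and the Riesz representation. Your atom-by-atom justification (and the dominated-convergence remark for general $\gamma$) simply makes rigorous what the paper leaves implicit.
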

\begin{proof}
    We have from \cref{eq:sup_discr}:
    \begin{equation}
        \label{eq:scalar_product_loss}
        \app{\Lo_{\halpha}}{f} = \E_{x \sim \halpha} \brackets*{\app{a_f}{x}} - \E_{y \sim \hbeta} \brackets*{\app{b_f}{y}} = \sprod{\rho_1}{a_f}_{\app{L^2}{\hgamma}} - \sprod{\rho_2}{b_f}_{\app{L^2}{\hgamma}},
    \end{equation}
    hence by composition:
    \begin{equation}
        \mgrad{\hgamma} \app{\Lo_{\halpha}}{f} = \rho_1 \cdot \parentheses*{a' \circ f} - \rho_2 \cdot \parentheses*{b' \circ f} = \rho_1 a'_f - \rho_2 b'_f.
    \end{equation}
\end{proof}

\subsubsection{LSGAN}

\begin{repprop}{prop:lsgan}[LSGAN discriminator]
    Under \cref{hyp:finite_gamma,hyp:kernel}, the solutions of \cref{eq:discr_ode} for $a = -\parentheses*{\id + 1}^2$ and $b = -\parentheses*{\id - 1}^2$ are  the functions defined for all $t \in \R_+$ as:
    \begin{equation}
        f_t = \app{\app{\exp}{-4 t \Tk{k}{\hgamma}}}{f_0 - \rho} + \rho = f_0 + \app{\app{\varphi_t}{\Tk{k}{\hgamma}}}{f_0 - \rho},
    \end{equation}
    where:
    \begin{equation}
        \varphi_t\colon x \mapsto \erm^{-4 t x} - 1.
    \end{equation}
\end{repprop}
\begin{proof}
    \cref{hyp:finite_gamma,hyp:kernel} are already assumed and \cref{hyp:differentiable_a_b} holds for the given $a$ and $b$ in LSGAN.
    Thus, \cref{thm:discr_characterization} applies, and there exists a unique solution $t \mapsto f_t$ to \cref{eq:discr_ode} over $\R_+$ in $\app{L^2}{\Omega}$ for a given initial condition $f_0$.
    Therefore, there remains to prove that, for a given initial condition $f_0$,
    \begin{equation}
        \label{eq:lsgan_sol_candidate}
        g\colon t \mapsto g_t = f_0 + \app{\app{\varphi_t}{\Tk{k}{\hgamma}}}{f_0 - \rho}
    \end{equation}
    is a solution to \cref{eq:discr_ode} with $g_0 = f_0$ and $g_t \in \app{L^2}{\Omega}$ for all $t \in \R_+$.

    Let us first express the gradient of $\Lo_{\halpha}$.
    We have from \cref{lem:gradient_loss}, with $a_f = -\parentheses*{f + 1}^2$ and $b_f = -\parentheses*{f - 1}^2$:
    \begin{equation}
        \mgrad{\hgamma} \app{\Lo_{\halpha}}{f} = \rho_1 a'_f - \rho_2 b'_f = -2 \rho_1 \parentheses*{f + 1} - 2 \rho_2 \parentheses*{f - 1} = 4 \rho - 4 f.
    \end{equation}
    So \cref{eq:discr_ode} equates to:
    \begin{equation}
        \label{eq:discr_ode_lsgan}
        \partial_t f_t = 4 \app{\Tk{k}{\hgamma}}{\rho - f_t}.
    \end{equation}

    Now let us prove that $g_t$ is a solution to \cref{eq:discr_ode_lsgan}.
    We have:
    \begin{equation}
        \label{eq:lsgan_sol_candidate_derivative}
        \partial_t g_t = -4 \app{\parentheses*{\Tk{k}{\hgamma} \circ \app{\exp}{- 4 t \Tk{k}{\hgamma}}}}{f_0 - \rho} = -4 \app{\parentheses*{\Tk{k}{\hgamma} \circ \app{\exp}{- 4 t \Tk{k}{\hgamma}}}}{f_0 - \rho}.
    \end{equation}
    Restricted to $\supp \hgamma$, we can write from \cref{eq:lsgan_sol_candidate}:
    \begin{equation}
        g_t = f_0 + \app{\parentheses*{\app{\exp}{- 4 t \rightvert*{\Tk{k}{\hgamma}}_{\supp \hgamma}} - \id_{\app{L^2}{\hgamma}}}}{f_0 - \rho},
    \end{equation}
    and plugging this in \cref{eq:lsgan_sol_candidate_derivative}:
    \begin{equation}
        \partial_t g_t = -4 \app{\Tk{k}{\hgamma}}{g_t - \rho},
    \end{equation}
    where we retrieve the differential equation of \cref{eq:discr_ode_lsgan}.
    Therefore, $g_t$ is a solution to \cref{eq:discr_ode_lsgan}.

    It is clear that $g_0 = f_0$.
    Moreover, $\Tk{k}{\hgamma}$ being decomposable in a finite orthonormal basis of elements of operators over $\app{L^2}{\Omega}$, its exponential has values in $\app{L^2}{\Omega}$ as well, making $g_t$ belong to $\app{L^2}{\Omega}$ for all $t$.
    With this, the proof is complete.
\end{proof}

\subsubsection{IPMs}

\begin{repprop}{prop:ipm_mmd}[IPM discriminator]
    Under \cref{hyp:finite_gamma,hyp:kernel}, the solutions of \cref{eq:discr_ode} for $a = b = \id$ are the functions of the form $f_t = f_0 + t f^\ast_{\halpha}$, where $f^\ast_{\halpha}$ is the unnormalized MMD witness function, yielding:
    \begin{align}
        f^\ast_{\halpha} = \E_{x \sim \halpha} \brackets*{\app{k}{x, \cdot}} - \E_{y \sim \hbeta} \brackets*{\app{k}{y, \cdot}}, && \app{\Lo_{\halpha}}{f_t} = \app{\Lo_{\halpha}}{f_0} + t \cdot \app{\MMD^2_{k}}{\halpha, \hbeta}.
    \end{align}
\end{repprop}
\begin{proof}
    \cref{hyp:finite_gamma,hyp:kernel} are already assumed and \cref{hyp:differentiable_a_b} holds for the given $a$ and $b$ of the IPM loss.
    Thus, \cref{thm:discr_characterization} applies, and there exists a unique solution $t \mapsto f_t$ to \cref{eq:discr_ode} over $\R_+$ in $\app{L^2}{\Omega}$ for a given initial condition $f_0$.
    Therefore, in order to find the solution of \cref{eq:discr_ode}, there remains to prove that, for a given initial condition $f_0$,
    \begin{equation}
        g\colon t \mapsto g_t = f_0 + t f^\ast_{\halpha}
    \end{equation}
    is a solution to \cref{eq:discr_ode} with $g_0 = f_0$ and $g_t \in \app{L^2}{\Omega}$ for all $t \in \R_+$.

    Let us first express the gradient of $\Lo_{\halpha}$.
    We have from \cref{lem:gradient_loss}, with $a_f = b_f = f$:
    \begin{equation}
        \mgrad{\hgamma} \app{\Lo_{\halpha}}{f} = \rho_1 a'_f - \rho_2 b'_f = - 2 \rho.
    \end{equation}
    So \cref{eq:discr_ode} equates to:
    \begin{equation}
        \partial_t f_t = - 2 \app{\Tk{k}{\hgamma}}{\rho} = 2 \int_x \app{k}{\cdot, x} \app{\rho}{x} \dif \app{\hgamma}{x} = \int_x \app{k}{\cdot, x} \dif \app{\halpha}{x} - \int_y \app{k}{\cdot, y} \dif \app{\hbeta}{y},
    \end{equation}
    by definition of $\rho$ (see \cref{eq:radon_nikodym}), yielding:
    \begin{equation}
        \label{eq:discr_ode_ipm}
        \partial_t f_t = f_\halpha^\ast.
    \end{equation}
    Clearly, $t \mapsto g_t = f_0 + t f_\halpha^\ast$ is a solution of the latter equation, $g_0 = f_0$ and $g_t \in \app{L^2}{\Omega}$ given that $\supp \hgamma$ is finite and $k \in \app{L^2}{\Omega^2}$ by assumption.
    The set of solutions for the IPM loss is thus characterized.

    Finally, let us compute $\app{\Lo_{\halpha}}{f_t}$.
    By linearity of $\Lo_{\halpha}$ for $a = b = \id$:
    \begin{equation}
        \app{\Lo_{\halpha}}{f_t} = \app{\Lo_{\halpha}}{f_0} + t \cdot \app{\Lo_{\halpha}}{f_\halpha^\ast} = \app{\Lo_{\halpha}}{f_0} + t \cdot \app{\Lo_{\halpha}}{\app{\Tk{k}{\hgamma}}{- 2 \rho}}.
    \end{equation}
    But, from \cref{eq:scalar_product_loss}, $\app{\Lo_{\halpha}}{f} = \sprod{- 2 \rho}{f}_{\app{L^2}{\hgamma}}$, hence:
    \begin{equation}
        \app{\Lo_{\halpha}}{f_t} = \app{\Lo_{\halpha}}{f_0} + t \cdot \sprod{- 2 \rho}{\app{\Tk{k}{\hgamma}}{- 2 \rho}}_{\app{L^2}{\hgamma}} = \app{\Lo_{\halpha}}{f_0} + t \cdot \euclideannorm*{\app{\Tk{k}{\hgamma}}{- 2 \rho}}_{\gH^\hgamma_k}^2.
    \end{equation}
    By noticing that $\app{\Tk{k}{\hgamma}}{- 2 \rho} = f_\halpha^\ast$ and that $\euclideannorm*{f_\halpha^\ast}_{\gH^\hgamma_k} = \app{\MMD_{k}}{\halpha, \hbeta}$ since $f^\ast_{\halpha}$ is the unnormalized MMD witness function, the expression of $\app{\Lo_{\halpha}}{f_t}$ in the proposition is obtained.
\end{proof}

\subsubsection{Vanilla GAN}

Unfortunately, finding the solutions to \cref{eq:discr_ode} in the case of the original GAN formulation, i.e.\ $a = \app{\log}{1 - \sigma}$ and $b = -\log \sigma$,
remains to the extent of our knowledge an open problem.
We provide in the rest of this section some leads that might prove useful for more advanced analyses.

Let us first determine the expression of \cref{eq:discr_ode} for vanilla GAN.
\begin{lemma}
    For $a = \app{\log}{1 - \sigma}$ and $b = -\log \sigma$, \cref{eq:discr_ode} equates to:
    \begin{equation}
        \label{eq:discr_ode_vanilla}
        \partial_t f_t = \app{\Tk{k}{\hgamma}}{\rho_2 - 2 \app{\sigma}{f}}.
    \end{equation}
\end{lemma}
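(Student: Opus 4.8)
The plan is to reduce the claim to an elementary differentiation followed by a substitution into \cref{eq:discr_ode}, exactly as was done for LSGAN and the IPM. First I would verify that the vanilla GAN choice $a = \app{\log}{1 - \sigma}$, $b = -\log\sigma$ satisfies \cref{hyp:differentiable_a_b}, so that \cref{lem:gradient_loss} is applicable: using the sigmoid identity $\sigma' = \sigma\parentheses*{1 - \sigma}$ one finds $a'(x) = -\app{\sigma}{x}$ and $b'(x) = \app{\sigma}{x} - 1$, and since the derivative of each of these equals $\pm\,\sigma\parentheses*{1 - \sigma}$, which is bounded by $\tfrac14$ in absolute value, $a'$ and $b'$ are Lipschitz on $\R$.

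Next I would compute the functional gradient via \cref{lem:gradient_loss}, plugging in $a'_f = -\app{\sigma}{f}$ and $b'_f = \app{\sigma}{f} - 1$:
\begin{equation}
    \mgrad{\hgamma} \app{\Lo_{\halpha}}{f} = \rho_1 a'_f - \rho_2 b'_f = -\rho_1\,\app{\sigma}{f} - \rho_2\parentheses*{\app{\sigma}{f} - 1} = \rho_2 - \parentheses*{\rho_1 + \rho_2}\app{\sigma}{f}.
\end{equation}
Invoking the normalization $\rho_1 + \rho_2 = 2$ recorded right after \cref{eq:radon_nikodym}, this collapses to $\mgrad{\hgamma} \app{\Lo_{\halpha}}{f} = \rho_2 - 2\app{\sigma}{f}$, and substituting into \cref{eq:discr_ode} gives $\partial_t f_t = \app{\Tk{k}{\hgamma}}{\rho_2 - 2\app{\sigma}{f_t}}$, which is the asserted identity.

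Every step is a routine computation, so there is no genuine obstacle; the only points requiring care are the signs when differentiating $\log\parentheses*{1-\sigma}$ and $-\log\sigma$, and the use of $\rho_1 + \rho_2 = 2$ to merge the two Radon–Nikodym factors into a single term. (Well-posedness of $f_t$ itself is not needed for this equivalence of equations, though it follows from \cref{thm:discr_characterization} once the Lipschitz check above is in place.)
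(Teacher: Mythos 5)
Your proof is correct and follows essentially the same route as the paper: apply \cref{lem:gradient_loss}, simplify $a'_f$ and $b'_f$ via the identity $\sigma' = \sigma\parentheses*{1-\sigma}$, combine the terms using $\rho_1 + \rho_2 = 2$, and substitute into \cref{eq:discr_ode}. Your explicit check that $a'$ and $b'$ are Lipschitz (so that \cref{hyp:differentiable_a_b} and hence \cref{lem:gradient_loss} apply) is a small addition the paper makes in a remark just after the lemma rather than inside the proof, but it is the same argument.
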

\begin{proof}
    We have from \cref{lem:gradient_loss}, with $a_f = b_f = f$:
    \begin{equation}
        \mgrad{\hgamma} \app{\Lo_{\halpha}}{f} = \rho_1 a'_f - \rho_2 b'_f = - \rho_1 \frac{\app{\sigma'}{f}}{1 - \app{\sigma}{f}} + \rho_2 \frac{\app{\sigma'}{f}}{\app{\sigma}{f}}.
    \end{equation}
    By noticing that $\app{\sigma'}{f} = \app{\sigma}{f} \parentheses*{1 - \app{\sigma}{f}}$, we obtain:
    \begin{equation}
        \mgrad{\hgamma} \app{\Lo_{\halpha}}{f} = \rho_1 a'_f - \rho_2 b'_f = - \rho_1 \app{\sigma}{f} + \rho_2 \parentheses*{1 - \app{\sigma}{f}} = \rho_2 - 2 \app{\sigma}{f}.
    \end{equation}
    By plugging the latter expression in \cref{eq:discr_ode}, the desired result is achieved.
\end{proof}

Note that \cref{hyp:differentiable_a_b} holds for these choices of $a$ and $b$.
Therefore, under \cref{hyp:finite_gamma,hyp:kernel}, there exists a unique solution to \cref{eq:discr_ode_vanilla} in $\R_+ \to \app{L^2}{\Omega}$ with a given initialization $f_0$.

Let us first study \cref{eq:discr_ode_vanilla} in the simplified case of a one-dimensional ordinary differential equation.
\begin{prop}
    \label{prop:vanilla_1d}
    Let $r \in \braces*{0, 2}$ and $\lambda \in \R$.
    The set of differentiable solutions over $\R$ to this ordinary differential equation:
    \begin{equation}
        \label{eq:discr_ode_vanilla_1d}
        \partial_t y_t = \lambda \parentheses*{r - 2 \app{\sigma}{y_t}}
    \end{equation}
    is the following set:
    \begin{equation}
        \label{eq:solutions_vanilla_1d}
        S = \midbracesx*{y\colon t \mapsto  \parentheses*{1 - r} \parentheses*{\app{W}{\erm^{2 \lambda t + C}} - 2 \lambda t - C}}{C \in \R},
    \end{equation}
    where $W$ the is principal branch of the Lambert $W$ function \citep{Corless1996}.
\end{prop}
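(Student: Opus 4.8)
The plan is to handle $r=0$ in full and then recover $r=2$ by a reflection. For $r=0$, \cref{eq:discr_ode_vanilla_1d} becomes $\od{y_t}{t} = -2\lambda\,\app{\sigma}{y_t}$, whose right-hand side $y\mapsto -2\lambda\,\app{\sigma}{y}$ is $C^\infty$, bounded by $2\abs{\lambda}$, and globally Lipschitz on $\R$ (since $\sigma'$ is bounded). Hence, by the Picard--Lindelöf theorem together with the linear-growth a priori bound coming from boundedness of the right-hand side, every maximal solution is defined on all of $\R$ and is uniquely determined by its value at $t=0$. So the set of differentiable solutions over $\R$ is a one-parameter family in bijection with $y(0)\in\R$, and it suffices to show (i) each element of $S$ is a solution, and (ii) the map $C\mapsto y_C(0)$, where $y_C(t)=\app{W}{\erm^{2\lambda t+C}}-2\lambda t-C$, is onto $\R$.

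For (i), fix $C\in\R$. Since $\erm^{2\lambda t+C}>0$ and the principal branch $W$ is real-analytic on $(0,\infty)$, $y_C$ is $C^\infty$ on $\R$. Writing $w(t)=\app{W}{\erm^{2\lambda t+C}}$, the defining identity $w\erm^{w}=\erm^{2\lambda t+C}$ gives, after taking logarithms, $\ln w + w = 2\lambda t + C$; differentiating yields $\od{w}{t}=\tfrac{2\lambda w}{1+w}$, and the same identity gives $y_C = w-(\ln w + w) = -\ln w$, i.e.\ $w=\erm^{-y_C}$. Therefore $\od{y_C}{t}=\od{w}{t}-2\lambda = -\tfrac{2\lambda}{1+w} = -\tfrac{2\lambda}{1+\erm^{-y_C}} = -2\lambda\,\app{\sigma}{y_C}$, so $y_C$ solves \cref{eq:discr_ode_vanilla_1d} with $r=0$. (Equivalently, one can obtain the family by separating variables: the relation $y - \erm^{-y} = -2\lambda t - C$ is exactly what pins down $S$, and inverting it through $W$ produces the stated closed form.)

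For (ii), set $\psi(C)=y_C(0)=\app{W}{\erm^{C}}-C$. Differentiating as above, $\psi'(C)=\tfrac{\app{W}{\erm^C}}{1+\app{W}{\erm^C}}-1=-\tfrac{1}{1+\app{W}{\erm^C}}<0$, so $\psi$ is continuous and strictly decreasing; moreover $\app{W}{\erm^C}\to 0$ as $C\to-\infty$ gives $\psi(C)\to+\infty$, and $\app{W}{\erm^C}=C-\ln C+o(1)$ as $C\to+\infty$ gives $\psi(C)\to-\infty$. Hence $\psi$ is a bijection $\R\to\R$. Combining (i) and (ii) with the uniqueness statement above, $S$ is exactly the set of differentiable solutions over $\R$ of \cref{eq:discr_ode_vanilla_1d} when $r=0$. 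This surjectivity step is the main point requiring care: it is what rules out $S$ being only a proper subfamily of the solution set, and it genuinely needs the monotonicity/limit analysis of $\psi$ on top of uniqueness.

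Finally, for $r=2$ we have $\od{y_t}{t}=2\lambda\parentheses*{1-\app{\sigma}{y_t}}=2\lambda\,\app{\sigma}{-y_t}$, so $z_t=-y_t$ solves the $r=0$ equation with the same $\lambda$; consequently $y$ is a differentiable solution over $\R$ for $r=2$ if and only if $-y\in S$. Since $(1-2)=-1$, this is precisely the claim that the solution set is $\{\,(1-r)\parentheses*{\app{W}{\erm^{2\lambda t+C}}-2\lambda t-C}\mid C\in\R\,\}$, matching \cref{eq:solutions_vanilla_1d} for $r=2$ and completing the proof.
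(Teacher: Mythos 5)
Your proof is correct and follows essentially the same strategy as the paper's: global existence and uniqueness via Picard--Lindelöf, direct verification that each element of $S$ solves the equation, and surjectivity of $C \mapsto y_C(0)$ so that $S$ covers every initial condition. The only differences are in execution — you handle $r=2$ by the reflection $y \mapsto -y$ rather than the paper's unified computation with the factor $\parentheses*{1-r}$, and you actually prove the bijectivity of $C \mapsto \app{W}{\erm^{C}} - C$ (strict monotonicity plus the limits at $\pm\infty$), a step the paper only asserts.
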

\begin{proof}
    The theorem of Cauchy-Lipschitz ensures that there exists a unique global solution to \cref{eq:discr_ode_vanilla_1d} for a given initial condition $y_0 \in \R$.
    Therefore, we only need to show that all elements of $S$ are solutions of \cref{eq:discr_ode_vanilla_1d} and that they can cover any initial condition.

    Let us first prove that $y\colon t \mapsto  \parentheses*{1 - r} \parentheses*{\app{W}{\erm^{2 \lambda t + C}} - 2 \lambda t - C}$ is a solution of \cref{eq:discr_ode_vanilla_1d}.
    Let us express the derivative of $y$:
    \begin{equation}
        \frac{1}{1 - r} \partial_t y_t = 2 \lambda \parentheses*{\erm^{2 \lambda t + C} \app{W'}{\erm^{2 \lambda t + C}} - 1}.
    \end{equation}
    $\app{W'}{z} = \frac{\app{W}{z}}{z \parentheses*{1 + \app{W}{z}}}$, so:
    \begin{equation}
        \frac{1}{1 - r} \partial_t y_t  = 2 \lambda \parentheses*{\frac{\app{W}{\erm^{2 \lambda t + C}}}{1 + \app{W}{\erm^{2 \lambda t + C}}} - 1} = - \frac{2 \lambda}{1 + \app{W}{\erm^{2 \lambda t + C}}}.
    \end{equation}
    Moreover, $\app{W}{z} = z \erm^{- \app{W}{z}}$, and with $r - 1 \in \braces*{1, -1}$:
    \begin{equation}
        \frac{1}{1 - r} \partial_t y_t  = - \frac{2 \lambda}{1 + \erm^{2 \lambda t + C}\erm^{-\app{W}{\erm^{2 \lambda t + C}}}} = - \frac{2 \lambda}{1 + \erm^{\parentheses*{r - 1} y_t}}.
    \end{equation}
    Finally, we notice that, since $r \in \braces*{0, 2}$:
    \begin{equation}
        \lambda \parentheses*{r - 2 \app{\sigma}{y_t}} = - \frac{2 \lambda \parentheses*{1 - r}}{1 + \erm^{\parentheses*{r - 1} y_t}}.
    \end{equation}
    Therefore:
    \begin{equation}
        \partial_t y_t  = \lambda \parentheses*{r - 2 \app{\sigma}{y_t}}
    \end{equation}
    and $y_t$ is a solution to \cref{eq:discr_ode_vanilla_1d}.

    Since $y_0 = \parentheses*{1 - r} \parentheses*{\app{W}{\erm^C} - C}$ and $z \mapsto \app{W}{\erm^z} - z$ can be proven to be bijective over $\R$, the elements of $S$ can cover any initial condition.
    With this, the result is proved.
\end{proof}

Suppose that $f_0 = 0$ in \cref{eq:discr_ode_vanilla} and that $\rho_2$ has values in $\braces*{0, 2}$ --~i.e.\ $\halpha$ and $\hbeta$ have disjoint supports (which is the typical case for distributions with finite support).
From \cref{prop:vanilla_1d}, a candidate solution would be:
\begin{equation}
    \label{eq:candidate_vanilla}
    f_t = \app{\app{\varphi_t}{x}}{\rho_2 - 1} = -\app{\app{\varphi_t}{x}}{\rho},
\end{equation}
where:
\begin{equation}
    \varphi_t\colon x \mapsto \app{W}{\erm^{2 t x + 1}} - 2 t x - 1,
\end{equation}
since the initial condition $y_0 = 0$ gives the constant value $C = 1$ in \cref{eq:solutions_vanilla_1d}.
Note that the Lambert $W$ function of a symmetric linear operator is well-defined, all the more so as we choose the principal branch of the Lambert function in our case; see the work of \citet{Corless2017} for more details.
Note also that the estimation of $\app{W}{\erm^z}$ is actually numerically stable using approximations from \citet{Iacono2017}.

However, \cref{eq:candidate_vanilla} cannot be a solution of \cref{eq:discr_ode_vanilla}.
Indeed, one can prove by following essentially the same reasoning as the proof of \cref{prop:vanilla_1d} that:
\begin{equation}
    \label{eq:derivative_candidate}
    \partial_t f_t = 2 \app{\parentheses*{\Tk{k}{\hgamma} \circ \parentheses*{\app{\psi_t}{\Tk{k}{\hgamma}}}^{-1}}}{\rho_2 - 1},
\end{equation}
with:
\begin{equation}
    \psi_t\colon x \mapsto 1 + \app{W}{\erm^{2 t x + 1}} > 0.
\end{equation}
However, this does not allow us to obtain \cref{eq:discr_ode_vanilla} since in the latter the sigmoid is taken coordinate-wise, where the exponential in \cref{eq:derivative_candidate} acts on matrices.

Nonetheless, for $t$ small enough, $f_t$ as defined in \cref{eq:derivative_candidate} should approximate the solution of \cref{eq:discr_ode_vanilla}, since sigmoid is approximately linear around $0$ and $f_t \approx 0$ when $t$ is small enough.
We find in practice that for reasonable values of $t$, e.g.\ $t \leq 5$, the approximate solution of \cref{eq:derivative_candidate} is actually close to the numerical solution of \cref{eq:discr_ode_vanilla} obtained using an ODE solver.
Thus, we provide here a candidate approximate expression for the discriminator in the setting of the original GAN formulation --~i.e., for binary classifiers.
We leave for future work a more in-depth study of this case.

\section{Discussions and Remarks}

We develop in this section some remarks and explanations on the topics that are broached in the main paper.

\subsection{From Finite to Infinite-Width Networks}
\label{app:finite_width}

The constancy of the neural tangent kernel during training when the width of the network becomes increasingly large is broadly applicable.
As summarized by \citet{Liu2020}, typical neural networks with the building blocks of multilayer perceptrons and convolutional neural networks comply with this property, as long as they end with a linear layer and they do not have any bottleneck --~indeed, this constancy needs the minimum internal width to grow unbounded \citep{Arora2019}.
This includes, for example, residual convolutional neural networks \citep{He2016}.
The requirement of a final linear activation can be circumvented by transferring this activation into the loss function, as we did for the original GAN formulation in \cref{sec:framework}.
This makes our framework encompass a wide range of discriminator architectures.

Indeed, many building blocks of state-of-the-art discriminators can be studied in this infinite-width regime with a constant NTK, as highlighted by the exhaustiveness of the Neural Tangents library \citep{Novak2020}.
Assumptions about the used activation functions are mild and include many standard activations such as ReLU, sigmoid and $\tanh$.
Beyond fully connected linear layers and convolutions, NTK constancy also affect typical operations such as self-attention \citep{Hron2020}, layer normalization and batch normalization \citep{Yang2020a}.
This variety of networks affected by the constancy of the NTK supports the generality of our approach, as it includes powerful discriminator architectures such as BigGAN \citep{Brock2019}.

We highlight that the NTK of the discriminator remains constant throughout the whole GAN optimization process, and not only under a fixed generator.
Indeed, if it remains constant in-between generator updates, then it also remains constant when the generator changes.
This is because, for a finite training time, the constancy of the NTK solely depends on the network architecture and initialization, regardless of the training loss which may change in the course of training without affecting the NTK.

There are nevertheless some limits to the NTK approximation, as we are not aware of works studying the application of the infinite-width regime to some operations such as spectral normalization, and networks in the regime of a constant NTK cannot perform feature learning as they are equivalent to kernel methods \citep{Geiger2020, Yang2021}.
However, this framework remains general and constitutes the most advanced attempt at theoretically modeling the discriminator's architecture in GANs.

\subsection{Loss of the Generator and its Gradient}
\label{app:alternating_optim}

We highlight in this section the importance of taking into account alternating optimization and discriminator gradients in the optimization of the generator.
Let us focus on an example similar to the one of \citet[Example~1]{Arjovsky2017b} and choose as $\beta$ a single Dirac centered at $0$ and as $\alpha_g = \alpha_\theta$ single Dirac centered at $x_\theta = \theta$ (the generator parameters being the coordinates of the generated point).
Let us study for the sake of simplicity the case of LSGAN since it is a recurring example in this work, but a similar reasoning can be done for other GAN instances.

In the theoretical min-max formulation of GANs considered by \citet{Arjovsky2017b}, the generator is trained to minimize the following quantity:
\begin{equation}
    \app{\Co_{f^\star_{\alpha_\theta}}}{\alpha_\theta} \triangleq  \E_{x \sim \alpha_\theta} \brackets*{\app{c_{f^\star_{\alpha_\theta}}}{x}} = \app{f^\star_{\alpha_\theta}}{x_\theta}^2,
\end{equation}
where:
\begin{equation}
    \begin{aligned}
        f^\star_{\alpha_\theta} & = \argmax_{f \in \app{L^2}{\frac{1}{2} \alpha_\theta + \frac{1}{2} \beta}} \braces*{\app{\Lo_{\alpha_\theta}}{f} \triangleq \E_{x \sim \alpha_\theta} \brackets*{\app{a_f}{x}} - \E_{y \sim \beta} \brackets*{\app{b_f}{y}}} \\
        & = \argmin_{f \in \app{L^2}{\frac{1}{2} \alpha_\theta + \frac{1}{2} \beta}} \braces*{\parentheses*{\app{f^\star_{\alpha_\theta}}{x_\theta} + 1}^2 + \parentheses*{\app{f^\star_{\alpha_\theta}}{0} - 1}^2}.
    \end{aligned}
\end{equation}
Consequently, $\app{f^\star_{\alpha_\theta}}{0} = 1$ and $\app{f^\star_{\alpha_\theta}}{x_\theta} = -1$ when $x_\theta \neq 0$, thus in this case:
\begin{equation}
    \app{\Co_{f^\star_{\alpha_\theta}}}{\alpha_\theta} = 1.
\end{equation}
This constancy of the generator loss would make it impossible to be learned by gradient descent, as pointed out by \citet{Arjovsky2017b}.

However, the setting does not correspond to the actual optimization process used in practice and represented by \cref{eq:gen_descent}.
We do have $\grad_\theta \app{\Co_{f^\star_{\alpha_\theta}}}{\alpha_\theta} = 0$ when $x_\theta \neq 0$, but the generator never uses this gradient in standard GAN optimization.
Indeed, this gradient takes into account the dependency of the optimal discriminator $f^\star_{\alpha_\theta}$ in the generator parameters, since the optimal discriminator depends on the generated distribution.
Yet, in practice and with few exceptions such as Unrolled GANs \citep{Metz2017} and as done in \cref{eq:gen_descent}, this dependency is ignored when computing the gradient of the generator, because of the alternating optimization setting --~where the discriminator is trained in-between generator's updates.
Therefore, despite being constant on the training data, this loss can yield non-zero gradients to the generator.
However, this requires the gradient of $f^\star_{\alpha_\theta}$ to be defined, which is the issue addressed in \cref{sec:issues_large_discr_scpace}.

\subsection{Differentiability of the Bias-Free ReLU Kernel}
\label{app:diff_relu_kernel}

\cref{rk:bias_free_relu} contradicts the results of \citet{Bietti2019} on the regularity of the NTK of a bias-free ReLU MLP with one hidden layer, which can be expressed as follows (up to a constant scaling the matrix multiplication in linear layers):
\begin{equation}
    \label{eq:relu_ntk}
    \app{k}{x, y} = \euclideannorm*{x} \euclideannorm*{y} \app{\kappa}{\frac{\sprod{x}{y}}{\euclideannorm*{x} \euclideannorm*{y}}},
\end{equation}
where:
\begin{equation}
    \begin{aligned}
        \kappa\colon \brackets*{0, 1} & \to \R \\
        u & \mapsto \frac{2}{\pi} u \parentheses*{\pi - \arccos u} + \frac{1}{\pi} \sqrt{1 - u^2}
    \end{aligned}
    \ .
\end{equation}

More particularly, \citet[Proposition\ 3]{Bietti2019} claim that $\app{k}{\cdot, y}$ is not Lipschitz around $y$ for all $y$ in the unit sphere.
By following their proof, it amounts to prove that $\app{k}{\cdot, y}$ is not Lipschitz around $y$ for all $y$ in any centered sphere.
We highlight that this also contradicts empirical evidence, as we did observe the Lipschitzness of such NTK in practice using the Neural Tangents library \citep{Novak2020}.

We believe that the mistake in the proof of \citet{Bietti2019} lies in the confusion between functions $\kappa$ and $k_0\colon x, y \mapsto \app{\kappa}{\frac{\sprod{x}{y}}{\euclideannorm*{x} \euclideannorm*{y}}}$, which have different geometries.
Their proof relies on the fact that $\kappa$ is indeed non-Lipschitz in the neighborhood of $u = 1$.
However, this does not imply that $k_0$ is not Lipschitz, or not derivable.
We can prove that it is actually at least locally Lipschitz.

Indeed, let us compute the following derivative for $x \neq y \in \R^n \setminus \braces*{0}$:
\begin{equation}
    \pd{\app{k_0}{x, y}}{x} = \frac{y \euclideannorm*{x} - \frac{x}{\euclideannorm*{x}}\sprod{x}{y}}{\euclideannorm*{x}^2 \euclideannorm*{y}} \app{\kappa'}{u} = \frac{1}{\euclideannorm*{x} \euclideannorm*{y}} \parentheses*{y - \sprod{x}{y} \frac{x}{\euclideannorm*{x}^2}} \app{\kappa'}{u},
\end{equation}
where $u = \frac{\sprod{x}{y}}{\euclideannorm*{x} \euclideannorm*{y}}$ and:
\begin{equation}
    \pi \cdot \app{\kappa'}{u} = \frac{u}{\sqrt{1 - u^2}} + 2 \parentheses*{\pi - \arccos u}.
\end{equation}
Note that $\app{\kappa'}{u} \sim_{u \to 1^-} \frac{\pi u}{\sqrt{1 - u^2}} \sim_{u \to 1^-} \frac{\pi}{\sqrt{2} \sqrt{1 - u}}$.
Therefore:
\begin{equation}
    \begin{aligned}
        \frac{\pi}{\sqrt{2}} \cdot \dpd{\app{k_0}{x, y}}{x} & \sim_{x \to y} \frac{1}{\euclideannorm*{y}^2} \parentheses*{y - \sprod{x}{y} \frac{x}{\euclideannorm*{x}^2}} \frac{\sqrt{\euclideannorm*{x} \euclideannorm*{y}}}{\sqrt{\euclideannorm*{x} \euclideannorm*{y} - \sprod{x}{y}}} \\
        & \sim_{x \to y} \frac{\euclideannorm*{x}^2 y - \sprod{x}{y} x}{\euclideannorm*{y}^3 \sqrt{\euclideannorm*{x} \euclideannorm*{y} - \sprod{x}{y}}} \\
        & \sim_{x \to y} \frac{\euclideannorm*{y}^2 - \sprod{x}{y}}{\euclideannorm*{y}^3 \sqrt{\euclideannorm*{y}^2 - \sprod{x}{y}}} y \xrightarrow[x \to y]{} 0,
    \end{aligned}
\end{equation}
which proves that $k_0$ is actually Lipschitz around points $\parentheses*{y, y}$, as well as differentiable, and confirms our remark.

\subsection{Integral Operator and Instance Noise}
\label{app:instance_noise}

Instance noise \citep{Sonderby2017} consists in adding random Gaussian noise to the input and target samples.
This amounts to convolving the data distributions with a Gaussian density, which will have the effect of smoothing the discriminator.
In the following, for the case of IPM losses, we link instance noise with our framework, showing that smoothing of the data distributions already occurs via the NTK kernel, stemming from the fact that the discriminator is a neural network trained with gradient descent.

More specifically, it can be shown that if $k$ is an RBF kernel, the optimal discriminators in both case are the same.
This is based on the fact that the density of a convolution of an empirical measure $\hmu = \frac{1}{N} \sum_i \delta_{x_i}$, where $\delta_z$ is the Dirac distribution centered on $z$, and a Gaussian density $\Tilde{k}$ with associated RBF kernel $k$ can be written as $\Tilde{k} *\hmu = \frac{1}{N} \sum_i \app{k}{x_i, \cdot}$.

Let us consider the following regularized discriminator optimization problem in $\app{L^2}{\R}$ smoothed from $\app{L^2}{\Omega}$  with instance noise, i.e.\ convolving $\halpha$ and $\hbeta$ with $\Tilde{k}$.
\begin{equation}
    \sup_{f \in \app{L^2}{\R}}\braces*{\app{\Lo_{\halpha}^{\Tilde{k}}}{f} \triangleq \E_{x \sim \Tilde{k} * \halpha} \brackets*{\app{f}{x}} - \E_{y \sim \Tilde{k} *\hbeta} \brackets*{\app{f}{y}} - \lambda \norm{f}_{L^2}^2}
\end{equation}
The optimum $f^{\mathrm{IN}}$ can be found by taking the gradient:
\begin{align}
    \grad_f \parentheses*{\app{\Lo_{\halpha}^{\Tilde{k}}}{f^{\mathrm{IN}}}  - \lambda \norm{f^{\mathrm{IN}}}_{L^2}^2} = 0 && \Leftrightarrow && f^{\mathrm{IN}} = \frac{1}{2\lambda} \parentheses*{\Tilde{k} *\halpha - \Tilde{k} *\hbeta}.
\end{align}

If we now study the resolution of the optimization problem in $\gH^\hgamma_k$ as in \cref{sec:ipm_loss} with $f_0 = 0$, we find the following discriminator:
\begin{equation}
    f_t = t \parentheses*{\E_{x \sim \halpha} \brackets*{\app{k}{x, \cdot}} - \E_{y \sim \hbeta} \brackets*{\app{k}{y, \cdot}}} = t \parentheses*{\Tilde{k} *\halpha - \Tilde{k} *\hbeta}.
\end{equation}
Therefore, we have that $f^{\text{IN}} \propto f_t$, i.e.\ instance noise and regularization by neural networks obtain the same smoothed solution.

This analysis was done using the example of an RBF kernel, but it also holds for stationary kernels, i.e.\ $\app{k}{x, y} = \app{\Tilde{k}}{x - y}$, which can be used to convolve measures.
We remind that this is relevant, given that NTKs are stationary over spheres \citep{Jacot2018, Yang2019}, around where data can be concentrated in high dimensions.

\subsection{Positive Definite NTKs}
\label{app:characteristic_ntk}

Optimality results in the theory of NTKs usually rely on the assumption that the considered NTK $k$ is positive definite over the training dataset $\hgamma$ \citep{Jacot2018, Zhang2020}.
This property offers several theoretical advantages.

Indeed, this gives sufficient representational power to its RKHS to include the optimal solution over $\hgamma$.
Moreover, this positive definiteness property equates for finite datasets to the invertibility of the mapping
\begin{equation}
    \begin{aligned}
        \rightvert*{\Tk{k}{\hgamma}}_{\supp \hgamma}\colon \app{L^2}{\hgamma} & \to \app{L^2}{\hgamma} \\
        h & \mapsto \rightvert*{\app{\Tk{k}{\hgamma}}{h}}_{\supp \hgamma}
    \end{aligned}
    ,
\end{equation}
that can be seen as a multiplication by the invertible Gram matrix of $k$ over $\hgamma$.
From this, one can retrieve the expression of $f \in \gH_k^{\hgamma}$ from its restriction $\rightvert*{f}_{\supp \hgamma}$ to $\supp \hgamma$ in the following way:
\begin{equation}
    f = \app{\Tk{k}{\hgamma} \circ \rightvert*{\Tk{k}{\hgamma}}_{\supp \hgamma}^{-1}}{\rightvert*{f}_{\supp \hgamma}},
\end{equation}
as shown in \cref{lem:inv_tk}.
Finally, as shown by \citet{Jacot2018} and in \cref{app:optimality_concave}, this makes the discriminator loss function strictly increase during training.

One may wonder whether this assumption is reasonable for NTKs.
\citet{Jacot2018} proved that it indeed holds for NTKs of non-shallow MLPs with non-polynomial activations if data is supported on the unit sphere, supported by the fact that the NTK is stationary over the unit sphere.
Others, such as \citet{Fan2020}, have observed positive definiteness of the NTK subject to specific assumptions on the networks and data.
We are not aware of more general results of this kind.
However, one may conjecture that, at least for specific kinds of networks, NTKs are positive definite for any training data.

Indeed, besides global convergence results \citep{AllenZhu2019}, prior work indicates that MLPs are universal approximators \citep{Hornik1989, Leshno1993}.
This property can be linked in our context to universal kernels \citep{Steinwart2001}, which are guaranteed to be positive definite over any training data \citep{Sriperumbudur2011}.
Universality is linked to the density of the kernel RKHS in the space of continuous functions.
In the case of NTKs, previously cited approximation properties can be interpreted as signs of expressive RKHSs, and thus support the hypothesis of universal NTKs.
Furthermore, beyond positive definiteness, universal kernels are also characteristic \citep{Sriperumbudur2011}, which is interesting when they are used to compute MMDs, as we do in \cref{sec:ipm_loss}.
Note that for the standard case of ReLU MLPs, \citet{Ji2020} showed universal approximation results in the infinite-width regime, and works such as the one of \citet{Chen2021} observed that their RKHS is close to the one of the Laplace kernel, which is positive definite.

\paragraph{Bias-free ReLU NTKs are not characteristic.}
As already noted by \citet{Leshno1993}, the presence of bias is important when it comes to representational power of MLPs.
We can retrieve this observation in our framework.
In the case of a ReLU shallow network with one hidden layer and without bias, \citet{Bietti2019} determine its associated NTK as follows (up to a constant scaling the matrix multiplication in linear layers):
\begin{equation}
    \app{k}{x, y} = \euclideannorm*{x} \euclideannorm*{y} \app{\kappa}{\frac{\sprod{x}{y}}{\euclideannorm*{x} \euclideannorm*{y}}},
\end{equation}
with in particular $\app{k}{x, 0} = 0$ for all $x \in \Omega$; suppose that $0 \in \Omega$.
This expression of the kernel implies that $k$ is not positive definite for all datasets: take for example $x = 0$ and $y \in \Omega \setminus \braces*{0}$; then the Gram matrix of $k$ has a null row, hence $k$ is not strictly positive definite over $\braces*{x, y}$.
Another consequence is that $k$ is not characteristic.
Indeed, take probability distributions $\mu = \delta_{\frac{y}{2}}$ and $\nu = \frac{1}{2} \parentheses*{\delta_{x} + \delta_{y}}$ with $\delta_z$ being the Dirac distribution centered on $z \in \Omega$, and where $x = 0$ and $y \in \Omega \setminus \braces*{0}$.
Then:
\begin{equation}
    \E_{z \sim \mu} \app{k}{z, \cdot} = \app{k}{\frac{1}{2} y, \cdot} = \frac{1}{2} \app{k}{y, \cdot} = \frac{1}{2} \parentheses*{\app{k}{y, \cdot} + \app{k}{x, \cdot}} = \E_{z \sim \nu} \app{k}{z, \cdot},
\end{equation}
i.e., kernel embeddings of $\mu$ and $\nu \neq \mu$ are identical, making $k$ not characteristic by definition.

\subsection{Societal Impact}

As our work is mainly theoretical and does not deal with real-world data, it does not have direct broader negative impact on the society.
However, the practical perspectives that it opens constitute an object of interrogation.
Indeed, the developments of performant generative models can be the source of harmful manipulation \citep{Tolosana2020} and reproduction of existing biases in databases \citep{Jain2020}, especially as GANs are still misunderstood.
While such negative effects should be considered, attempts such as ours at explaining generative models might also lead to ways to mitigate potential harms by paving the way for more principled GAN models.

\section{\texorpdfstring{\GANTK}{GAN(TK)2}\ and Further Empirical Analyses}
\label{app:more_experiments}

We present in this section additional experimental results that complement and explain some of the results already exposed in \cref{sec:experiments}.
All these experiments were conducted using the proposed general toolkit \GANTK.

We focus in this article on particular experiments for the sake of clarity and as an illustration of the potential of analysis of our framework, but \GANTK\ is a general-purpose toolkit centered around the infinite-width of the discriminator and could be leveraged for an even more extensive empirical analysis.
We specifically focus on the IPM and LSGAN losses for the discriminator since they are the two losses for which we know the analytic behavior of the discriminator in the infinite-width limit, but other losses can be studied as well in \GANTK.
We leave a large-scale empirical study of our framework, which is out of the scope of this paper, for future work.

\begin{figure}
    \centering
    \includegraphics[width=\textwidth]{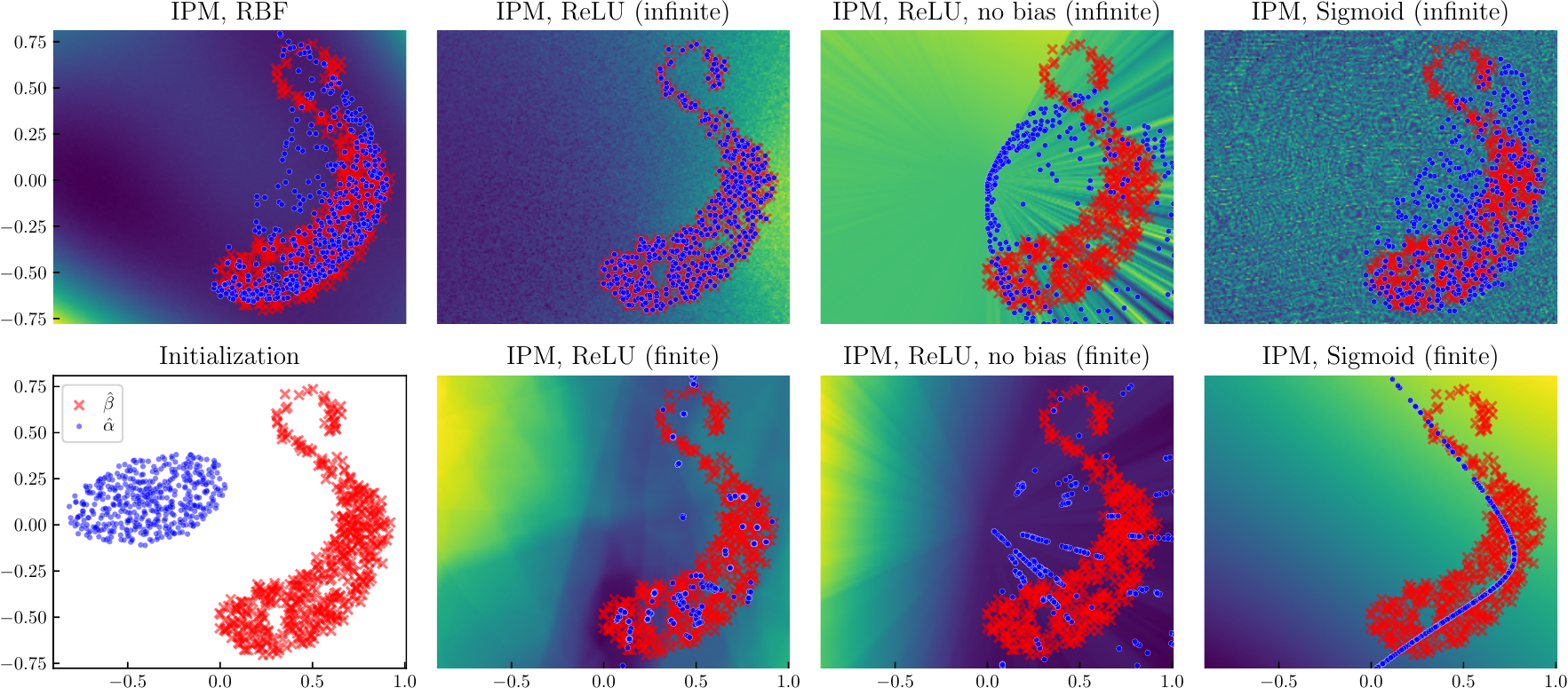}

    \vspace{\baselineskip}
    \caption{
        Generator (\textcolor{blue}{\ding{108}}) and target (\textcolor{red}{$\times$}) samples for different methods applied to the Density problem.
        In the background, $c_{f^\star}$.
    }
    \label{fig:density}
\end{figure}

\begin{figure}
    \centering
    \includegraphics[width=0.4\textwidth]{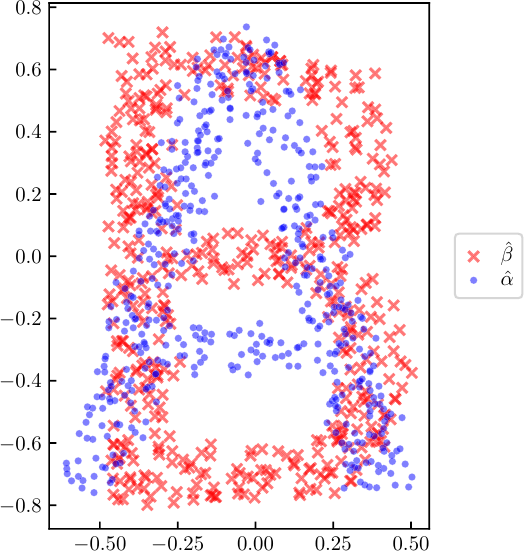}

    \vspace{\baselineskip}
    \caption{
        Initial generator (\textcolor{blue}{\ding{108}}) and target (\textcolor{red}{$\times$}) samples for the AB problem.
    }
    \label{fig:ab}
\end{figure}

\subsection{Two-Dimensional Datasets}

We provide in \cref{tab:res_8_gaussians_sigmoid} numerical results corresponding to the experiments described in \cref{sec:experiments} on the $8$ Gaussians dataset.

We present additional experimental results on two other two-dimensional problems, Density and AB; see, respectively, \cref{fig:density,fig:ab}.
Numerical results are detailed in \cref{tab:res_density,tab:res_ab}.
We globally retrieve the same conclusions that we developed in \cref{sec:experiments} on these datasets with more complex shapes.

\begin{table}
    \caption{
        \label{tab:res_8_gaussians_sigmoid}
        Sinkhorn divergence \citep[lower is better, similar to $\gW_2$]{Feydy2018} averaged over three runs between the final generated distribution and the target dataset for the $8$ Gaussians problem.
    }
    \sisetup{detect-weight, table-align-uncertainty=true, table-number-alignment=center, mode=text}
    \renewrobustcmd{\bfseries}{\fontseries{b}\selectfont}
    \renewrobustcmd{\boldmath}{}
    \centering
    \vspace{\baselineskip}
    \begin{tabular}{lllll}
        \toprule
        Loss & \multicolumn{1}{c}{RBF kernel} & \multicolumn{1}{c}{ReLU} & \multicolumn{1}{c}{ReLU (no bias)} & \multicolumn{1}{c}{Sigmoid} \\
        \midrule
        IPM (inf.) & \num{2.60 \pm 0.06 e-2} & \num{9.40 \pm 2.71 e-7} & \num{9.70 \pm 1.88 e-2} & \num{8.40 \pm 0.02 e-2} \\
        IPM & \multicolumn{1}{c}{\textemdash} & \num{1.21 \pm 0.14 e-1} & \num{1.20 \pm 0.60 e+0} & \num{7.40 \pm 1.30 e-1} \\
        \midrule
        LSGAN (inf.) & \num{4.21 \pm 0.1 e-1} & \num{7.56 \pm 0.45 e-2} & \num{1.27 \pm 0.01 e+1} & \num{7.35 \pm 0.11 e+0} \\
        LSGAN & \multicolumn{1}{c}{\textemdash} & \num{3.07 \pm 0.68 e+0} & \num{7.52 \pm 0.01 e+0} & \num{7.41 \pm 0.54 e+0} \\
        \bottomrule
    \end{tabular}
\end{table}

\begin{table}
    \caption{
        \label{tab:res_density}
        Sinkhorn divergence averaged over three runs between the final generated distribution and the target dataset for the Density problem.
    }
    \sisetup{detect-weight, table-align-uncertainty=true, table-number-alignment=center, mode=text}
    \renewrobustcmd{\bfseries}{\fontseries{b}\selectfont}
    \renewrobustcmd{\boldmath}{}
    \centering
    \vspace{\baselineskip}
    \begin{tabular}{lllll}
        \toprule
        Loss & \multicolumn{1}{c}{RBF kernel} & \multicolumn{1}{c}{ReLU} & \multicolumn{1}{c}{ReLU (no bias)} & \multicolumn{1}{c}{Sigmoid} \\
        \midrule
        IPM (inf.) & \num{2.37 \pm 0.32 e-3} & \num{3.34 \pm 0.49 e-9} & \num{7.34 \pm 0.34 e-2} & \num{6.25 \pm 0.31 e-3} \\
        IPM & \multicolumn{1}{c}{\textemdash} & \num{5.02 \pm 1.19 e-3} & \num{9.25 \pm 0.30 e-2} & \num{3.06 \pm 0.57 e-2} \\
        \midrule
        LSGAN (inf.) & \num{7.53 \pm 0.59 e-3} & \num{1.49 \pm 0.11 e-3} & \num{2.80 \pm 0.03 e-1} & \num{2.21 \pm 0.01 e-1} \\
        LSGAN & \multicolumn{1}{c}{\textemdash} & \num{1.53 \pm 1.08 e-2} & \num{1.64 \pm 0.19 e-1} & \num{5.88 \pm 0.80 e-2} \\
        \bottomrule
    \end{tabular}
\end{table}

\begin{table}
    \caption{
        \label{tab:res_ab}
        Sinkhorn divergence averaged over three runs between the final generated distribution and the target dataset for the AB problem.
    }
    \sisetup{detect-weight, table-align-uncertainty=true, table-number-alignment=center, mode=text}
    \renewrobustcmd{\bfseries}{\fontseries{b}\selectfont}
    \renewrobustcmd{\boldmath}{}
    \centering
    \vspace{\baselineskip}
    \begin{tabular}{lllll}
        \toprule
        Loss & \multicolumn{1}{c}{RBF kernel} & \multicolumn{1}{c}{ReLU} & \multicolumn{1}{c}{ReLU (no bias)} & \multicolumn{1}{c}{Sigmoid} \\
        \midrule
        IPM (inf.) & \num{4.65 \pm 0.82 e-3} & \num{2.64 \pm 2.13 e-9} & \num{6.11 \pm 0.19 e-3} & \num{5.69 \pm 0.38 e-3} \\
        IPM & \multicolumn{1}{c}{\textemdash} & \num{2.75 \pm 0.20 e-3} & \num{3.65 \pm 1.44 e-2} & \num{1.25 \pm 0.32 e-2} \\
        \midrule
        LSGAN (inf.) & \num{1.13 \pm 0.05 e-2} & \num{8.63 \pm 2.24 e-3} & \num{1.02 \pm 0.40 e-1} & \num{1.40 \pm 0.06 e-2} \\
        LSGAN & \multicolumn{1}{c}{\textemdash} & \num{1.32 \pm 1.30 e-1} & \num{2.57 \pm 0.73 e-2} & \num{8.78 \pm 2.23 e-2} \\
        \bottomrule
    \end{tabular}
\end{table}

\begin{figure}
    \centering
    \subfigure[RBF kernel: blurry digits on MNIST, prohibitively noisy images on CelebA.]{\begin{tabular}{@{}c@{}}\includegraphics[width=0.9\textwidth]{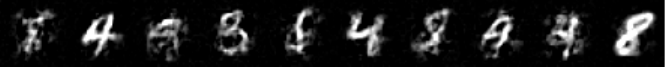}\\\includegraphics[width=0.9\textwidth]{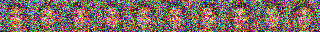}\end{tabular}}
    \subfigure[ReLU: sharp digits on MNIST, high-quality images on CelebA.]{\begin{tabular}{@{}c@{}}\includegraphics[width=0.9\textwidth]{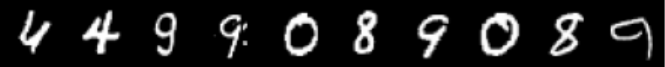}\\\includegraphics[width=0.9\textwidth]{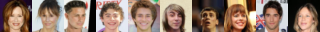}\end{tabular}}
    \subfigure[ReLU (no bias): mostly sharp digits with some artifacts and blurry images on MNIST, blurry and noisy images on CelebA.]{\begin{tabular}{@{}c@{}}\includegraphics[width=0.9\textwidth]{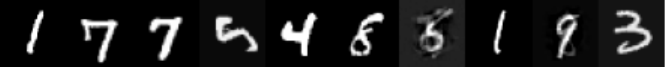}\\\includegraphics[width=0.9\textwidth]{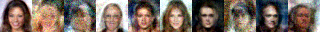}\end{tabular}}
    \caption[
        Convergence experiment on Moving MNIST and CelebA.
    ]{
        Uncurated samples from the results of the descent of a set of \num{1024} particles over a subset of \num{1024} elements of MNIST and CelebA, starting from a standard Gaussian.
        Training is done using the IPM loss in the infinite-width kernel setting.
    }
    \label{fig:mnist}
\end{figure}

\subsection{ReLU vs.\ Sigmoid Activations}

We additionally introduce a new baseline for the $8$ Gaussians, Density and AB problems, where we replace the ReLU activation in the discriminator by a sigmoid-like activation $\Tilde{\sigma}$, that we abbreviate to sigmoid in this experimental study for readability purposes.
We choose $\Tilde{\sigma}$ instead of the actual sigmoid $\sigma$ for computational reasons, since $\Tilde{\sigma}$, contrary to $\sigma$, allows for analytic computations of NTKs in the Neural Tangents library \citep{Novak2020}.
$\Tilde{\sigma}$ is defined in the latter using the error function $\erf$ scaled in order to minimize a squared loss with respect to $\sigma$ over $\brackets{-5, 5}$, with the following expression:
\begin{equation}
    \Tilde{\sigma}\colon x \mapsto \frac{1}{2} \parentheses*{\app{\erf}{\frac{x}{\num{2.4020563531719796}}} + 1}.
\end{equation}

Results are given in \cref{tab:res_8_gaussians_sigmoid,tab:res_density,tab:res_ab} and an illustration is available in \cref{fig:density}.
We observe that the sigmoid baseline is consistently outperformed by the RBF kernel and ReLU activation (with bias) for all regimes and losses.
This is in accordance with common experimental practice, where internal sigmoid activations are found less effective than ReLU because of the potential activation saturation that they can induce.

We provide a qualitative explanation to this underperformance of sigmoid via our framework in \cref{sec:gradient_field}.

\subsection{Qualitative MNIST and CelebA Experiment}

An experimental analysis of our framework on complex image datasets is out the scope of our study --~we leave it for future work.
Nonetheless, we present an experiment on MNIST \citep{LeCun1998} and CelebA \citep{Liu2015} images in a similar setting as the experiments on two-dimensional point clouds of the previous sections.
For each dataset, we make a point cloud $\halpha$, initialized to a standard Gaussian, move towards a subset of the MNIST dataset following the gradients of the IPM loss in the infinite-width regime.
Qualitative results are presented in \cref{fig:mnist}.

We notice, similarly to the two-dimensional experiments, that the ReLU network with bias outperforms its bias-free counterpart and a standard RBF kernel in terms of sample quality.
The difference between the RBF kernel and ReLU NTK is even more flagrant in this complex high-dimensional setting, as the RBF kernel is unable to produce accurate samples.

\subsection{Visualizing the Gradient Field Induced by the Discriminator}
\label{sec:gradient_field}

\begin{figure}
    \centering
    \includegraphics[width=0.87\textwidth]{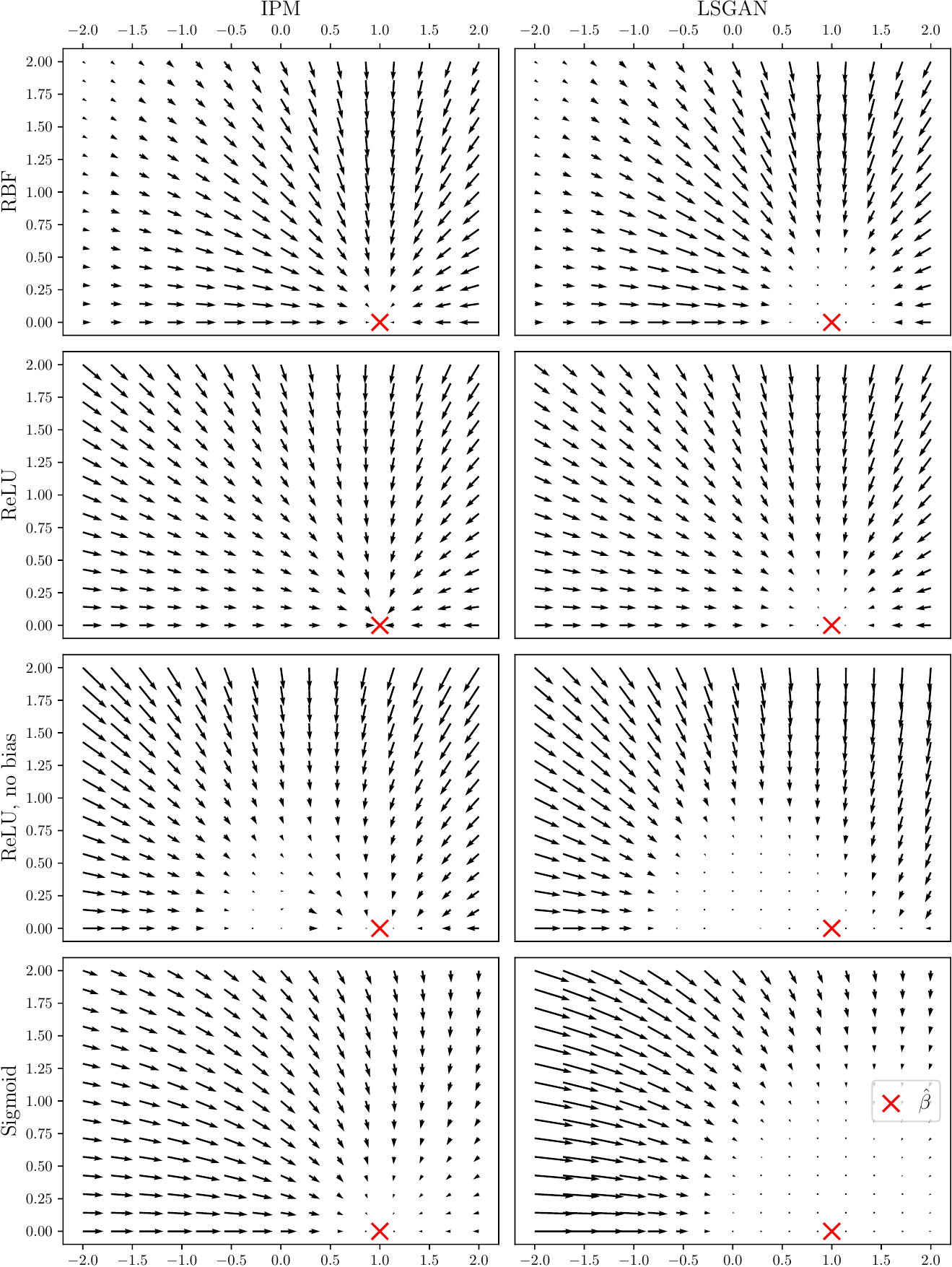}
    \caption{
        Gradient field $\grad \app{c_{f^\star_{\halpha_x}}}{x}$ received by a generated sample $x \in \R^2$ (i.e.\ $\halpha = \halpha_x = \delta_x$) initialized to $x_0$ with respect to its coordinates in $\spn \braces*{x_0, y}$ where $y$, marked by a \textcolor{red}{$\bs{\times}$}, is the target distribution (i.e.\ $\hbeta = \delta_y$), with $\euclideannorm*{y} = 1.$
        Arrows correspond to the movement of $x$ in $\spn \braces*{x_0, y}$ following $\grad \app{c_{f^\star_{\halpha_x}}}{x}$, for different losses and networks; scales are specific for each pair of loss and network.
        The ideal case is the convergence of $x$ along this gradient field towards the target $y$.
        Note that in the chosen orthonormal coordinate system, without loss of generality, $y$ has coordinate $\parentheses*{1, 0}$; moreover, the gradient field is symmetrical with respect to the horizontal axis.
    }
    \label{fig:grad_field}
\end{figure}

\begin{figure}
    \centering
    \includegraphics[width=0.87\textwidth]{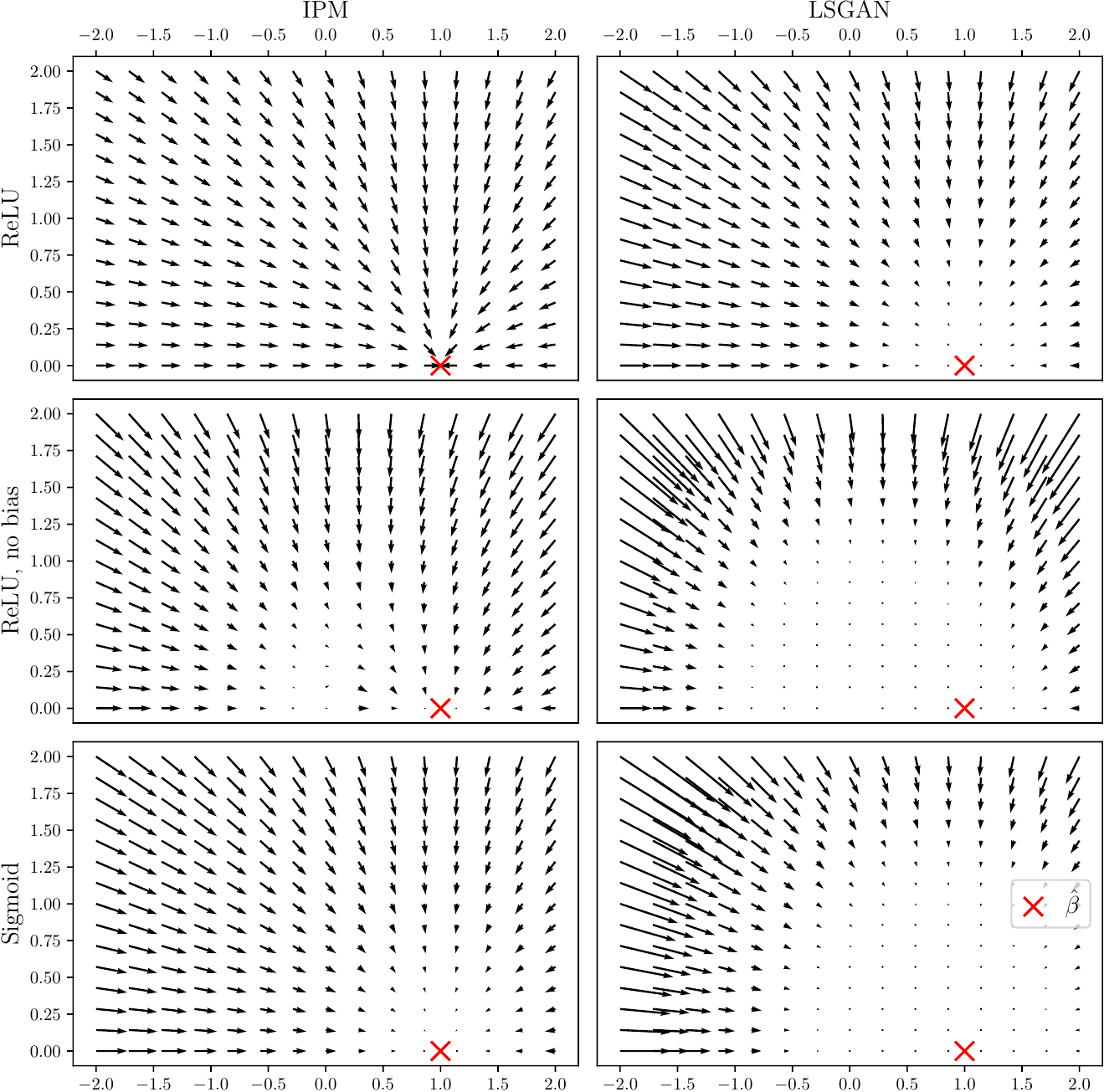}
    \caption{
        Same plot as \cref{fig:grad_field} but with underlying points $x, y \in \R^{512}$.
    }
    \label{fig:grad_field_512}
\end{figure}

We raise in \cref{sec:dynamics_generated_distr,sec:case_analysis} the open problem of studying the convergence of the generated distribution towards the target distribution with respect to the gradients of the discriminator.
We aim in this section at qualitatively studying these gradients in a simplified case that could shed some light on the more general setting and explain some  of our experimental results. These gradient fields can be plotted using the provided \GANTK\ toolkit.

\subsubsection{Setting}
Since we study gradients of the discriminator expressed in \cref{eq:discr_dynamics}, we assume that $f_0 = 0$ --~for instance, using the anti-symmetrical initialization \citet{Zhang2020}~-- in order to ignore residual gradients from the initialization.

By \cref{thm:discr_characterization}, for any loss and any training time, the discriminator can be expressed as ${f^\star_\halpha = \app{\T_{k, \hgamma}}{h_0}}$, for some $h_0 \in \app{L^2}{\hgamma}$.
Thus, there exists $h_1 \in \app{L^2}{\hgamma}$ such that:
\begin{equation}
    f^\star_\halpha = \sum_{x \in \supp \hgamma} \app{h_1}{x} \app{k}{x, \cdot}.
\end{equation}
Consequently,
\begin{align}
    \grad f^\star_\halpha = \sum_{x \in \supp \hgamma} \app{h_1}{x} \grad \app{k}{x, \cdot},&&
    \grad c_{f^\star_\halpha} = \sum_{x \in \supp \hgamma} \app{h_1}{x} \grad \app{k}{x, \cdot} \app{c'}{\app{f^\star_\halpha}{\cdot}}.
\end{align}

\paragraph{Dirac-GAN setting.}
The latter linear combination of gradients indicates that, by examining gradients of $c_{f^\star_\halpha}$ for pairs of $\parentheses*{x, y} \in \supp \halpha \times \supp \hbeta$, one could already develop potentially valid intuitions that can hold even when multiple points are considered.
This is especially the case for the IPM loss, as $h_0, h_1$ have a simple form: $\app{h_1}{x} = 1$ if $x \in \supp \halpha$ and $\app{h_1}{y} = -1$ if $y \in \supp \halpha$ (assuming points from $\halpha$ and $\hbeta$ are uniformly weighted); moreover, note that $\app{c'}{\app{f^\star_\halpha}{\cdot}} = 1$.
Thus, we study here $\grad c_{f^\star_\halpha}$ when $\halpha$ and $\hbeta$ are only comprised of one point, i.e.\ the setting of Dirac GAN \citep{Mescheder2018}, with $\halpha = \delta_{x} \triangleq \halpha_{x}$ and $\hbeta = \delta_{y}$.

\paragraph{Visualizing high-dimensional inputs.}
Unfortunately, the gradient field is difficult to visualize when the samples live in a high-dimensional space.
Interestingly, the NTK $\app{k}{x, y}$ for any architecture starting with a fully connected layer only depends on $\euclideannorm*{x}$, $\euclideannorm*{y}$ and $\sprod{x}{y}$ \citep{Yang2019}, and therefore all the information of $\grad c_{f^\star_\halpha}$ is contained in $\spn\braces*{x, y}$.
From this, we show in \cref{fig:grad_field,fig:grad_field_512} the gradient field $\grad c_{f^\star_\halpha}$ in the two-dimensional space $\spn\braces*{x, y}$ for different architectures and losses in the infinite-width regime described in \cref{sec:experiments} and in this section.
\cref{fig:grad_field} corresponds to two-dimensional $x, y \in \R^2$, and \cref{fig:grad_field_512} to high-dimensional $x, y \in \R^{512}$. Note that in the plots, the gradient field is symmetric w.r.t. the horizontal axis and for this reason we have restricted the illustration to the case where the second coordinate is positive.

\paragraph{Convergence of the gradient flow.}
In the last paragraph, we have seen that the gradient field in the Dirac-GAN setting lives in the two-dimensional $\spn\braces*{x, y}$, independently of the dimensionality of $x, y$. This means that when training the generated distribution, as in \cref{sec:experiments}, the position of the particle $x$ always remains in this two-dimensional space, and hence (non-)convergence in this setting can be easily checked by studying this gradient field. This is what we do in the following, for different architectures and losses.

\subsubsection{Qualitative Analysis of the Gradient Field}

\paragraph{$x$ is far from $y$.} When generated outputs are far away from the target, it is essential that their gradient has a large enough magnitude in order to pull these points towards the target. The behavior of the gradients for distant points can be observed in the plots. For ReLU networks, for both losses, the gradients for distant points seem to be well behaved and large enough. Note that in the IPM case, the magnitude of the gradients is even larger when $x$ is further away from $y$. This is not the case for the RBF kernel when the variance parameter is too small, as the magnitude of the gradient becomes prohibitively small. We highlight that we selected a large variance parameter in order to avoid such a behavior, but diminishing magnitudes can still be observed. Note that choosing an overly large variance may also have a negative impact on the points that are closer to the target.

\paragraph{$x$ is close to $y$.} A particularity of the NTK of ReLU discriminators with bias that arises from this study is that the gradients vanish more slowly when the generated $x$ tends to the target $y$, compared to NTKs of ReLU without bias and sigmoid networks, and to the RBF kernel.
We hypothesize that this is also another distinguishing feature that helps the generated distribution to converge more easily to the target distribution, especially when they are not far apart.
On the contrary, this gradient vanishes more rapidly for NTKs of ReLU without bias and sigmoid networks, compared to the RBF kernel.
This can explain the worse performance of such NTKs compared to the RBF kernel in our experiments (see \cref{tab:res_density,tab:res_ab,tab:res_8_gaussians_sigmoid}).
Note that this phenomenon is even more pronounced in high-dimensional spaces such as in \cref{fig:grad_field_512}.

\paragraph{$x$ is close to $0$.} Finally, we highlight gradient vanishing and instabilities around the origin for ReLU networks without bias.
This is related to its differentiability issues at the origin exposed in \cref{sec:differentiability}, and to its lack of representational power discussed in \cref{app:characteristic_ntk}.
This can also be retrieved on larger scale experiments of \cref{fig:8_gaussians,fig:density} where the origin is the source of instabilities in the descent.

\paragraph{Sigmoid network.}
It is also possible to evaluate the properties of the discriminator's gradient for architectures that are not used in practice, such as networks with the sigmoid activation. \cref{fig:8_gaussians,fig:density} provide a clear explanation: as stated above, the magnitudes of the gradients become too small when $x \to y$, and heavily depend on the direction from which $x$ approaches $y$. Ideally, the induced gradient flow should be insensitive to the direction in order for the convergence to be reliable and robust, which seems to be the case for ReLU networks.

\section{Experimental Details}
\label{app:experiments_details}

We detail in this section the experimental parameters needed to reproduce our experiments.

\subsection{\texorpdfstring{\GANTK}{GAN(TK)2}\ Specifications and Computing Resources}

\GANTK\ is implemented in Python (tested on versions 3.8.1 and 3.9.2) and based on JAX \citep{Bradbury2018} for tensor computations and Neural Tangents \citep{Novak2020} for NTKs.
We refer to the code released at \url{https://github.com/emited/gantk2} for detailed specifications and instructions.

All experiments presented in this paper were run on Nvidia GPUs (Nvidia Titan RTX --~24GB of VRAM~-- with CUDA 11.2 as well as Nvidia Titan V --~12GB~-- and Nvidia GeForce RTX 2080 Ti --~11 GB~-- with CUDA 10.2).
All two-dimensional experiments require only a few minutes of computations on a single GPU.
Experiments on MNIST and CelebA were run using simultaneously four GPUs for parallel computations, for at most a couple of hours.

\subsection{Datasets}

\paragraph{$8$ Gaussians.}

The target distribution is composed of $8$ Gaussians with their means being evenly distributed on the centered sphere of radius $5$, and each with a standard deviation of $0.5$.
The input fake distribution is drawn at initialization from a standard normal distribution $\app{\gN}{0, 1}$.
We sample in our experiments $500$ points from each distribution at each run to build $\halpha$ and $\hbeta$.

\paragraph{AB and Density.}

These two datasets are taken from the Geomloss library examples \citep{Feydy2018}\footnote{They can be downloaded at \url{https://github.com/jeanfeydy/geomloss/tree/main/geomloss/examples/optimal_transport/data}: AB corresponds to files \texttt{A.png} (source) and \texttt{B.png} (target), and Density corresponds to files \texttt{density_a.png} (source) and \texttt{density_a.png} (target).} and are distributed under the MIT license.
To sample a point from a distribution based on these greyscale images files, we sample a pixel (considered to lie in $\brackets{-1, 1}^2$) in the image from a distribution where each pixel probability is proportional to the darkness of this pixel, and then apply a Gaussian noise centered at the chosen pixel coordinates with a standard deviation equal to the inverse of the image size.
We sample in our experiments $500$ points from each distribution at each run to build $\halpha$ and $\hbeta$.

\paragraph{MNIST and CelebA.}

We preprocess each MNIST image \citep{LeCun1998} by extending it from $28 \times 28$ frames to $32 \times 32$ frames (by padding it with black pixels).
CelebA images \citep{Liu2015} are downsampled from a size of $178 \times 218$ to $32 \times 39$ and then center-cropped to $32 \times 32$.

For both datasets, we normalize pixels in the $\brackets*{-1, 1}$ range.
For our experiments, we consider a subset of \num{1024} elements of each dataset, which are randomly sampled for each run.

\subsection{Parameters}

\paragraph{Sinkhorn divergence.}

The Sinkhorn divergence is computed using the Geomloss library \citep{Feydy2018}, with a blur parameter of \num{0.001} and a scaling of $0.95$, making it close to the Wasserstein $\gW_2$ distance.

\paragraph{RBF kernel.}

The RBF kernel used in our experiments is the following:
\begin{equation}
    \app{k}{x, y} = \erm^{\frac{\euclideannorm*{x - y}^2}{2 n}},
\end{equation}
where $n$ is the dimension of $x$ and $y$, i.e.\ the dimension of the data.

\paragraph{Architecture.}

We used for the neural networks of our experiments the standard NTK paramaterization \citep{Jacot2018}, with a scaling factor of $1$ for matrix multiplications and, when bias in enabled, a multiplicative constant of $1$ for biases (except for sigmoid where this bias factor is lowered to $0.2$ to avoid saturating the sigmoid, and for CelebA where it is equal to $4$).
All considered networks are composed of $3$ hidden layers and end with a linear layer.
In the finite-width case, the width of these hidden layers is $128$.
We additionally use antisymmetrical initialization \citep{Zhang2020}, except for the finite-width LSGAN loss.

\paragraph{Discriminator optimization.}

Discriminators in the finite-width regime are trained using full-batch gradient descent without momentum, with one step per update to the distributions and the following learning rates $\varepsilon$:
\begin{itemize}
    \item for the IPM loss: $\varepsilon = 0.01$;
    \item for the IPM loss with reset and LSGAN: $\varepsilon = 0.1$.
\end{itemize}

In the infinite-width limit, we use the analytic expression derived in \cref{sec:case_analysis} with training time $\tau = 1$ (except for MNIST and CelebA where $\tau = 1000$) and $f_0 = 0$ (through the initialization of \citet{Zhang2020}) to avoid the computational cost of accumulating discriminators' analytic expressions across the generator's optimization steps.

\paragraph{Point cloud descent.}

The multiplicative constant $\eta$ over the gradient applied to each datapoint for two-dimensional problems is chosen as follows:
\begin{itemize}
    \item for the IPM loss in the infinite-width regime: $\eta = 1000$;
    \item for the IPM loss in the finite-width regime: $\eta = 100$;
    \item for the IPM loss in the finite-width regime and discriminator reset: $\eta = 1000$;
    \item for LSGAN in the infinite-width regime: $\eta = 1000$;
    \item for LSGAN in the finite-width regime: $\eta = 1$.
\end{itemize}
We multiply $\eta$ by \num{1000} when using sigmoid activations, because of the low magnitude of the gradients it provides.
We choose for MNIST $\eta = 100$.

Training is performed for the following number of iterations:
\begin{itemize}
    \item for $8$ Gaussians: \num{20000};
    \item for Density and AB: \num{10000};
    \item for MNIST: \num{50000}.
\end{itemize}

\end{document}